\newenvironment{proof}{\paragraph{\it Proof.}}{\hfill$\square$}
\newcommand{\algcomment}[1]{\textcolor{blue!70!black}{\footnotesize{//\hspace{2pt}#1}}}
\theoremstyle{plain}
\newtheorem{theorem}{Theorem}[section]
\newtheorem{proposition}[theorem]{Proposition}
\newtheorem{lemma}[theorem]{Lemma}
\newtheorem{corollary}[theorem]{Corollary}
\theoremstyle{definition}
\newtheorem{definition}[theorem]{Definition}
\newtheorem{assumption}[theorem]{Assumption}
\theoremstyle{remark}
\newtheorem{remark}[theorem]{Remark}
\newcommand{\tssum}{\textstyle \sum}
\DeclarePairedDelimiter\norm{\lVert}{\rVert}
\newcommand{\poly}{\mathrm{poly}}
\newcommand{\mA}{\mathcal{A}}	
\newcommand{\mB}{\mathcal{B}}	
\newcommand{\mR}{\mathcal{R}}	
\newcommand{\mZ}{\mathcal{Z}}	
\newcommand{\indic}[1]{\mathds{1}\left\{ #1 \right\}} 
\newcommand{\PP}{\mathds{P}}    
\newcommand{\Eps}{\mathcal{E}}
\newcommand{\Exs}{\mathbb{E}}
\newif\ifdraft
\title{\bf{\LARGE{Tractable Optimality in Episodic Latent MABs}}}
\author[1]{Jeongyeol Kwon}
\author[2]{Yonathan Efroni} 
\author[3]{Constantine Caramanis}
\author[4]{Shie Mannor}
\affil[1]{Wisconsin Institute for Discovery, UW-Madison}
\affil[2]{Meta, New York}
\affil[3]{Department of Electrical and Computer Engineering, University of Texas at Austin}
\affil[4]{Department of Electrical Engineering, Technion / NVIDIA}
\begin{document}
\maketitle

\begin{abstract}
We consider a multi-armed bandit problem with $M$ latent contexts, where an agent interacts with the environment for an episode of $H$ time steps. Depending on the length of the episode, the learner may not be able to estimate accurately the latent context. The resulting partial observation of the environment makes the learning task significantly more challenging. 
Without any additional structural assumptions, existing techniques to tackle partially observed settings imply the decision maker can learn a near-optimal policy with $O(A)^H$ episodes, but do not promise more. 
In this work, we show that learning with {\em polynomial} samples in $A$ is possible. We achieve this by using techniques from experiment design. Then, through a method-of-moments approach, we design a procedure that provably learns a near-optimal policy with $O(\poly(A) + \poly(M,H)^{\min(M,H)})$ interactions. In practice, we show that we can formulate the moment-matching via maximum likelihood estimation. In our experiments, this significantly outperforms the worst-case guarantees, as well as existing practical methods.
\end{abstract}

\section{Introduction}
In Multi-Armed Bandits (MABs), an agent learns to act optimally by interacting with an unknown environment. 
In many applications, interaction sessions are short, relative to the complexity of the overall task. As a motivating example, we consider a large content website that interacts with users arriving to the site, by making sequential recommendations. In each such {\em episode}, the system has only a few chances to recommend items before the user leaves the website -- an interaction time typically much smaller than the number of actions or the types of users. In such settings, agents often have access to short horizon episodes, and have to learn how to process observations from different episodes to learn the best adaptation strategy.

Motivated by such examples, we consider the problem of learning a near-optimal policy in Latent Multi-Armed Bandits (LMABs). In each episode with time-horizon $H$, the agent interacts with one of $M$ possible MAB environments (e.g., type of user) randomly chosen by nature. Without knowing the identity of the environment (we call this the {\it latent context}), an agent aims to maximize the expected cumulative reward per episode (see Definition \ref{definition:lmab} for a formal description). The LMAB framework is different from the setting considered in \cite{maillard2014latent} where multiple episodes proceed in parallel without limiting the horizon of an episode $H$. For long horizons $H$, we show that it is possible to find a near-optimal policy and to determine near-optimal actions for each episode, as if we knew the hidden context. If $H \ll A$ where $A$ is the total number of actions, however, this is no longer possible. Instead, we aim to learn the best {\it history-dependent} policy for $H$ time steps.

\subsection{Our Results and Contributions}
\begin{figure}[t]
    \centering
    \includegraphics[width=0.6\textwidth]{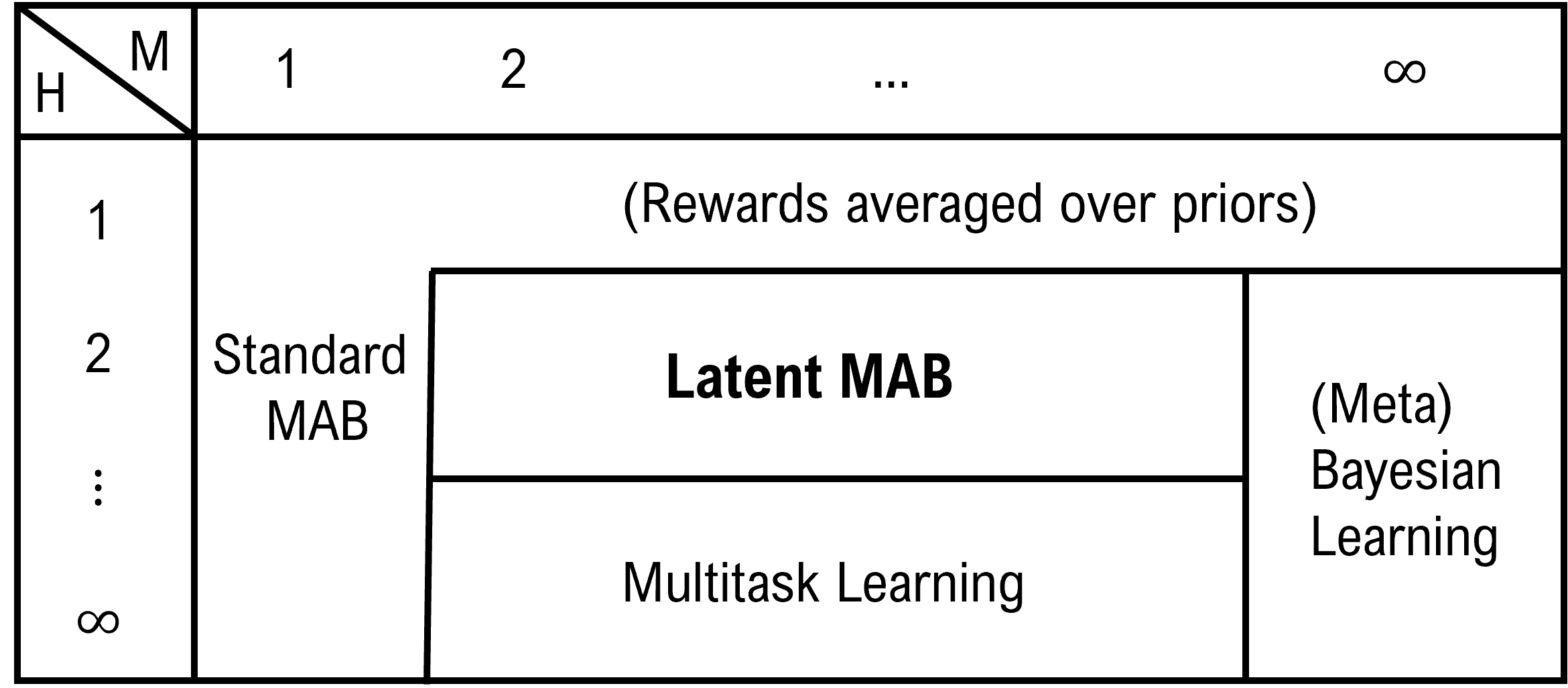}
    \caption{Nature of the problem depends on the length of episodes ($H$) and number of different contexts ($M$)}
    \label{fig:related_work_table}
\end{figure}

In this work, we study the problem of learning a near optimal policy of an LMABs for both long and short horizon $H$. In the long horizon case, we show the problem's latent parameters are learnable. However, for short horizon, the latent parameters may not be identifiable and a different perspective is required.

A naive approach to learn a near optimal policy of an LMAB starts by casting the problem as a Markov Decision Process (MDP). Assume that the reward values are discrete and defined over a finite alphabet of cardinality~$Z$. By defining the state space  as the set of all sequences of history observations an LMAB can be formulated as an MDP with $O((AZ)^H)$ states. Appealing to standard reinforcement learning algorithms ({\it e.g.,} \cite{jaksch2010near, azar2017minimax}) we can learn an $\epsilon$-optimal policy with $O(AZ)^{H} / \epsilon^2$ samples.

A natural way to improve upon the naive approach is through using the unique structure of the LMAB setting. In this work we focus on the following question:
{\it can we learn a near-optimal policy with fewer than $O(AZ)^{H} / \epsilon^2$ samples as the naive approach?} 

Our work answers the above question affirmatively. Specifically, our main contributions are as follows. We show that the dependence of our algorithm is $\poly(A) + \poly(H,M)^{\min(H,M)}$, and thus tractable when either $H$ or $M$ is small, even for very large $A$. That is, we are particularly focused on the setting with a few contexts or relatively short episodes (in comparison to $A$), {\it i.e.,} $M = O(1)$ or $H = O(1)$, where a natural objective is to learn a near-optimal history-dependent policy for $H$ time steps (see also Figure \ref{fig:related_work_table} in Appendix \ref{appendix:additional_related_work}).

\subsection{Related Work} 
\label{subsec:relatedwork}

\paragraph{Comparison of Regimes} Nature of learning a near-optimal policy in LMABs changes depending on the length of episodes $H$ and the number of contexts $M$ (Figure \ref{fig:related_work_table}). For instance, when $H = 1$ or $M = 1$, the problem of learning in LMAB is essentially equivalent to the classical Multi-Armed Bandits problem that has been extensively studied in literature ({\it e.g.,} see \cite{lattimore2020bandit} and references therein). In another well-studied literature of Bayesian learning, each episode is assumed to have a structured prior over contexts ($M \rightarrow \infty$, {\it e.g.,} all expected rewards of arms are independently sampled from beta conjugate priors). Depending on the length of time-horizon $H$, the problem can be solved with Thompson Sampling (TS) \cite{thompson1933likelihood} $H \rightarrow \infty$, or the problem is reduced to learning the prior \cite{kveton2021meta}. When the time-horizon is sufficiently long but finite {\it e.g.,} if $H = \Omega(A / \gamma^2)$ for some separation parameters between a finite number of contexts $M < \infty$, then it is possible cluster observations from every episode. This setting has been studied in the literature of multitask RL which we describe below. In this work, we are particularly focused on the setting with a few contexts and relatively short episodes (in comparison to $A$), {\it i.e.,} $M = O(1)$ and $H = O(1)$, where the most natural objective is to learn a near-optimal history-dependent policy.

\paragraph{Learning priors in multi-armed bandit problems} Several recent works have considered the Bayesian learning framework with short time-horizon \cite{liu2016prior, simchowitz2021bayesian, kveton2021meta}. The focus in this line of work is on the design  of algorithms that learn the prior, while acting with a fixed Bayesian algorithm (e.g., Thompson sampling). While Bayesian learning with short time-horizon may be viewed as a special case of LMABs, the baseline policy we compare ourselves to is the optimal $H$-step policy, which is a  harder baseline than considering a fixed Bayesian algorithm.

\paragraph{Latent MDPs} Some prior work considers the framework of Latent MDPs (LMDPs), which is the MDP generalization of LMAB \cite{hallak2015contextual, steimle2018multi, kwon2021rl, kwon2021reinforcement}. In particular, \cite{kwon2021rl} has shown the information-theoretic limit of learning in LMDPs with no assumptions on MDPs, {\it i.e.,} an exponential number of sample episodes $\Omega(A^M)$ is necessary to learn a near-optimal policy in LMDPs. In contrast, we show that in LMABs, the required number of episodes can be polynomial in $A$. This does not contradict the result in \cite{kwon2021rl}, since their lower bound construction comes from the challenge in state-exploration with latent contexts. In contrast, there is no state-exploration issue for bandits, which enables the polynomial sample complexity in $A$. Furthermore, our upper bound does not require any assumptions or additional information such as good initialization or separations as in \cite{kwon2021rl}. To the best of our knowledge, no existing results are known for learning a near optimal policy of LMAB instances for $M \ge 3$ without further assumptions. 

\paragraph{Multitask RL with explicit clustering} We can cluster observations from each episode if we are given sufficiently long time horizons $H = \tilde{\Omega}(A / \gamma^2)$ in each episode \cite{brunskill2013sample, gentile2014online, hallak2015contextual}. Here, $\gamma > 0$ is the amount of `separation' between contexts such that for all $m \neq m'$, $\max_{a \in \mA} \|\mu_m(a, \cdot) - \mu_{m'}(a, \cdot)\|_{2} \ge \gamma$ where $\mu_m$ is a mean-reward vector of actions in the $m^{th}$ context. We focus on significantly more general cases where there is no obvious way of clustering observations, {\it e.g.,} when $H \ll A$ or $\mu_m$ can be arbitrarily close to some other $\mu_{m'}$ for $m \neq m'$. If we are given a similar separation condition, we also show that a polynomial sample complexity is achievable as long as $H = \tilde{O}(M^2/ \gamma^2)$  (see also section~\ref{subsec:poly_upper_bound_sep}). Note that this could still be in $H \ll A$ regime with large number of actions $A$. 

\paragraph{Learning in POMDPs with full-rank observations} One popular learning approach in partially observable systems is the tensor-decomposition method, which extracts the realization of model parameters from third-order tensors \cite{anandkumar2014tensor, azizzadenesheli2016reinforcement, gopalan2016low}. However, the recovery of model parameters require specific geometric assumptions on the full-rankness of a certain test-observation matrix. Furthermore, most prior work requires a uniform reachability assumption, {\it i.e.,} all latent spaces should be reached with non-negligible probabilities by any exploration policy for the parameter recovery. Recent results in \cite{jin2020sample, liu2022partially} have shown that the uniform reachability assumption can be dropped with the optimism principle. However, they still require the full-rankness of a test-observation matrix to keep the volume of a confidence set explicitly bounded. Since LMAB instances do not necessarily satisfy the full-rankness assumption, their results do not imply an upper bound for learning LMABs.

\paragraph{Reward-Mixing MDPs} Another closely related work is to learn an LMDP (MDP extension of LMAB) with common state transition probabilities, and thus only the reward function changes depending on latent contexts \cite{kwon2021reinforcement}. The authors in \cite{kwon2021reinforcement} have developed an efficient moment-matching based algorithm to learn a near-optimal policy without any assumptions on separations or system dynamics. However, it can only handle the $M = 2$ case with balanced mixing weights $w_1 = w_2 = 1/2$. It is currently not obvious how to extend their result to $M \ge 3$ cases without incurring $O(A^M)$ sample complexity in LMABs.

\paragraph{Regime switching bandits} LMAB may be also seen as a special type of adversarial or non-stationary bandits ({\it e.g.,} \cite{auer2002nonstochastic, audibert2009minimax, garivier2011upper, slivkins2008adapting}) with time steps being specified for when the underlying reward distributions ({\it i.e.,} latent contexts) may change. The standard objective in non-stationary bandits is to find the best stationary policy in hindsight with unlimited possible contexts. Recently, \cite{zhou2021regime} considered a non-stationary bandit with a finite number of contexts $M = O(1)$ and the objective of finding the optimal {\it history-dependent} policy. Their setting and goal subsume our goal of learning the optimal policy in LMABs; however, results in \cite{zhou2021regime} require linear independence between reward probability vectors, and thus their setting essentially falls into the category of tractable POMDPs with full-rank observations. 

\paragraph{Miscellaneous} There are other modeling approaches more suited for personalized recommendation systems where multiple episodes proceed in parallel without limits on the time-horizon \cite{maillard2014latent, gentile2014online, chawla2020gossiping, hu2021near, kwon2022coordinated}. In such problems, the goal is to quickly adapt policies for individual episodes assuming certain similarities between tasks. In contrast, in episodic LMAB settings, we assume that every episode starts in a sequential order, and our goal is to learn an optimal history-dependent policy that can maximize rewards for a single episode with limited time horizon.

\section{Preliminaries}
\subsection{Problem Setup}
We define the problem of {\bf episodic} latent multi-armed bandits with time-horizon $H \ge 2$ as follows: 
\begin{definition}[Latent Multi-Armed Bandit (LMAB)]
    \label{definition:lmab}
    LMAB is a tuple $\mB = ( \mathcal{A}, \{w_m\}_{m=1}^M, \{\mu_m\}_{m=1}^M )$, where $\mathcal{A}$ is a set of actions, $\{w_m\}_{m=1}^M$ are the mixing weights such that a latent context $m$ is randomly chosen with probability $w_m$, and $\mu_m$ is the \emph{model parameter} that describes a reward distribution, {\it i.e.,} $\PP_{\mu_m} (r \mid a) := \PP(r \mid m, a)$, according to an action $a \in \mA$ conditioning on a latent context $m$. (NB: each $\mu_m$ represents a probability model, not necessarily a mean reward vector). 
\end{definition}
We do not assume a priori knowledge of mixing weights. The bulk of this paper considers discrete reward realizations, when the support of the reward distribution is finite and bounded. In Appendix \ref{appendix:Gaussian_Rewards}, we also show that our results can be adapted to Gaussian reward distributions.
\begin{assumption}[Discrete Rewards]
\label{assumption:reward_dist}
    The reward distribution has finite and bounded support. The reward attains a value in the set $\mathcal{Z}$. We assume that for all $z\in \mathcal{Z}$ we have $|z| \le 1$. We denote the cardinality of $ \mathcal{Z}$ as $Z$ and assume that $Z = O(1)$. 
\end{assumption}
As an example, Bernoulli distribution satisfies Assumption \ref{assumption:reward_dist} with $\mZ=\{0,1\}$ and $Z = 2$.  We denote the probability of observing a reward value $z$ by playing an action $a$ as $\mu_m(a,z) \coloneqq \PP(r = z \mid m, a)$ in a context $m$. 
We often use $\mu_m$ as a reward-probability vector in $\mathbb{R}^{AZ}$ indexed by a tuple $(a,z) \in \mA \times \mZ$.

At the beginning of every episode, a latent context $m \in [M]$ is sampled from a mixing distribution $\{w_m\}_{m=1}^M$ and fixed for $H$ time steps, however we cannot observe $m$ directly. We consider a policy class $\Pi$ which contains all history-dependent policies $\pi: (\mA \times \mZ)^* \rightarrow \mA$. 
Our goal is to find a near optimal policy $\pi \in \Pi$ that maximizes the expected cumulative reward $V$ for each episode 
$V^\star = \max_{\pi \in \Pi} V(\pi) := \Exs^\pi \left[ \sum_{t=1}^H r_t  \right],$
where the expectation is taken over latent contexts and rewards generated by an LMAB instance, and actions following a policy $\pi$. 

\begin{definition}[Approximate Planning Oracle]
    \label{definition:planning_oracle}
    A planning oracle receives an LMAB instance $\mB$ and returns an $\epsilon$-approximate policy $\pi$ such that $V^\star - V(\pi) \le \epsilon$.
\end{definition}
Concretely, the point-based value-iteration (PBVI) algorithm \cite{pineau2006anytime} is an $\epsilon$-approximate planning algorithm which runs in time $O(HMAZ (H^2/\epsilon)^{O(M)})$. 

\subsection{Experimental Design}
\label{subsec:expdesign}

We now give a high-level overview on experimental design techniques used in this work. Suppose we are given a matrix $\Phi \in \mathbb{R}^{d \times k}$ where $d \gg k$. Define a distribution over the rows of $\Phi$, $\rho\in \Delta^d$ an element in the $d$-dimensional simplex. We want to select a small subset of rows of $\Phi$ which minimizes $g(\rho)$ defined below:
\begin{align}
    G(\rho) = \sum_{i \in [d]} \rho(i) \Phi_{i,:} \Phi_{i,:}^\top, \qquad g(\rho) = \max_{i \in [d]} \|\Phi_{i,:}\|_{G(\rho)^{-1}}^2, \label{eq:g_optimal_design}
\end{align}
where $\Phi_{i,:} \in \mathbb{R}^k$ be the $i^{\mathrm{th}}$ row of $\Phi$ and $G(\rho) \in \mathbb{R}^{k\times k}$. To achieve this task we use results from the experimental design literature. The following theorem shows the existence of $\rho$ that minimizes $g(\rho)$ with \emph{a small support} over the row indices of $\Phi$: 
\begin{theorem}[Theorem 4.4 in \cite{lattimore2020learning}]
    \label{theorem:small_core_set}
    There exists a probability distribution $\rho$ such that $g(\rho) \le 2k$ and $|supp(\rho)| \le 4k \log\log k + 16$. Furthermore, we can compute such $\rho$ in time $\tilde{O}(dk^2)$. 
\end{theorem}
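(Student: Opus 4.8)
The plan is to treat ``$g(\rho)\le 2k$'' as a constant-factor relaxation of the exact $G$-optimal (equivalently $D$-optimal) design, and then to control the support size by constructing the design \emph{incrementally} rather than reading it off from the exact optimum. First I would invoke the Kiefer--Wolfowitz equivalence theorem: over $\rho\in\Delta^d$, maximizing $f(\rho):=\log\det G(\rho)$ and minimizing $g(\rho)$ have the same maximizers, and at any such $\rho^\star$ one has $g(\rho^\star)=k$. The lower bound $g(\rho)\ge k$ that makes this meaningful is immediate from the trace identity $\sum_i \rho(i)\,\|\Phi_{i,:}\|^2_{G(\rho)^{-1}} = \mathrm{tr}\!\big(G(\rho)^{-1}G(\rho)\big)=k$, so the maximum over $i$ is at least the $\rho$-average $k$. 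Hence any $\rho$ with $g(\rho)\le 2k$ is within a factor $2$ of $G$-optimal, and it suffices to produce a \emph{low-support} near-optimizer of $f$.

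For the support bound I would build $\rho$ by a conditional-gradient (Frank--Wolfe) scheme on the concave objective $f$. At a current design $\rho$ the gradient coordinate is exactly the leverage score $\nabla f(\rho)_i=\|\Phi_{i,:}\|^2_{G(\rho)^{-1}}$, so the Frank--Wolfe vertex is the index achieving $g(\rho)$; I would add this single index with the weight given by exact line search, whose improvement is available in closed form because the update $G(\rho')=(1-\gamma)G(\rho)+\gamma\,\Phi_{i,:}\Phi_{i,:}^\top$ is rank one, giving $f(\rho')-f(\rho)=k\log(1-\gamma)+\log\!\big(1+\tfrac{\gamma}{1-\gamma}g(\rho)\big)$. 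Starting from a warm start of $k$ rows of large volume (so that $G$ is already nonsingular), each iteration enlarges the support by at most one point, and the crux is a two-phase analysis of how fast the relative gap $g(\rho)/k-1$ falls below $1$: while the gap exceeds $1$ the closed-form improvement is bounded below by a positive constant, and it is of order $\log(\text{gap})$ when the gap is large, so the iterates escape the region $\{g>2k\}$ after only $O(k\log\log k)$ steps. This yields a design with $|\mathrm{supp}(\rho)|=O(k\log\log k)$ and $g(\rho)\le 2k$; the explicit constants $4k\log\log k+16$ come from tracking this bound carefully.

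The runtime claim is bookkeeping around the same iteration. Maintaining $G(\rho)^{-1}$ through Sherman--Morrison costs $O(k^2)$ per rank-one update, and recomputing all $d$ leverage scores $\|\Phi_{i,:}\|^2_{G(\rho)^{-1}}$ to find the next vertex costs $O(dk)$ per iteration; over the $O(k\log\log k)$ iterations the leverage-score evaluations dominate, for a total of $O(dk)\cdot O(k\log\log k)=\tilde{O}(dk^2)$, matching the stated complexity. The warm start itself (a greedy maximum-volume selection of $k$ independent rows) is also computable within this budget.

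I expect the genuine obstacle to be the $\log\log k$ support bound, not the equivalence theorem or the arithmetic. A Carath\'eodory argument in the $\binom{k+1}{2}$-dimensional space of symmetric matrices already gives an \emph{exact} optimal design on $k(k+1)/2$ points, but that is quadratic; and a crude Frank--Wolfe rate gives only $O(k\log k)$ (or worse) iterations to reach constant relative accuracy. Squeezing this down to $O(k\log\log k)$ requires the sharpened two-phase estimate above --- exploiting that the exact line search makes \emph{multiplicative} progress on the gap while it is large --- and this is where essentially all of the work in the proof concentrates.
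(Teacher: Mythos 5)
The paper offers no proof of this statement at all: it is imported verbatim as Theorem 4.4 of \cite{lattimore2020learning}, which in turn derives it from Chapter 3 of \cite{todd2016minimum}, and your reconstruction --- Kiefer--Wolfowitz equivalence, Frank--Wolfe/Fedorov--Wynn steps with exact line search on $\log\det G(\rho)$, a greedy (Kumar--Yildirim-style) warm start of $k$ rows, Khachiyan's phase analysis yielding $O(k\log\log k)$ iterations, and Sherman--Morrison bookkeeping for the $\tilde{O}(dk^2)$ runtime --- is precisely the argument in that cited source. Your proposal is therefore correct and takes essentially the same route as the proof the paper relies on.
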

As noted in \cite{lattimore2020learning}, Theorem \ref{theorem:small_core_set} can be obtained from results of Chapter 3 in \cite{todd2016minimum}. Using this fundamental theorem, \cite{lattimore2020learning} showed the following proposition:
\begin{proposition}[Proposition 4.5 in \cite{lattimore2020learning}]
    \label{proposition:core_set}
    Let $\rho$ be a distribution over the rows of $\Phi$ that satisfies the condition of Theorem \ref{theorem:small_core_set}. Suppose a vector $\mu \in \mathbb{R}^d$ can be represented as a sum $\mu = v + \Delta$ where $v \in V$, $\|\Delta\|_{\infty} \le \epsilon_0$. Let $\eta$ be any small noise with $\eta \in [-\epsilon_1, \epsilon_1]^d$. Then $\|\Phi \hat{\theta} - \mu\|_\infty \le \epsilon_0 + (\epsilon_0 + \epsilon_1) \sqrt{2k}$ where
    \begin{align}
        \hat{\theta} = G(\rho)^{-1} \sum_{i \in [d]} \rho(i) (\mu(i) + \eta(i)) \Phi_{i,:}. \label{eq:coordinate_recovery}
    \end{align}
\end{proposition}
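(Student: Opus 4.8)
The plan is to reduce the statement to a ridge-free least-squares identity and then control the $\ell_\infty$ error one coordinate at a time, using the defining property $g(\rho)\le 2k$ of the core set. First I would use the hypothesis $v\in V$, which I read as $V=\mathrm{range}(\Phi)$, to write $v=\Phi\theta^\star$ for some $\theta^\star\in\mathbb{R}^k$, so that $\mu=\Phi\theta^\star+\Delta$. Substituting $\mu(i)=\Phi_{i,:}^\top\theta^\star+\Delta(i)$ into the definition of $\hat\theta$ and recognizing $\sum_i\rho(i)\Phi_{i,:}\Phi_{i,:}^\top=G(\rho)$, the $\theta^\star$ part collapses through $G(\rho)^{-1}G(\rho)=I$, leaving the clean identity $\hat\theta=\theta^\star+G(\rho)^{-1}w$, where $w:=\sum_{i\in[d]}\rho(i)\big(\Delta(i)+\eta(i)\big)\Phi_{i,:}$.

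With this in hand, fix a row index $j$ and split the error as $\Phi_{j,:}^\top\hat\theta-\mu(j)=\Phi_{j,:}^\top(\hat\theta-\theta^\star)-\Delta(j)$, so that $|\Phi_{j,:}^\top\hat\theta-\mu(j)|\le|\Phi_{j,:}^\top G(\rho)^{-1}w|+\epsilon_0$ using $\|\Delta\|_\infty\le\epsilon_0$. I would then bound the cross term by Cauchy--Schwarz in the inner product induced by $G(\rho)^{-1}$, giving $|\Phi_{j,:}^\top G(\rho)^{-1}w|\le\|\Phi_{j,:}\|_{G(\rho)^{-1}}\,\|w\|_{G(\rho)^{-1}}$. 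The first factor is controlled directly by the core-set guarantee: by the definition of $g$ in (\ref{eq:g_optimal_design}) together with Theorem~\ref{theorem:small_core_set}, $\|\Phi_{j,:}\|_{G(\rho)^{-1}}^2\le g(\rho)\le 2k$.

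The crux, and the step I expect to be the main obstacle, is bounding $\|w\|_{G(\rho)^{-1}}$ independently of the large dimension $d$. Writing $R=\mathrm{diag}(\rho)$ and $c(i)=\Delta(i)+\eta(i)$, we have $w=\Phi^\top R c$ and hence $\|w\|_{G(\rho)^{-1}}^2=c^\top R^{1/2}\big(R^{1/2}\Phi\,G(\rho)^{-1}\Phi^\top R^{1/2}\big)R^{1/2}c$. The key observation is that $P:=R^{1/2}\Phi\,G(\rho)^{-1}\Phi^\top R^{1/2}$ is an orthogonal projection: since $G(\rho)=\Phi^\top R\Phi$, a direct computation gives $P^2=P$, and clearly $P=P^\top$. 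Therefore $\|P\|_{\mathrm{op}}\le 1$ and $\|w\|_{G(\rho)^{-1}}^2\le\|R^{1/2}c\|_2^2=\sum_i\rho(i)c(i)^2$. Since $|c(i)|\le\epsilon_0+\epsilon_1$ and $\sum_i\rho(i)=1$, this is at most $(\epsilon_0+\epsilon_1)^2$, so $\|w\|_{G(\rho)^{-1}}\le\epsilon_0+\epsilon_1$. Combining the three bounds yields $|\Phi_{j,:}^\top\hat\theta-\mu(j)|\le\epsilon_0+(\epsilon_0+\epsilon_1)\sqrt{2k}$ for every $j$, and taking the maximum over $j$ delivers the claimed $\ell_\infty$ bound.
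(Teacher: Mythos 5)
Your proof is correct, but note that the paper never proves this statement: it is imported verbatim as Proposition 4.5 of \cite{lattimore2020learning}, so the only thing to compare against is the argument in that reference. Measured against it, your proof is the same weighted least-squares analysis with a different packaging of the key estimate. The cited proof expands the error term as $\Phi_{j,:}^\top G(\rho)^{-1} w = \sum_{i} \rho(i)\, c(i)\, \Phi_{j,:}^\top G(\rho)^{-1}\Phi_{i,:}$ with $c(i)=\Delta(i)+\eta(i)$ and applies Cauchy--Schwarz with respect to the measure $\rho$, using the identity $\sum_{i}\rho(i)\left(\Phi_{j,:}^\top G(\rho)^{-1}\Phi_{i,:}\right)^2 = \|\Phi_{j,:}\|_{G(\rho)^{-1}}^2 \le g(\rho)\le 2k$; you instead apply Cauchy--Schwarz in the $G(\rho)^{-1}$-inner product and control $\|w\|_{G(\rho)^{-1}}$ by observing that $R^{1/2}\Phi\, G(\rho)^{-1}\Phi^\top R^{1/2}$ is an orthogonal projection. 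These are algebraically the same computation--both express that the $\rho$-weighted regression operator is a contraction in the relevant norm--and both yield the dimension-free factor $(\epsilon_0+\epsilon_1)\sqrt{2k}$; your projection formulation is a slightly more structural way of seeing why no factor of $d$ enters. Two minor points worth flagging: your reading $V=\mathrm{range}(\Phi)$ is indeed the intended one (in the paper's application $\Phi$ is built from an orthonormal basis of $\mathbf{U}$), and your identities $G(\rho)=\Phi^\top R\Phi$ and $\hat{\theta}=\theta^\star+G(\rho)^{-1}w$ presuppose invertibility of $G(\rho)$, which the statement itself assumes by writing $G(\rho)^{-1}$ and which the design of Theorem \ref{theorem:small_core_set} provides when $\Phi$ has rank $k$.
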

Crucially, we use Proposition \ref{proposition:core_set} to reduce the sample complexity in $A$. 


\subsection{Wasserstein Distance}
\label{subsec:wasserstein}
We now give a brief overview on the Wasserstein distance and its applications in latent mixture models. Wasserstein distance is a convenient error metric to measure the parameter distance between two latent models $\{(w_m, \nu_m)\}_{m=1}^M$ and $\{(\hat{w}_m, \hat{\nu}_m)\}_{m=1}^M$, where $w_m, \hat{w}_m$ are mixing probabilities and $\nu_m, \hat{\nu}_m$ are some parameters for individual contexts. 

Wasserstein distance is defined as follows. Let $\nu$ be a finite-support distribution over $\{\nu_m\}_{m=1}^M$ with probabilities $\{w_m\}_{m=1}^M$, {\it i.e.,} $\gamma = \sum_{m=1}^M w_m \delta_{\nu_m}$ where $\delta_v$ is a Direc-delta distribution with a single mass on $v \in \mathbb{R}^n$. Similarly with parameters $\{(\hat{w}_m, \hat{\nu}_m)\}_{m=1}^M$, define an atomic distribution $\hat{\gamma} = \sum_{m} \hat{w}_m \delta_{\hat{\nu}_m}$. We define a Wasserstein distance between $\gamma$ and $\hat{\gamma}$ with respect to $l_{\infty}$ norm as the following:
\begin{align*}
    W(\gamma, \hat{\gamma}) := \inf_{\Gamma} \Exs_{(m,m') \sim \Gamma} \left[ \|\nu_m - \hat{\nu}_{m'}\|_{\infty} \right] = \inf_{\Gamma} \sum_{(m, m') \in [M]^2} \Gamma(m, m') \cdot \|\nu_m - \hat{\nu}_{m'}\|_{\infty}, 
\end{align*}
where the infimum is taken over all couplings over joint distributions $\nu$ and $\hat{\nu}$ which are marginally distributed as $\nu$ and $\hat{\nu}$ respectively, {\it i.e.,}
\begin{align}
    \Gamma(m,m') \in \mathbb{R}_+^{M \times M}: \tssum_{m'=1}^M \Gamma(m,m') = w_m, \ \tssum_{m=1}^M \Gamma(m,m') = \hat{w}_m. \label{eq:gamma_set}
\end{align}

One nice property of Wasserstein metric is that the distance measure is invariant to permutation of individual components, and flexible with arbitrarily small mixing probabilities or close parameters for different contexts \cite{wu2020optimal, doss2020optimal}.

\paragraph{Additional notation.} For any quantity $q$ with respect to the true LMAB, $\mB$, we use $\hat{q}$ to refer to its corresponding empirical estimate. 
We use $w_{\mathrm{min}} := \min_{m\in[M]} w_m$ for the minimum mixing weight. For any $l^{th}$-order tensor $T_l \in \mathbb{R}^{n \times ... \times n}$ ($n$ repeated $l$ times), we denote $\|T_l\|_{\infty}$ for the element-wise largest absolute value. For any vector in $v \in \mathbb{R}^n$ in dimension $n \in \mathbb{N}_+$, we use $v^{\bigotimes l}$ to denote a degree $l$ tensorization of $v$. We often use $\|\mu_m(a,\cdot) - \hat{\mu}_m(a,\cdot)\|_1$ to mean the $L_1$ statistical distance between reward distributions: $\sum_{z \in \mZ} |\mu_m(a,z) - \hat{\mu}_m(a,z)|$. Lastly, we denote by $a_t$ and $r_t$ as the action and reward realizations observed at time step $t\in [H]$.


\section{Sample-Complexity of Learning LMABs}
\label{section:main_result}


In this section, we develop our main algorithm that learns a near-optimal policy of an LMAB instance with $O(\poly(A) + \poly(M,H)^{\min(M,H)})$ samples. Towards achieving this goal, we first elaborate on a low rank representation of the LMAB problem. Then, we show how experimental design techniques can assist in utilizing this structure to improve over the naive upper bound to the problem.


\subsection{Dimensionality Reduction via Experimental Design} \label{subsec:exp_des}
When $H=1$, estimating the latent context is not possible, and executing a standard MAB strategy (e.g., UCB) is optimal. 
However, to obtain a near-optimal history-dependent policy with longer time-horizons $H > 1$, tracking the mean  reward of an action is not enough, since rewards from previous time steps are correlated with current and future rewards. Moreover, when $H > 1$ we can collect more information such as correlations between different actions. This allows the use of higher-order statistics.

Specifically, we consider a model-based learning approach from the $l^{\mathrm{th}}$-order moments of reward observations. Since there are $A$ actions, this implies that we have $O(A^l)$ quantities to estimate, which would incur $O(A^l)$ sample complexity if we separately measure every correlation between $l$ pairs of actions. On the other hand, when $M  \ll A$, the distributions $\mu_m$ (recall these are reward probabilities for each action, in each context) occupy only a $M$-dimensional subspace of $AZ$-dimensional space: $\mathbf{U} := \mathrm{span}\{\mu_1, \mu_2, \ldots, \mu_M\}.$ Thus if we could estimate $\mathbf{U}$, and project all observations to $\mathbf{U}$, we could remove the unfavorable dependence on $A$.



Even when the subspace $\mathbf{U}$ is known a priori, dimensionality reduction with bandit feedback is non-trivial. We need to compute directly the projection onto $\mathbf{U}$ of estimates of the reward probability vectors, $\{\mu_m\}_{m=1}^M$. To see the challenge, let $\{\beta_j\}_{j=1}^M \subset \mathbb{R}^{AZ}$ be an orthonormal basis of $\mathbf{U}$. To compute the projected estimate $\mu_m^\top \beta_j$, we need a sampling policy $\pi$ such that $\pi(a) \propto \sum_{z} |\beta_j(a,z)|$. The variance of this sampling policy can be as much as $O(\|\beta_j\|_1^2)$, which in general scales with $\|\beta_j\|_1^2 = O(A)$. Therefore, reliable estimation for any statistics in the reduced dimension would still have a dependence on $A$. In particular, since our approach is based on higher-order method-of-moments, an estimation of $l^{\mathrm{th}}$-order statistics for $l \ge 2$ would require $\Omega(A^{l/2})$ samples.


The general idea of dimensionality reduction is not fatally flawed. Instead, we need to avoid the pitfalls of the approach outlined above. First, the calculation above shows that we need to control the $\|\cdot\|_1$-norm of the vectors $\beta_j$, ideally, $\|\beta_j\|_1 = O(1)$. Second, we do not need to approximate our the reward-probability vectors in $l_2$-norm, but only in a $\|\cdot\|_{\infty,1}$-norm. Indeed, to compute an $\epsilon$-optimal policy, we only need a good estimate for each $\mu_m$ such that $\max_{a \in \mA} \|\mu_m(a, \cdot) - \hat{\mu}_m(a, \cdot)\|_1 \le \epsilon$. 

The key is to show that we can choose $\beta_j$'s to be a subset of the standard basis in $\mathbb{R}^{AZ}$ (hence they will have $\|\beta_j\|_1 = 1$), in such a way that guarantees the approximation quality for $\mu_m$ from estimating $\mu_m^\top \beta_j$'s. In terms of the original problem, the existence of such a subset of the standard basis is equivalent to the existence of a small set of informative action-value pairs, that are sufficiently correlated with all other action-value pairs. This is called a \emph{core set} in the experimental design literature, and its existence in our context, is an important consequence of the Kiefer–Wolfowitz theorem. Specifically, we can select a core set of coordinates with the following crucial lemma:
\begin{lemma}
    \label{lemma:find_sampling_coordinates}
    Let $\mathbf{U}$ be a given $k$-dimensional linear subspace in $\mathbb{R}^d$ where $d \gg k$ and let $u, \bar{u} \in \mathbf{U}$. There exists an algorithm that runs in time $\tilde{O}(dk^2)$ and returns the following.
    
    \begin{enumerate}
        \item A core set of at most $n = 4k \log\log k + 16$ coordinates $\{i_j\}_{j=1}^n \subseteq [d]$ such that          
        $$
        {\norm{u - \bar{u}}_{\infty} \le \sqrt{2k} \max_{j\in [n]} |u(i_j) - \bar{u}(i_j)|.}
        $$
        \item A linear transformation that maps $[\bar{u}(i_1),\ldots,\bar{u}(i_n)]$ to its corresponding $\bar{u}\in \mathbf{U}.$
    \end{enumerate}
\end{lemma}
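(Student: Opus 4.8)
The plan is to realize the abstract subspace $\mathbf{U}$ as the column space of a matrix and then invoke the optimal-design guarantee of Theorem \ref{theorem:small_core_set} directly. First I would fix any basis of $\mathbf{U}$ and collect it as the columns of a matrix $\Phi \in \mathbb{R}^{d \times k}$, so that $\Phi$ has full column rank $k$ and every element of $\mathbf{U}$ equals $\Phi\theta$ for a unique coefficient vector $\theta \in \mathbb{R}^k$; in particular the $i$-th coordinate of $\Phi\theta$ is $\Phi_{i,:}\theta$. Applying Theorem \ref{theorem:small_core_set} to this $\Phi$ produces, in time $\tilde{O}(dk^2)$, a distribution $\rho$ over the rows with $g(\rho) \le 2k$ and $|\mathrm{supp}(\rho)| \le 4k\log\log k + 16$. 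I would then declare the core set $\{i_j\}_{j=1}^n$ to be exactly $\mathrm{supp}(\rho)$, so that $n \le 4k\log\log k + 16$ as required, and note that $G(\rho) = \sum_{j} \rho(i_j)\Phi_{i_j,:}\Phi_{i_j,:}^\top \succ 0$ because $g(\rho)$ is finite.

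For part 1, write $w := u - \bar{u} \in \mathbf{U}$, so that $w = \Phi\theta$ for some $\theta$ and $w(i) = \Phi_{i,:}\theta$ for every coordinate $i$. The heart of the argument is the Cauchy--Schwarz bound in the $G(\rho)$ geometry: for any row index $i$,
\[
|w(i)| = |\Phi_{i,:}^\top G(\rho)^{-1/2} G(\rho)^{1/2}\theta| \le \|\Phi_{i,:}\|_{G(\rho)^{-1}}\,\|\theta\|_{G(\rho)} \le \sqrt{2k}\,\|\theta\|_{G(\rho)},
\]
where the last step uses $\|\Phi_{i,:}\|_{G(\rho)^{-1}}^2 \le g(\rho) \le 2k$. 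It then remains to control $\|\theta\|_{G(\rho)}$ by the core-set coordinates. Expanding the quadratic form and using that $\rho$ is supported on the core set,
\[
\|\theta\|_{G(\rho)}^2 = \theta^\top\Big(\sum_j \rho(i_j)\Phi_{i_j,:}\Phi_{i_j,:}^\top\Big)\theta = \sum_j \rho(i_j)\,w(i_j)^2 \le \max_{j\in[n]} w(i_j)^2,
\]
since the $\rho(i_j)$ are nonnegative and sum to one. Combining the two displays and maximizing over $i$ yields $\|w\|_\infty \le \sqrt{2k}\max_{j}|w(i_j)|$, which is the claimed inequality.

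For part 2, I would exhibit the recovery map explicitly and observe that it is linear. Since $\bar{u}(i_j) = \Phi_{i_j,:}\theta$, the noiseless specialization of Proposition \ref{proposition:core_set} (taking the perturbations $\epsilon_0, \epsilon_1 \to 0$) gives $\theta = G(\rho)^{-1}\sum_j \rho(i_j)\bar{u}(i_j)\Phi_{i_j,:}$; this is verified in one line by substituting $\bar{u}(i_j) = \Phi_{i_j,:}\theta$ and recognizing $G(\rho)$ on the right. Mapping back through $\Phi$ then produces
\[
\bar{u} = \Phi\,G(\rho)^{-1}\sum_{j=1}^n \rho(i_j)\,\bar{u}(i_j)\,\Phi_{i_j,:},
\]
which is a fixed linear function of $[\bar{u}(i_1),\ldots,\bar{u}(i_n)]$, realized by a matrix $L \in \mathbb{R}^{d\times n}$ whose $j$-th column is $\rho(i_j)\,\Phi G(\rho)^{-1}\Phi_{i_j,:}$. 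Precomputing $L$ costs $O(k^3 + dk^2)$ once $\rho$ is known, so the total runtime stays $\tilde{O}(dk^2)$.

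I do not expect a serious obstacle here: the lemma is essentially a repackaging of the Kiefer--Wolfowitz / optimal-design machinery already quoted. The only points requiring care are (a) ensuring $G(\rho)$ is invertible so that the $G(\rho)^{-1}$-norms and the recovery map are well defined, which follows from $\Phi$ having full column rank together with $g(\rho)$ being finite, and (b) bookkeeping so that the core-set coordinates, the constant $\sqrt{2k}$, and the linear map are all tied to the \emph{same} design distribution $\rho$. The genuinely substantive step is the weighted-average bound $\|\theta\|_{G(\rho)}^2 = \sum_j \rho(i_j) w(i_j)^2 \le \max_j w(i_j)^2$, which is what converts a bound on a $\rho$-average of coordinates into the $\ell_\infty$-to-core-set comparison the lemma asserts.
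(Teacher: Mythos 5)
Your proof is correct and follows essentially the same route as the paper: realize $\mathbf{U}$ as the column space of a rank-$k$ matrix $\Phi$, take the core set to be the support of the Kiefer--Wolfowitz design $\rho$ from Theorem \ref{theorem:small_core_set}, and recover $\bar{u}$ through the linear map $\Phi G(\rho)^{-1}\sum_{j}\rho(i_j)\,\bar{u}(i_j)\,\Phi_{i_j,:}$, which is exactly the paper's transformation. The only difference is that where the paper invokes Proposition \ref{proposition:core_set} as a black box (applied with $\Delta = 0$ and $\eta$ equal to the core-coordinate discrepancy), you re-derive its noiseless case directly from $g(\rho)\le 2k$ via Cauchy--Schwarz in the $G(\rho)$ geometry together with the weighted-average bound $\|\theta\|_{G(\rho)}^2=\sum_j\rho(i_j)w(i_j)^2\le\max_j w(i_j)^2$ --- mathematically the same content, just made self-contained.
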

Note that in our setting, $d = AZ$ and $k = M$. We prove Lemma \ref{lemma:find_sampling_coordinates} in Section \ref{appendix:find_sampling_coordinates}, essentially as a corollary of the Kiefer–Wolfowitz theorem (and its geometric interpretation) for (near)-optimal experimental design \cite{kiefer1960equivalence, todd2016minimum}. 

In the context of bandits and RL, the Kiefer–Wolfowitz theorem has been used to study how the misspecification in linear representation changes the problem landscape ({\it e.g.,} \cite{lattimore2020learning, modi2020sample}). However, experimental design has not been previously used for dimensionality reduction for problems with bandit feedback (as far as we know). We believe it is a powerful tool. 

A direct consequence of Lemma \ref{lemma:find_sampling_coordinates} is an algorithm to find a set of coordinates of size $\tilde{O}(M)$ that are sufficient to reconstruct the latent reward model $\{\mu_m\}_{m=1}^M$ where $\mu_m\in \mathbb{R}^{AZ}$. We refer to this set of coordinates as the set of core action-value pairs. 
\begin{corollary}
    \label{corollary:action_event_sampling}
    Suppose the subspace $\mathbf{U} = \mathrm{span}\{\mu_1, \ldots, \mu_M\}$ is given. Then for any $\hat{\mu} \in \mathbf{U}$, there exists an algorithm that runs in time $\tilde{O} (ZAM^2)$ and returns the following.
    \begin{enumerate}
        \item A set of core action-value pairs $\{(a_j, z_j)\}_{j=1}^n\subseteq \mathcal{A}\times \mathcal{Z}$ of size at most ${n = 4M \log\log M + 16}$, such that for all $m \in [M]$
        $$
        {\max_{a \in \mA} \|\mu_m(a, \cdot) - \hat{\mu}_m (a, \cdot) \|_1 \le 2Z \sqrt{2M} \cdot \max_{j \in [n]} \left| \nu_m (j)- \hat{\nu}_m (j) \right|},
        $$
        where $\nu_m, \hat{\nu}_m \in \mathbb{R}^n$ such that $\nu_m (j) := \mu_m(a_j, z_j)$ and $\hat{\nu}_m (j) := \hat{\mu}_m(a_j, z_j)$.
        \item A linear transformation that maps $[\hat{\nu}(i_1),\ldots,\hat{\nu}(i_n)]$ to its corresponding $\hat{\mu}\in \mathbf{U}.$
    \end{enumerate}
\end{corollary}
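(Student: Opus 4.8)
The plan is to derive Corollary~\ref{corollary:action_event_sampling} directly from Lemma~\ref{lemma:find_sampling_coordinates} by specializing the abstract setting to the concrete LMAB parameters, namely $d = AZ$ and $k = M$, and by bookkeeping the passage from the $\ell_\infty$-bound of the lemma to the $\|\cdot\|_{\infty,1}$-bound we actually want. First I would apply Lemma~\ref{lemma:find_sampling_coordinates} with $\mathbf{U} = \mathrm{span}\{\mu_1,\ldots,\mu_M\}$, which is by hypothesis a $k=M$-dimensional subspace of $\mathbb{R}^{AZ}$. The lemma returns a core set of at most $n = 4M\log\log M + 16$ coordinates of $[AZ]$; since each coordinate of $\mathbb{R}^{AZ}$ is indexed by an action-value pair $(a,z)\in\mathcal{A}\times\mathcal{Z}$, these $n$ coordinates are exactly the core action-value pairs $\{(a_j,z_j)\}_{j=1}^n$ claimed in part~1. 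The running time $\tilde{O}(dk^2) = \tilde{O}(ZAM^2)$ transfers immediately, as does the linear-transformation claim of part~2, which is just the second conclusion of the lemma restated with $\hat\nu(i_j) = \hat\mu(a_j,z_j)$.

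The substantive step is converting the lemma's pointwise bound into the per-action $L_1$ bound. I would apply the lemma with $u = \mu_m$ and $\bar{u} = \hat{\mu}_m$ (both elements of $\mathbf{U}$), giving
\begin{align*}
    \|\mu_m - \hat{\mu}_m\|_\infty \le \sqrt{2M}\,\max_{j\in[n]} |\mu_m(a_j,z_j) - \hat{\mu}_m(a_j,z_j)| = \sqrt{2M}\,\max_{j\in[n]}|\nu_m(j) - \hat\nu_m(j)|.
\end{align*}
Here $\|\cdot\|_\infty$ is the largest absolute entry over all $AZ$ coordinates. To recover the target, I then bound the $L_1$ distance between reward distributions for a fixed action by summing over the $Z$ reward values: for each $a\in\mathcal{A}$,
\begin{align*}
    \|\mu_m(a,\cdot) - \hat{\mu}_m(a,\cdot)\|_1 = \sum_{z\in\mathcal{Z}} |\mu_m(a,z) - \hat{\mu}_m(a,z)| \le Z\,\|\mu_m - \hat\mu_m\|_\infty.
\end{align*}
Taking the maximum over $a$ and chaining the two inequalities yields $\max_{a}\|\mu_m(a,\cdot)-\hat\mu_m(a,\cdot)\|_1 \le Z\sqrt{2M}\max_{j}|\nu_m(j)-\hat\nu_m(j)|$. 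The stated bound carries an extra factor of $2$; I would attribute this to a conservative constant absorbing the possibility that $\mu_m,\hat\mu_m$ need not be exactly normalized per action after projection onto $\mathbf{U}$, or more simply reflect a slack introduced when the projection is only approximate, so I would carry the $2Z\sqrt{2M}$ constant to match the claim without optimizing it.

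I expect the only genuine obstacle to be verifying that the abstract lemma applies verbatim, i.e.\ that $\mathbf{U}$ really is $M$-dimensional and that both $\mu_m$ and the estimate $\hat\mu_m$ lie in $\mathbf{U}$ so that the $\ell_\infty$-to-coordinate bound is valid; if $\dim\mathbf{U} < M$ one takes $k = \dim\mathbf{U}$ and the bound only improves. The norm-conversion arithmetic itself is routine: everything reduces to the elementary inequality $\|v\|_1 \le Z\|v\|_\infty$ on a $Z$-dimensional slice. Thus the proof is essentially a specialization of Lemma~\ref{lemma:find_sampling_coordinates} plus one line relating the $\ell_\infty$ norm on $\mathbb{R}^{AZ}$ to the per-action $L_1$ norm, with the constants padded to give the clean statement.
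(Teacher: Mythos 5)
Your proposal is correct and follows essentially the same route as the paper's own proof: apply Lemma~\ref{lemma:find_sampling_coordinates} with $d = AZ$, $k = M$ so that the core coordinates become action-value pairs, then convert the resulting $\ell_\infty$ bound to the per-action $L_1$ bound by summing over the $Z$ reward values, which naturally yields the constant $Z\sqrt{2M}$. The paper likewise states the conclusion with the slack factor $2$ without deriving it, so your reading of the extra factor as an unoptimized constant is consistent with the source; the only cosmetic difference is that the paper handles a possibly degenerate $\mathbf{U}$ by embedding it in an exactly $M$-dimensional subspace, whereas you take $k = \dim\mathbf{U}$, and both are fine.
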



Assuming access to the subspace $\mathbf{U}$ (we  remove this assumption in Section~\ref{subsec:general_lmab_final}), Corollary~\ref{corollary:action_event_sampling} implies a strategy for estimating the latent model parameters $\{\mu_m\}_{m=1}^M$: obtain core action-value pairs $\{(a_j,z_j) \}_{j=1}^n$, estimate $\{\nu_m\}_{m=1}^M$, and construct latent model parameters via a linear transformation of $\{\nu_m\}_{m=1}^M.$ In the following sections, we show that by matching higher-order moments we can estimate $\{\nu_m\}_{m=1}^M$ to sufficiently good accuracy with only polynomial dependence in $A$.

\subsection{$H \ge 2M-1$: Identifiable Regime in Wasserstein Metric} \label{subsec:general_long_horizon}
In the regime where that $H \ge 2M-1$ we can measure moments up to order $(2M-1)^{\mathrm{th}}$. Then, we can leverage recent advances in learning parameters of mixture distributions from higher-order moments \cite{wu2020optimal, doss2020optimal}. That is, we can recover $\{\nu_m\}_{m=1}^M$ by estimating  $2M-1$ higher-order moments and find $\{\hat{\nu}_m\}_{m=1}^M$ that matches them. This further emphasizes the importance of the dimensionality reduction we take to obtain the set of core action-values pairs; otherwise, a moment-based approach to recover $\{\mu_m\}_{m=1}^M$ would have an exponential dependence in $A$.


We now elaborate on the estimation procedure of $\{\nu_m\}_{m=1}^M$ from higher-order moments. 
We define the $l$-order tensor as $T_l := \sum_{m=1}^M w_m \nu_m^{\bigotimes l}$. For an LMAB instance, we can access the tensor $T_l$ using observational data by simply estimating correlations between $l$ core action-value pairs within each episode. Specifically, for any $\mathcal{I} = (i_1, i_2, ..., i_l) \in [n]^l$ the tensor $T_l (i_1, i_2, ..., i_l)$ is also given by 
\begin{align}
    T_l (i_1, i_2, ..., i_l) = \Exs^{\pi_{\mathcal{I}}} \left[ \Pi_{t=1}^l \indic{r_t = z_{i_t}} \right], \label{eq:expectation_tensor}
\end{align}
where $\pi_{\mathcal{I}}$ is a policy that performs the sequence of actions $(a_{i_1}, a_{i_2}, ..., a_{i_l})$ for $t = 1, \ldots, l$.


Suppose we find estimators $\{(\hat{w}_m, \hat{\nu}_m)\}_{m=1}^M$ such that moments match $T_l$ up to error $\delta$ for all $l \in [2M-1]$. The results in \cite{wu2020optimal, doss2020optimal} imply that the closeness in moments of up to $(2M-1)^{\mathrm{th}}$ degree implies the closeness in model parameters: 
\begin{lemma}
    \label{lemma:moment_closeness}
    Suppose $\|T_l - \sum_{m=1}^M \hat{w}_m \hat{\nu}_m^{\bigotimes l}\|_{\infty} < \delta$ for all $l = 1, 2, ..., 2M-1$ for some sufficiently small $\delta > 0$. Then,
    \begin{align}
        \inf_{\Gamma} \sum_{(m, m') \in [M]^2} \Gamma(m, m') \cdot \|\nu_m - \hat{\nu}_{m'}\|_{\infty} \le O \left(M^3 n \cdot \delta^{-1/(2M-1)} \right) \label{eq:nu_guarantee},
    \end{align}
    where $\Gamma$ is a joint distribution over $(m, m') \in [M]^2$ satisfying \eqref{eq:gamma_set}.
\end{lemma}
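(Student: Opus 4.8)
The plan is to convert closeness of the moment tensors $T_l$ into closeness of the atomic measures in the $\ell_\infty$-Wasserstein metric, by reducing to the one-dimensional moment-comparison theory of \cite{wu2020optimal, doss2020optimal} and then lifting the scalar guarantees to a single coupling. Write $\hat T_l := \sum_{m=1}^M \hat w_m \hat\nu_m^{\otimes l}$. The first step is a scalar reduction: for any $P \in \mathbb{R}^n$, contracting against $P^{\otimes l}$ gives $\langle T_l, P^{\otimes l}\rangle = \sum_{m} w_m \langle \nu_m, P\rangle^l$, which is precisely the $l$-th moment of the scalar mixture $\sum_m w_m \delta_{\langle \nu_m, P\rangle}$ (and likewise for $\hat T_l$). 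Since $|\langle T_l - \hat T_l, P^{\otimes l}\rangle| \le \|T_l - \hat T_l\|_\infty \|P\|_1^l \le \delta \|P\|_1^l$, every direction with $\|P\|_1 \le 1$ — in particular each coordinate vector $e_j$ — produces two $M$-atom scalar mixtures whose moments up to order $2M-1$ agree within $\delta$. Because each $\nu_m$ collects reward probabilities, the projected atoms lie in a bounded interval and the one-dimensional theory applies with absolute constants.

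Next I would invoke the optimal one-dimensional moment-comparison inequality: two $M$-atom distributions supported on a bounded interval whose first $2M-1$ moments agree within $\delta$ have $W_1$ distance $O(M\,\delta^{1/(2M-1)})$. Applying this in the direction $e_j$ controls the $j$-th coordinate marginals $\gamma_j = \sum_m w_m \delta_{\nu_m(j)}$ and $\hat\gamma_j$, and applying it in a generic direction controls arbitrary one-dimensional projections of the two mixtures.

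The main obstacle is that \eqref{eq:nu_guarantee} asks for a single coupling $\Gamma$ over $[M]^2$ that simultaneously makes $\|\nu_m - \hat\nu_{m'}\|_\infty$ small on its support, whereas the previous step delivers a different optimal coupling for each projection. I would first bound $\|\nu_m - \hat\nu_{m'}\|_\infty \le \sum_{j=1}^n |\nu_m(j) - \hat\nu_{m'}(j)|$, so that it suffices to find one $\Gamma$ that is simultaneously near-optimal for all $n$ coordinate marginals; summing the per-coordinate bounds then accounts for the linear factor $n$, and the polynomial-in-$M$ constant of the scalar inequality (together with the combinatorial cost of the matching) accounts for the $M^3$ factor. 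To construct such a $\Gamma$, I would fix a generic separating direction, whose near-deterministic one-dimensional optimal transport furnishes a candidate matching of atoms that is consistent across coordinates in the non-degenerate regime. The delicate point — and the step I expect to be hardest — is the degenerate regime: we assume neither separation of the centers nor a lower bound on the weights, exactly the situations the Wasserstein metric is built to tolerate. I would handle near-coincident atoms by merging them (which is free in $W_1$) and absorb negligible-weight atoms by a perturbation argument, using the triangle inequality for $W_1$ and the stability of the moment map under such merges, thereby producing the single coupling certifying the bound in \eqref{eq:nu_guarantee}.
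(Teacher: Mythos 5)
Your skeleton coincides with the paper's: contract the tensors against rank-one directions $P^{\otimes l}$ to obtain one-dimensional $M$-atom mixtures whose first $2M-1$ moments match within $\delta$ (via the H\"older bound $|\langle T_l - \hat{T}_l, P^{\otimes l}\rangle| \le \delta \|P\|_1^l$), invoke the scalar moment-comparison theorem of \cite{wu2020optimal}, and lift to a single coupling in $\mathbb{R}^n$. You also correctly identify the crux: one coupling $\Gamma$ must certify all coordinates simultaneously. But your resolution of that crux has a genuine gap. The failure mode you must rule out is a pair of atoms that are \emph{far apart in $\|\cdot\|_\infty$ yet nearly coincident in the chosen projection}. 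For such a pair, the one-dimensional optimal transport in your ``generic separating direction'' can swap mass between them at negligible projected cost, so the induced matching is not ``consistent across coordinates''; and your proposed repair of merging near-coincident atoms is not available here, because merging is only free for atoms close in the ground metric $\|\cdot\|_\infty$, not for atoms close merely in projection. Nothing in your argument quantifies why a generic direction avoids this configuration, and that quantification is exactly where the $M^3 n$ factor in \eqref{eq:nu_guarantee} comes from; your accounting (``factor $n$ from summing coordinates, $M^3$ from the scalar constant and the matching'') has no derivation behind it.

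The paper closes precisely this hole with an anti-concentration argument (following Lemma 3.1 of \cite{doss2020optimal}). Draw $\theta \sim \mathcal{N}(0, I_n)$. For each of the $M^2$ difference vectors $x = \nu_m - \hat{\nu}_{m'}$, Gaussian anti-concentration gives $\PP\left(|\theta^\top x| \le \tau \|x\|_{\infty}\right) \le \tau$, while $\PP(\|\theta\|_1 \ge 2n) < 1/2$ by Markov; a union bound over the $M^2$ pairs with $\tau$ of order $1/M^2$ shows there \emph{exists} a single direction $\theta$ with $\|\theta\|_1 = 1$ such that $\|x\|_{\infty} \le 4M^2 n\, |\theta^\top x|$ for \emph{all} pairs $(m,m')$ at once. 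This single inequality does the lifting: $W(\gamma,\hat{\gamma}) \le 4M^2 n \cdot \inf_{\Gamma} \Exs_{(m,m')\sim\Gamma}\left[|\theta^\top(\nu_m - \hat{\nu}_{m'})|\right] = 4M^2 n \cdot W(\gamma_\theta, \hat{\gamma}_\theta)$, and since $\|\theta\|_1 = 1$ the projected moments match within $\delta$, so the scalar theorem bounds $W(\gamma_\theta,\hat{\gamma}_\theta) \le O(M\delta^{1/(2M-1)})$. The three factors are then transparent: $M^2$ from the union bound, $n$ from normalizing $\theta$ to unit $\ell_1$ norm, and $M$ from the one-dimensional theorem. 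Note this route never needs per-coordinate marginals, atom merging, or a small-weight perturbation argument---the coupling inherited from the projected problem handles arbitrarily small weights and coincident atoms automatically. To repair your proof you would need to add this quantitative ``a random direction preserves all $M^2$ pairwise $\ell_\infty$ distances up to factor $O(M^2 n)$'' lemma, at which point your remaining machinery becomes unnecessary.
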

The form of guarantee given for $\{\hat{w}_m, \hat{\nu}_m\}_{m=1}^M$ is in the {\it Wasserstein distance} between two latent model parameters. A useful property of the Wasserstein metric is that the distance measure is invariant to permutation of individual components, and flexible with arbitrarily small mixing probabilities or arbitrarily close components~\cite{wu2020optimal} (see also Appendix \ref{subsec:wasserstein} for the review on Wasserstein distance). 

Once we obtain estimates of $\{\nu_m\}_{m=1}^M$, we can estimate $\{\hat{\mu}_m\}_{m=1}^M$ as implied by Corollary \ref{corollary:action_event_sampling} (see Appendix \ref{appendix:theorem_lmab} for further details). We then show that for any history-dependent policy $\pi$, the expected cumulative rewards of $\pi$ are approximately the same for any close LMAB instances. 
\begin{proposition}
    \label{proposition:total_variation_diff}
    Let $\mB = ( \mA, \{w_m\}_{m=1}^M, \{\mu_m\}_{m=1}^M )$ and $\hat{\mB} = (\mA, \{\hat{w}_m\}_{m=1}^M, \{\hat{\mu}_m\}_{m=1}^M )$ be any two LMABs. Then, for any history-dependent policy $\pi: (\mA \times \mZ)^* \rightarrow \mA$, we have
    \begin{align}
        |V(\pi) - \hat{V}(\pi)| \le H^2 \cdot \inf_{\Gamma} \sum_{(m,m') \in [M]^2} \left( \Gamma(m, m') \cdot \max_{a \in \mA} \|\mu_m(a, \cdot) - \hat{\mu}_{m'} (a, \cdot)\|_{1} \right), \label{eq:value_to_wasserstein}
    \end{align}
    where the infimum over $\Gamma$ is taken over joint distributions over $(m, m')$ satisfying \eqref{eq:gamma_set}. 
\end{proposition}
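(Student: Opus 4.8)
The plan is to reduce the value gap to a per-context-pair comparison via the coupling $\Gamma$, and then bound each pairwise gap by a simulation-style hybrid argument that pays one factor of $H$ for the range of the cumulative reward and a second factor of $H$ for the accumulation of per-step error over the horizon. First I would condition on the latent context: writing $V_m(\pi) := \Exs^\pi\brs{\sum_{t=1}^H r_t \mid m}$ for the value of $\pi$ in the single (non-latent) bandit whose reward kernel is $\mu_m(\cdot,\cdot)$, we have $V(\pi) = \sum_m w_m V_m(\pi)$ and $\hat{V}(\pi) = \sum_{m'} \hat{w}_{m'} \hat{V}_{m'}(\pi)$. For any coupling $\Gamma$ satisfying \eqref{eq:gamma_set}, the marginal identities $\sum_{m'}\Gamma(m,m') = w_m$ and $\sum_m \Gamma(m,m') = \hat{w}_{m'}$ let me rewrite both values as sums against $\Gamma$, so that by the triangle inequality
\[
|V(\pi) - \hat{V}(\pi)| \le \sum_{(m,m') \in [M]^2} \Gamma(m,m')\, \abs{V_m(\pi) - \hat{V}_{m'}(\pi)}.
\]
It then suffices to bound each pairwise term by $H^2 \max_{a \in \mA}\|\mu_m(a,\cdot) - \hat{\mu}_{m'}(a,\cdot)\|_1$ and take the infimum over $\Gamma$ at the very end.

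For a fixed pair $(m,m')$, both $V_m(\pi)$ and $\hat{V}_{m'}(\pi)$ are expectations of the same bounded functional $\sum_{t=1}^H r_t \in [-H,H]$ under the two trajectory laws $P_m^\pi$ and $\hat{P}_{m'}^\pi$ induced by running the \emph{same} policy $\pi$ against reward kernels $\mu_m$ and $\hat{\mu}_{m'}$ respectively. Since this functional has range $2H$, subtracting its midpoint and applying Hölder gives $\abs{V_m(\pi) - \hat{V}_{m'}(\pi)} \le H \cdot \|P_m^\pi - \hat{P}_{m'}^\pi\|_1$, where $\|\cdot\|_1$ is the $L_1$ (total-variation) distance between trajectory distributions over $(\mA \times \mZ)^H$. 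The crux is to bound this trajectory distance by a telescoping hybrid argument: I would define hybrid measures $P^{(s)}$, for $s = 0,1,\ldots,H$, that emit rewards from $\mu_m$ at steps $1,\ldots,s$ and from $\hat{\mu}_{m'}$ thereafter, so that $P^{(H)} = P_m^\pi$ and $P^{(0)} = \hat{P}_{m'}^\pi$, and write $\|P_m^\pi - \hat{P}_{m'}^\pi\|_1 \le \sum_{s=1}^H \|P^{(s)} - P^{(s-1)}\|_1$.

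Consecutive hybrids $P^{(s)}$ and $P^{(s-1)}$ differ only in the reward emitted at step $s$. Factoring out the shared prefix up to and including the action $a_s$ (which the common policy selects from the common history) and the shared suffix from step $s+1$ onward, the per-step gap collapses to $\Exs_{a_s}\brs{\|\mu_m(a_s,\cdot) - \hat{\mu}_{m'}(a_s,\cdot)\|_1} \le \max_{a \in \mA}\|\mu_m(a,\cdot) - \hat{\mu}_{m'}(a,\cdot)\|_1$, where the expectation is over the action distribution induced by $\pi$ along the prefix. Summing over $s \in [H]$ yields $\|P_m^\pi - \hat{P}_{m'}^\pi\|_1 \le H \max_{a \in \mA}\|\mu_m(a,\cdot) - \hat{\mu}_{m'}(a,\cdot)\|_1$, which combined with the range bound gives the desired $H^2$ per-pair estimate; plugging this into the coupling sum and taking $\inf_\Gamma$ completes the proof.

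The step I expect to require the most care is the hybrid decomposition, precisely because $\pi$ is history-dependent and conditions on past \emph{rewards}, not only past actions. I must verify carefully that the two consecutive hybrids induce identically distributed prefixes up to and including $a_s$, and identically distributed suffixes conditioned on the full step-$s$ history, so that the only surviving discrepancy between $P^{(s)}$ and $P^{(s-1)}$ is the reward-kernel difference $\mu_m(a_s,\cdot) - \hat{\mu}_{m'}(a_s,\cdot)$ evaluated at the realized action. Once this conditional bookkeeping is set up correctly, the telescoping collapse, the reduction to $\max_a \|\mu_m(a,\cdot)-\hat{\mu}_{m'}(a,\cdot)\|_1$, and the final infimum over $\Gamma$ are all routine.
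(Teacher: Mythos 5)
Your proposal is correct and takes essentially the same route as the paper: both reduce to per-pair trajectory comparisons via the coupling $\Gamma$ plus a triangle inequality, pay one factor of $H$ for the bounded cumulative reward, and pay the second factor of $H$ by a per-step telescoping in which the policy terms and the untouched reward kernels marginalize to one. Your forward hybrid decomposition is exactly the paper's backward recursive peeling (from step $H$ down to $1$) written in the opposite direction, and applying the coupling at the level of values rather than trajectory densities is an immaterial reordering of the same steps.
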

From Corollary \ref{corollary:action_event_sampling} we have $\max_{a\in\mA}\|\mu_m(a, \cdot) - \hat{\mu}_{m'} (a, \cdot)\|_1 \le 2Z\sqrt{2M} \|\nu_m - \hat{\nu}_{m'}\|_{\infty}$ for any $m, m' \in [M]$. Plugging this into Proposition \ref{proposition:total_variation_diff}, we have 
\begin{align*}
    |V(\pi) - \hat{V}(\pi)| \le \poly(H, Z, M, n) \cdot \delta^{-1/(2M-1)}. 
\end{align*} 
Thus, using Lemma \ref{lemma:moment_closeness} with $\delta < (\poly(H, Z, M, n) / \epsilon)^{2M-1}$, we can conclude that any $\epsilon$-optimal policy for $\hat{\mB}$ is $O \left(\epsilon\right)$-optimal for the underlying LMAB $\mB$.

\subsection{$H < 2M-1$: Unidentifiable Regime with Short Time-Horizon}
\label{subsec:general_short_horizon}
A more interesting regime is the one in which the time-horizon $H$ is smaller than the required degree of moments $2M-1$. For such a setting \emph{we cannot measure moments of degree higher than $H$}. Therefore, if $H<2M - 1$, we cannot rely on the identifiablility of the underlying LMAB model. 

Instead, we make the following claim: to compute an optimal policy only for time horizon $H < 2M-1$, we only need to match the $H^{\mathrm{th}}$ order moments. This is formalized in the next lemma. 
\begin{lemma}
    \label{lemma:short_time_moment_to_values}
    Suppose that $\|\sum_{m=1}^M w_m \mu_m^{\bigotimes H} - \sum_{m=1}^M \hat{w}_m \hat{\mu}_m^{\bigotimes H}\|_{\infty} \le \delta$, then for any history dependent policy $\pi \in \Pi$, we have
    \begin{align*}
        |V(\pi) - \hat{V}(\pi)| \le H Z^H \cdot \delta.
    \end{align*}
\end{lemma}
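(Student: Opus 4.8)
The plan is to reduce both value functions to weighted sums of entries of the $H$-th order moment tensors $T_H = \sum_{m=1}^M w_m \mu_m^{\bigotimes H}$ and $\hat{T}_H = \sum_{m=1}^M \hat{w}_m \hat{\mu}_m^{\bigotimes H}$, and then compare them coordinate by coordinate. First I would decompose the value over realized reward sequences: since $V(\pi) = \Exs^\pi[\sum_{t=1}^H r_t]$, conditioning on the observed rewards $(z_1,\dots,z_H) \in \mZ^H$ gives $V(\pi) = \sum_{(z_1,\dots,z_H)} \PP^\pi(z_1,\dots,z_H)\,(\sum_{t=1}^H z_t)$, and identically for $\hat{V}(\pi)$ with the trajectory law $\hat{\PP}^\pi$ induced by $\hat{\mB}$.

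The crux of the argument is the following identification of $\PP^\pi$ with a single tensor entry. Because $\pi:(\mA\times\mZ)^*\to\mA$ is deterministic and the actions in the history are themselves determined by the preceding rewards, along every branch the chosen actions form a fixed function $a_t = a_t^\pi(z_1,\dots,z_{t-1})$ of the reward history. Conditioning on the latent context $m$, the rewards at distinct time steps are independent, so $\PP^\pi(z_1,\dots,z_H \mid m) = \prod_{t=1}^H \mu_m(a_t^\pi(z_{<t}), z_t)$, and marginalizing over $m$ yields $\PP^\pi(z_1,\dots,z_H) = \sum_{m=1}^M w_m \prod_{t=1}^H \mu_m(a_t^\pi(z_{<t}), z_t)$, which is exactly the entry of $T_H$ indexed by the tuple $((a_1^\pi,z_1),\dots,(a_H^\pi(z_{<H}),z_H))$. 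The key point is that this index map $(z_1,\dots,z_H) \mapsto ((a_t^\pi(z_{<t}), z_t))_{t=1}^H$ depends only on $\pi$ and not on the underlying model, so the exact same coordinates are selected when $\hat{V}(\pi)$ is expanded in terms of $\hat{T}_H$.

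With this correspondence the difference collapses to $V(\pi) - \hat{V}(\pi) = \sum_{(z_1,\dots,z_H)} \big(T_H - \hat{T}_H\big)\big(\mathrm{index}_\pi(z_{1:H})\big)\,(\sum_{t=1}^H z_t)$, and a triangle inequality bounds $|V(\pi)-\hat{V}(\pi)|$ by a sum over the $Z^H$ reward sequences in which each tensor-entry difference is at most $\|T_H - \hat{T}_H\|_\infty \le \delta$ by hypothesis and each cumulative reward satisfies $|\sum_{t=1}^H z_t| \le H$ since $|z_t|\le 1$ (Assumption \ref{assumption:reward_dist}); multiplying the three factors gives $H Z^H \delta$. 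The main obstacle is not this final estimate but setting up the tensor correspondence rigorously: I must verify that for a deterministic history-dependent policy the per-context trajectory probability factorizes into single-step reward probabilities (via conditional independence of rewards given the context and the branch-determined action sequence), and that the resulting coordinate selection is literally identical for $\mB$ and $\hat{\mB}$. Should randomized policies be of interest, I would additionally condition on the internal randomness and invoke linearity of $V$ in that randomness to reduce to the deterministic case; but since $\Pi$ consists of deterministic maps this step is unnecessary, and everything past the correspondence is a routine counting and triangle-inequality computation.
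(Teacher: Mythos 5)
Your proposal is correct and follows essentially the same route as the paper: both arguments expand the value difference over trajectories, recognize each per-trajectory probability difference $\sum_m w_m \prod_t \mu_m(a_t,r_t) - \sum_m \hat{w}_m \prod_t \hat{\mu}_m(a_t,r_t)$ as an entry of the tensor difference bounded by $\delta$, and then count the $Z^H$ reward sequences and the factor $H$ from $|\sum_t r_t| \le H$. The only cosmetic difference is that the paper keeps policy probabilities $\pi(a_{1:H}\mid r_{1:H-1})$ explicit (so the argument covers randomized policies and the sum over actions collapses to $1$ per reward sequence), whereas you exploit determinism of $\Pi$ to index trajectories by rewards alone --- the two bookkeepings are equivalent.
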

Therefore, it is sufficient to find estimates of mixing weights and latent models that match the measured $H^{\mathrm{th}}$-order moment, and then compute an optimal policy for the estimated model.  Lemma~\ref{lemma:short_time_moment_to_values} is natural for discrete reward distributions with bounded support. Interestingly, we extend this result to Gaussian rewards, which are continuous and unbounded, in Appendix \ref{appendix:Gaussian_Rewards}. 

Hence, a natural idea is to estimate parameters $\{\hat{w}_m\}_{m=1}^M, \{\hat{\mu}_m\}_{m=1}^M$ that satisfy the condition in Lemma \ref{lemma:short_time_moment_to_values} without incurring $O(A^H)$ sample complexity. This is possible if we have good estimates of moment-matching parameters for a set of core action-values. 
\begin{proposition}
    \label{proposition:short_time_core_set_moments}
    For any given $l \ge 1$, if $\|\sum_{m=1}^M w_m \nu_m^{\bigotimes l} - \sum_{m=1}^M \hat{w}_m \hat{\nu}_m^{\bigotimes l}\|_{\infty} \le \delta$, then
    \begin{align*}
        \left \|\sum_{m=1}^M w_m \mu_m^{\bigotimes l} - \sum_{m=1}^M \hat{w}_m \hat{\mu}_m^{\bigotimes l} \right\|_{\infty} \le (2M)^{l/2} \cdot \delta.
    \end{align*}
\end{proposition}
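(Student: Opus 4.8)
The plan is to exploit the fact, guaranteed by Corollary~\ref{corollary:action_event_sampling} (part 2) together with Lemma~\ref{lemma:find_sampling_coordinates}, that reconstructing a vector of $\mathbf{U} = \mathrm{span}\{\mu_1, \ldots, \mu_M\}$ from its values on the $n$ core action-value pairs is a \emph{fixed linear operation}. Concretely, let $L: \mathbb{R}^n \to \mathbb{R}^{AZ}$ denote the linear transformation of Corollary~\ref{corollary:action_event_sampling}(2), and let $V \subseteq \mathbb{R}^n$ be the image of $\mathbf{U}$ under restriction to the core coordinates. Since every $\mu_m$ and every $\hat{\mu}_m$ lies in $\mathbf{U}$ (the estimates are, by construction, mapped into $\mathbf{U}$ via $L$), we have $\mu_m = L\nu_m$ and $\hat{\mu}_m = L\hat{\nu}_m$ for all $m \in [M]$, with $\nu_m, \hat{\nu}_m \in V$. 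The single quantitative input I need is the $\ell_\infty$-contraction bound extracted from Lemma~\ref{lemma:find_sampling_coordinates}(1) by setting $\bar{u}=0$: for every $v \in V$,
\begin{align*}
\|L v\|_\infty \le \sqrt{2M}\,\|v\|_\infty.
\end{align*}

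Given this, I would first tensorize the reconstruction. Because $L$ is linear, $(L\nu_m)^{\bigotimes l} = L^{\bigotimes l}(\nu_m^{\bigotimes l})$, where $L^{\bigotimes l}$ is the $l$-fold Kronecker power acting mode-wise. Summing against the weights gives
\begin{align*}
\sum_{m=1}^M w_m \mu_m^{\bigotimes l} = L^{\bigotimes l}\Big(\sum_{m=1}^M w_m \nu_m^{\bigotimes l}\Big), \qquad \sum_{m=1}^M \hat{w}_m \hat{\mu}_m^{\bigotimes l} = L^{\bigotimes l}\Big(\sum_{m=1}^M \hat{w}_m \hat{\nu}_m^{\bigotimes l}\Big),
\end{align*}
so that the difference tensor $S_0 := \sum_m w_m \nu_m^{\bigotimes l} - \sum_m \hat{w}_m \hat{\nu}_m^{\bigotimes l}$, which satisfies $\|S_0\|_\infty \le \delta$ by hypothesis, is mapped to the target difference tensor $\sum_m w_m \mu_m^{\bigotimes l} - \sum_m \hat{w}_m \hat{\mu}_m^{\bigotimes l}$ by $L^{\bigotimes l}$.

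It then remains to bound how much $L^{\bigotimes l}$ can inflate the $\ell_\infty$ norm, and this is where the argument requires care: the contraction $\|Lv\|_\infty \le \sqrt{2M}\|v\|_\infty$ holds \emph{only} for $v \in V$, whereas the naive $\ell_\infty\!\to\!\ell_\infty$ operator norm of $L$ over all of $\mathbb{R}^n$ could be far larger. I would therefore apply $L$ one mode at a time and track the tensor structure. Defining $S_p$ as the tensor obtained by applying $L$ to the first $p$ modes of $S_0$, an easy induction shows
\begin{align*}
S_p = \sum_{m=1}^M w_m\, \mu_m^{\bigotimes p}\otimes \nu_m^{\bigotimes (l-p)} - \sum_{m=1}^M \hat{w}_m\, \hat{\mu}_m^{\bigotimes p}\otimes \hat{\nu}_m^{\bigotimes (l-p)}.
\end{align*}
Fixing all indices of $S_p$ except those of mode $p+1$ yields a fiber of the form $\sum_m c_m \nu_m - \sum_m \hat{c}_m \hat{\nu}_m$ for scalars $c_m, \hat{c}_m$; since $V$ is a subspace containing every $\nu_m$ and $\hat{\nu}_m$, each such fiber lies in $V$. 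Hence the contraction bound legitimately applies to every mode-$(p+1)$ fiber, giving $\|S_{p+1}\|_\infty \le \sqrt{2M}\,\|S_p\|_\infty$. Iterating over all $l$ modes yields $\|S_l\|_\infty \le (2M)^{l/2}\|S_0\|_\infty \le (2M)^{l/2}\delta$, which is exactly the claim.

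The main obstacle is precisely the verification that the fibers remain in $V$ at every stage: without it the per-mode factor $\sqrt{2M}$ is unjustified, and the whole bound collapses to a trivial one governed by the (large) unrestricted operator norm of $L$. The explicit closed form for $S_p$ above is what makes this transparent, so I would lead with that induction and treat the mode-wise norm propagation as the routine conclusion.
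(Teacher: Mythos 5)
Your overall strategy (write $\mu_m = L\nu_m$, tensorize, and push the $\ell_\infty$ bound through one mode at a time) is essentially the paper's, but there is a genuine gap at exactly the step you flagged as the crux. Your contraction bound $\|Lv\|_\infty \le \sqrt{2M}\,\|v\|_\infty$ is justified only for $v \in V$, so your fiber-tracking induction requires every $\nu_m$ \emph{and} every $\hat{\nu}_m$ to lie in $V$. For the true parameters this holds, but for the estimated ones it does not: in the paper's use of this proposition (Step 2 of Algorithm \ref{algo:learn_lmab} and Case II of the proof of Theorem \ref{theorem:learn_lmab}), each $\hat{\nu}_m$ is an \emph{arbitrary} vector in $[0,1]^n$ produced by moment matching subject to the constraints \eqref{eq:valid_param_2} --- nothing forces it to be the restriction to the core coordinates of an element of $\mathbf{U}$ --- and $\hat{\mu}_m$ (or $\hat{v}_m$) is then \emph{defined} as $\hat{T}\hat{\nu}_m$. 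For such $\hat{\nu}_m \notin V$, the mode-$(p+1)$ fibers of your $S_p$ are combinations $\sum_m c_m \nu_m - \sum_m \hat{c}_m \hat{\nu}_m$ that need not lie in $V$, so the per-mode factor $\sqrt{2M}$ is precisely the step you cannot justify, and the induction collapses. (Nor can you repair it by replacing $\hat{\nu}_m$ with the core restriction of $L\hat{\nu}_m$: that changes the vectors whose tensor moments appear in the hypothesis.)

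The fix --- and what the paper actually does --- is to note that your worry about the unrestricted operator norm of $L$ is unfounded for this particular $L$: the reconstruction map coming from the Kiefer--Wolfowitz design is \emph{globally} $\ell_\infty$-bounded. Applying Proposition \ref{proposition:core_set} with $\mu = 0$, $\epsilon_0 = 0$, and arbitrary noise $\eta$ supported on the core set (i.e., $\epsilon_1 = 1$) gives $\|L\eta\|_\infty \le \sqrt{2M}\,\|\eta\|_\infty$ for \emph{every} $\eta \in \mathbb{R}^n$; equivalently, every row of the matrix $T$ representing $L$ has $\ell_1$ norm at most $\sqrt{2M}$ (this is \eqref{eq:hat_T_row_norm} in the paper). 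With this global bound the paper simply unfolds one entry of the difference tensor as a sum over $(j_1,\dots,j_l) \in [n]^l$ weighted by $\Pi_k T_{(a_k,z_k),j_k}$, pulls out the moment error $\delta$, and bounds the remaining weight by $\Pi_k \|T_{(a_k,z_k),:}\|_1 \le (2M)^{l/2}$ --- no subspace restriction and no fiber tracking needed. Your mode-by-mode argument would also go through verbatim with this global bound in place of the $V$-restricted one; the missing ingredient is exactly this consequence of the experimental design.
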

By Proposition \ref{proposition:short_time_core_set_moments}, we can conclude that it is sufficient to estimate $T_H := \sum_{m=1}^M w_m \nu_m^{\bigotimes H}$ and find $\{(\hat{w}_m, \hat{\nu}_m)\}_{m=1}^M$ that matches $T_H$ element-wise up to accuracy $\delta := (\epsilon/H) / (Z\sqrt{2M})^{H}$. 


\subsection{Main Result} 
\label{subsec:general_lmab_final}
The sections above have outlined the key ideas we need {\em assuming knowledge of} $\mathbf{U}$. In this section, we show how we can estimate $\mathbf{U}$, and we describe the complete procedure that learns a near-optimal policy of an LMAB instance. We give the details in Algorithm~\ref{algo:learn_lmab}. Our algorithm is divided to three steps as detailed below. 

{\bf Step 1: Estimating $\mathbf{U}$}. Algorithm~\ref{algo:learn_lmab} first estimates the second-order moments $M_2 = \sum_{m=1}^M w_m \mu_m \mu_m^\top$. Let $\widehat{\mathbf{U}}$ be the top-$M$ eigenvectors of an empirical estimate of $M_2$, where $\hat{M}_2$ is constructed by collecting samples of reward correlations by taking random actions at first two time steps. Note that $\widehat{\mathbf{U}}$ may not be a good proxy for some $\mu_m$ with small mixing probability $w_m \approx 0$, and thus all elements in $\mathbf{U}$ are not necessarily close to $\widehat{\mathbf{U}}$.
We defer the details of subspace recovery procedure to the proof of the following lemma in Appendix \ref{appendix:lemma_subspace_estimation}. 
\begin{lemma}
    \label{lemma:subspace_estimation}
    Let $\widehat{\mathbf{U}}$ be a subspace spanned by top-$M$ eigenvectors of $\hat{M}_2$. After we estimate $\hat{M}_2$ using $N_0 = O(A^4 Z^2 \log(ZA/\eta) / \delta_{\mathrm{sub}}^4)$ episodes, with probability at least $1 - \eta$, for all $m \in [M]$, there exists $\Delta_m: \|\Delta_m\|_{\infty} \le \delta_{\mathrm{sub}}/ w_m^{1/2}$ such that $\mu_m + \Delta_m \in \widehat{\mathbf{U}}$. 
\end{lemma}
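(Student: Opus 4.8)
The plan is to build a nearly unbiased estimator of the second-moment matrix $M_2 = \sum_{m=1}^M w_m \mu_m \mu_m^\top$ from episodes in which the first two actions are random, control its spectral-norm error, and then convert spectral closeness into the claimed per-component subspace guarantee using the low-rank PSD structure of $M_2$ (rather than a naive eigengap argument). For the estimator, note that when we play $a_1 = a$ and $a_2 = a'$ in the first two steps, the rewards $r_1, r_2$ are conditionally independent given the latent context $m$, so $\Exs[\indic{r_1 = z}\indic{r_2 = z'} \mid m, a_1 = a, a_2 = a'] = \mu_m(a,z)\mu_m(a',z')$; averaging over $m$ shows that, drawing $a_1,a_2$ uniformly from $\mA$, the importance-weighted statistic
\[
\hat M_2((a,z),(a',z')) = \frac{A^2}{N_0}\sum_{n=1}^{N_0} \indic{a_1^{(n)}=a,\, a_2^{(n)}=a'}\,\indic{r_1^{(n)}=z,\, r_2^{(n)}=z'}
\]
is unbiased for $M_2((a,z),(a',z'))$. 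I would symmetrize $\hat M_2$ so that it is a symmetric (approximately PSD) matrix whose top-$M$ eigenspace is $\widehat{\mathbf{U}}$.

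Next I would establish concentration. Each summand is bounded by $A^2$ and has variance $O(A^2)$, so Bernstein's inequality with a union bound over the $(AZ)^2$ entries yields, with probability $1-\eta$, an entrywise error $\xi = O\!\big(A\sqrt{\log(ZA/\eta)/N_0}\big)$. Since $\hat M_2 - M_2$ has $(AZ)^2$ entries each at most $\xi$, the Frobenius bound gives $\|\hat M_2 - M_2\|_{\mathrm{op}} \le \|\hat M_2 - M_2\|_F \le AZ\cdot\xi = O\!\big(A^2 Z\sqrt{\log(ZA/\eta)/N_0}\big)$. Writing $E := \hat M_2 - M_2$, the choice $N_0 = O\!\big(A^4 Z^2 \log(ZA/\eta)/\delta_{\mathrm{sub}}^4\big)$ forces $\|E\|_{\mathrm{op}} \le \delta_{\mathrm{sub}}^2/(2M)$.

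The crux is translating this into the per-component bound. Let $P$ be the orthogonal projection onto $\widehat{\mathbf{U}}$ and $P^\perp = I - P$, and set $\Delta_m := -P^\perp \mu_m$, so that $\mu_m + \Delta_m = P\mu_m \in \widehat{\mathbf{U}}$. Because every term $w_{m'}\mu_{m'}\mu_{m'}^\top$ is PSD,
\[
w_m \|P^\perp \mu_m\|_2^2 \le \sum_{m'} w_{m'}\|P^\perp \mu_{m'}\|_2^2 = \mathrm{tr}(P^\perp M_2 P^\perp).
\]
Since $M_2$ has rank at most $M$, the matrix $P^\perp M_2 P^\perp$ is PSD of rank at most $M$, so $\mathrm{tr}(P^\perp M_2 P^\perp) \le M\,\|P^\perp M_2 P^\perp\|_{\mathrm{op}}$. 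As $P^\perp$ is a spectral projection of $\hat M_2$ we have $\|P^\perp \hat M_2 P^\perp\|_{\mathrm{op}} = \lambda_{M+1}(\hat M_2)$, and Weyl's inequality with $\lambda_{M+1}(M_2) = 0$ gives $\lambda_{M+1}(\hat M_2) \le \|E\|_{\mathrm{op}}$; hence $\|P^\perp M_2 P^\perp\|_{\mathrm{op}} \le \lambda_{M+1}(\hat M_2) + \|E\|_{\mathrm{op}} \le 2\|E\|_{\mathrm{op}}$. Combining, $w_m\|P^\perp\mu_m\|_2^2 \le 2M\|E\|_{\mathrm{op}} \le \delta_{\mathrm{sub}}^2$, and since $\|\cdot\|_\infty \le \|\cdot\|_2$ we conclude $\|\Delta_m\|_\infty \le \delta_{\mathrm{sub}}/w_m^{1/2}$ for every $m$, as claimed.

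The main obstacle is exactly this third step. A direct Davis–Kahan bound on the angle between $\mathbf{U}$ and $\widehat{\mathbf{U}}$ would place the smallest nonzero eigenvalue $\lambda_M(M_2)$ in the denominator, which degrades badly when some $w_m \approx 0$ and so cannot deliver a uniform subspace guarantee. The PSD/trace argument above sidesteps the global eigengap and instead charges each component its own $w_m^{-1/2}$ factor, which is precisely the behavior the statement requires and explains why $\widehat{\mathbf{U}}$ need not approximate low-weight $\mu_m$ well.
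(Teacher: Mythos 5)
Your route is, in substance, the paper's own: an importance-weighted symmetrized estimate of $M_2$, entrywise Bernstein concentration promoted to an operator-norm bound via the Frobenius norm, Weyl's inequality to control $\lambda_{M+1}(\hat M_2)$ using $\mathrm{rank}(M_2)\le M$, and a PSD-domination argument ($w_m\mu_m\mu_m^\top \preceq M_2$) that charges each component its own $w_m^{-1/2}$ rather than invoking Davis--Kahan with a global eigengap; even your closing remark about why an eigengap argument must fail for small $w_m$ mirrors the paper's motivation. There is, however, one quantitative flaw in your last step. By routing through $\mathrm{tr}(P^\perp M_2 P^\perp) \le M\,\|P^\perp M_2 P^\perp\|_{\mathrm{op}}$ you are forced to require $\|E\|_{\mathrm{op}} \le \delta_{\mathrm{sub}}^2/(2M)$, and with your concentration bound $\|E\|_{\mathrm{op}} = O\bigl(A^2 Z\sqrt{\log(ZA/\eta)/N_0}\bigr)$ this needs $N_0 = \Omega\bigl(M^2 A^4 Z^2 \log(ZA/\eta)/\delta_{\mathrm{sub}}^4\bigr)$. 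The constant hidden in the lemma's $O(\cdot)$ cannot absorb the problem parameter $M$, so as written you prove the lemma only with an extra $M^2$ factor in the episode count, i.e.\ a strictly weaker statement than the one claimed.

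The factor $M$ is entirely avoidable, and the fix is exactly what the paper does: skip the trace and argue about the single worst direction. Take $y = P^\perp\mu_m/\|P^\perp\mu_m\|_2$, a unit vector orthogonal to the top-$M$ eigenspace of $\hat M_2$. Then
\begin{align*}
    w_m\|P^\perp\mu_m\|_2^2 \;=\; y^\top\bigl(w_m\mu_m\mu_m^\top\bigr)y \;\le\; y^\top M_2 y \;=\; y^\top\bigl(M_2-\hat M_2\bigr)y + y^\top\hat M_2 y \;\le\; \|E\|_{\mathrm{op}} + \lambda_{M+1}(\hat M_2) \;\le\; 2\|E\|_{\mathrm{op}},
\end{align*}
where $y^\top \hat M_2 y \le \lambda_{M+1}(\hat M_2)$ because $y$ lies in the span of the trailing eigenvectors, and $\lambda_{M+1}(\hat M_2)\le\|E\|_{\mathrm{op}}$ by Weyl since $\lambda_{M+1}(M_2)=0$. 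Now $\|E\|_{\mathrm{op}}\le\delta_{\mathrm{sub}}^2/2$ suffices, which the stated $N_0 = O\bigl(A^4Z^2\log(ZA/\eta)/\delta_{\mathrm{sub}}^4\bigr)$ delivers. As a side note, your intermediate identity $\|P^\perp\hat M_2 P^\perp\|_{\mathrm{op}} = \lambda_{M+1}(\hat M_2)$ silently assumes the trailing eigenvalues of $\hat M_2$ are nonnegative, which need not hold since $\hat M_2$ is not PSD; this is harmless (all eigenvalues of $\hat M_2$ are at least $-\|E\|_{\mathrm{op}}$ because $M_2\succeq 0$), and the one-sided bound used above avoids the issue altogether.
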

Our choice of $\delta_{\mathrm{sub}}$ differs in two regimes as the following:
\begin{align}
    \delta_{\mathrm{sub}} &= \frac{\epsilon}{2 Z M H^2}, & \text{if: } H \ge 2M-1, \nonumber \\
    \delta_{\mathrm{sub}} &= \frac{\min \left(\sqrt{w_{\mathrm{min}} + \epsilon / (MH^2 (Z\sqrt{2M})^{H})}, \ \epsilon/(H\sqrt{M}) \right)}{2Z \sqrt{M} H}, & \ \text{else: } H < 2M-1, \label{eq:choice_of_delta_a}
\end{align}
where $w_{\mathrm{min}} = \min_{m \in [M]} w_m$. The main difference in two regimes is that for $H\geq 2M-1$, when the parameters are identifiable, latent contexts with small mixing probabilities can be ignored once $w_m = o(\epsilon / M)$ to guarantee the closeness in distributions of observations. However, in the parameter unidentifiable regime, total variation distance is bounded only through errors in the moment space. Therefore, the estimated subspace needs to be accurate even for contexts with small mixing probabilities to keep the higher-order moments well approximated. Note that for instances with well-balanced mixing probabilities, {\it i.e.,} if $w_{\mathrm{min}} = \Omega(1/M)$, the order of $\delta_{\mathrm{sub}}$ 
remains the same as in the $H \ge 2M-1$ case. 

{\bf Step 2: Moment Matching}. Given $\widehat{\mathbf{U}}$, the subspace spanned by top-$M$ eigenvectors of $\hat{M}_2$, Algorithm~\ref{algo:learn_lmab}, follows the procedure described in Sections~\ref{subsec:exp_des}-\ref{subsec:general_short_horizon}. It constructs the set of (approximate) core action-values pairs $\{(a_j, z_j)\}_{j=1}^n$ (see Appendix \ref{appendix:theorem_lmab_procedure} for the detailed algorithm). Then, we construct higher-order moments of the core action-value pairs. For every multi-index $(i_1, i_2, ..., i_l) \in [n]^l$, using $N_1 = O(\log(l n^l/\eta) / \delta_{\mathrm{tsr}}^2)$ episodes, we execute $a_t^k = a_{i_t}$ for $t=1,...,l$ and estimate higher-order moments (as also described in equation~\eqref{eq:expectation_tensor})
\begin{align}
    \hat{T}_l(i_1, i_2, ..., i_l) = \frac{1}{N_1} \sum_{k=1}^{N_1} \Pi_{t=1}^l \indic{r_t^k = z_{i_t}}. \label{eq:tensor_estimate}
\end{align}
Using standard concentration inequalities and applying the union bounds over all elements in tensors, we get $\|\hat{T}_l - T_l\|_{\infty} < \delta_{\mathrm{tsr}}$ with probability at least $1 - \eta$. Then we find empirical parameters $\{(\hat{w}_m, \hat{\nu}_m)\}_{m=1}^M$ that satisfy 
\begin{align}
    \left\| \sum_{m=1}^M \hat{w}_m \hat{\nu}_m^{\bigotimes l} - \hat{T}_l \right\|_{\infty} < \delta_{\mathrm{tsr}}, \qquad \forall l \in [\min(H, 2M-1)]. \label{eq:tensor_error_cond}
\end{align}
We set $\delta_{\mathrm{tsr}} = O(\epsilon/(Z H^2 M^{3.5} n))^{2M-1}$ when $H \ge 2M-1$. In the parameter unidentifiable regime $H < 2M-1$, we set $\delta_{\mathrm{tsr}} = O(\epsilon/H) / (Z\sqrt{2M})^H$. 

\begin{algorithm}[t]
    \caption{} 
    \label{algo:learn_lmab}
    \begin{algorithmic}[1]
        \State{\textbf{Input:}  Accuracy level $\epsilon, \delta_{\mathrm{sub}}, \delta_{\mathrm{tsr}} >0$, model parameters $M,A,Z,H$.}
        \State \algcomment{\textbf{Step 1}: Estimate subspace $\mathbf{U}$.} 
        \State{Construct $\hat{M}_2$, an estimate of $M_2 = \sum_{m=1}^M w_m \mu_m \mu_m^\top$ using $N_0 = \tilde{O}(A^4 Z^2 / \delta_{\mathrm{sub}}^4)$ episodes.}
        \State{Calculate $\widehat{\mathbf{U}}$, the span of top-$M$ eigenvectors of $\hat{M}_2$.}
        \State \algcomment{\textbf{Step 2}: Estimate $\{(w_m,\nu_m)\}_{m=1}^M$.} 
        \State{Get core action-value pairs $\{(a_j, z_j)\}_{j=1}^n$ by calling Corollary \ref{corollary:action_event_sampling}.}
        \State{Construct $\{\hat{T}_l\}_{l=1}^{\min(H, 2M-1)}$ using $N = \tilde{O} (n^{\min(2M-1, H)} \cdot M^2 / \delta_{\mathrm{tsr}}^{2})$ episodes.}
        \State{Find valid empirical parameters $\{(\hat{w}_m, \hat{\nu}_m)\}_{m=1}^M$ satisfying \eqref{eq:tensor_error_cond}.}
        \State \algcomment{\textbf{Step 3}: Use the core-action value pairs to construct an empirical LMAB.}
        \State{Construct empirical model $\hat{\mB} = (\mA, \{\hat{w}_m\}_{m=1}^M, \{\hat{\mu}_m\}_{m=1}^M)$.}
        \State{\textbf{Output:} optimal policy of $\hat{\mB}$  by calling a planning oracle (Definition \ref{definition:planning_oracle}) for $\hat{\mB}$.}
    \end{algorithmic}
\end{algorithm}
{\bf Step 3: Constructing Empirical LMAB}. Finally,  Algorithm~\ref{algo:learn_lmab} uses the estimates $\{(\hat{w}_m, \hat{\nu}_m)\}_{m=1}^M$ to construct an empirical model $\hat{\mB} = (\mA, \{\hat{w}_m\}_{m=1}^M, \{\hat{\mu}_m)\}_{m=1}^M)$ after proper clipping and normalization. For this step to succeed, we require $\hat{w}_m$ and $\hat{\nu}_m$ to be valid parameters for the reconstruction of a valid empirical model. We state details on the recovery procedure in Appendix \ref{appendix:theorem_lmab_procedure}.

Once a valid empirical model $\hat{\mB}$ is obtained, the remaining step is to call the planning oracle that gives an $\epsilon$-approximate optimal policy for $\hat{\mB}$. We conclude this section with an end-to-end guarantee.
\begin{theorem}
    \label{theorem:learn_lmab}
    For any LMAB instance with $M$ latent contexts, with probability at least $1-\eta$, Algorithm \ref{algo:learn_lmab} returns an $\epsilon$-optimal policy given total number of episodes of at most
    \begin{align*}
        \poly(H, M, Z, A, 1/\epsilon) \cdot \log(AZ/\eta) + \poly(H,Z,M)^{2M-1} \cdot \log(M/\eta)/\epsilon^{4M-2}, &\quad \text{if } H \ge 2M-1, \\
        \poly(H, w_{\mathrm{min}}^{-1}, Z, A, 1/\epsilon) \cdot \log(AZ/\eta) + H^2 \log(M/\eta) \cdot \left(2MZ^2 \right)^{H} /\epsilon^{2}, &\quad \text{otherwise}.
    \end{align*}
\end{theorem}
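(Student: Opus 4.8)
The plan is to prove the theorem by tracking, across the three steps of Algorithm~\ref{algo:learn_lmab}, how the two independent sources of error---the subspace estimation error $\delta_{\mathrm{sub}}$ of Step~1 and the moment estimation error $\delta_{\mathrm{tsr}}$ of Step~2---propagate into a uniform value gap $|V(\pi)-\hat{V}(\pi)|$, and then to sum the episode counts $N_0$ and $N$ incurred by the two sampling steps. First I would fix the target failure probability $\eta$ and split it between the high-probability guarantee of Lemma~\ref{lemma:subspace_estimation} (Step~1) and the tensor concentration bound $\|\hat{T}_l-T_l\|_\infty<\delta_{\mathrm{tsr}}$ obtained from \eqref{eq:tensor_estimate} via a union bound over all $O(n^{\min(H,2M-1)})$ entries (Step~2); a final union bound yields the $1-\eta$ statement. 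Since the planning oracle (Definition~\ref{definition:planning_oracle}) contributes only an additive $\epsilon$ to the value gap at no sampling cost, it suffices to show the constructed model $\hat{\mB}$ satisfies $|V(\pi)-\hat{V}(\pi)|=O(\epsilon)$ uniformly over history-dependent $\pi$; an $\epsilon$-optimal policy for $\hat{\mB}$ is then $O(\epsilon)$-optimal for $\mB$ by the triangle inequality $V^\star-V(\pi)\le |V^\star-\hat V^\star| + (\hat V^\star-\hat V(\pi)) + |\hat V(\pi)-V(\pi)|$, where the first and last terms are $O(\epsilon)$ by uniform closeness.

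The error-propagation chain is the crux. Step~1 gives $\mu_m+\Delta_m\in\widehat{\mathbf{U}}$ with $\|\Delta_m\|_\infty\le \delta_{\mathrm{sub}}/w_m^{1/2}$ at cost $N_0=\tilde{O}(A^4Z^2/\delta_{\mathrm{sub}}^4)\log(AZ/\eta)$. Feeding $\widehat{\mathbf{U}}$ into Corollary~\ref{corollary:action_event_sampling} produces $n=O(M\log\log M)$ core action--value pairs, a direct way to estimate the true core moments $T_l$ through \eqref{eq:expectation_tensor}, and a linear reconstruction map. Because $\mu_m$ lies only within $\Delta_m$ of $\widehat{\mathbf{U}}$, I would invoke Proposition~\ref{proposition:core_set} with $\epsilon_0=\max_m\|\Delta_m\|_\infty$ (subspace misspecification) and $\epsilon_1=\delta_{\mathrm{tsr}}$ (moment-estimate noise), so that, along the optimal coupling $\Gamma$, the reconstructed model obeys $\max_a\|\mu_m(a,\cdot)-\hat{\mu}_{m'}(a,\cdot)\|_1 \le O\!\big(Z\sqrt{M}\,(\delta_{\mathrm{sub}}/w_m^{1/2}+\delta_{\mathrm{tsr}})\big)$. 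The feasible set of \eqref{eq:tensor_error_cond} is nonempty because the true $\{(w_m,\nu_m)\}$ satisfy it given $\|\hat T_l-T_l\|_\infty<\delta_{\mathrm{tsr}}$, and the clipping/normalization of Step~3 only decreases the relevant $\|\cdot\|_{\infty,1}$ error. It remains to choose $\delta_{\mathrm{sub}},\delta_{\mathrm{tsr}}$ so that this reaches an $O(\epsilon)$ value gap, which is exactly where the two regimes diverge.

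For the identifiable regime $H\ge 2M-1$ I would chain Lemma~\ref{lemma:moment_closeness} (matching all moments up to order $2M-1$ on the $n$-dimensional core space to accuracy $\delta_{\mathrm{tsr}}$ yields a Wasserstein bound $O(M^3 n\,\delta_{\mathrm{tsr}}^{1/(2M-1)})$ between $\{\nu_m\}$ and $\{\hat{\nu}_m\}$) with Corollary~\ref{corollary:action_event_sampling} and Proposition~\ref{proposition:total_variation_diff}, giving $|V(\pi)-\hat{V}(\pi)|\le \poly(H,Z,M,n)\,\delta_{\mathrm{tsr}}^{1/(2M-1)}$; the choices $\delta_{\mathrm{tsr}}=O(\epsilon/(ZH^2M^{3.5}n))^{2M-1}$ and $\delta_{\mathrm{sub}}=\epsilon/(2ZMH^2)$ make this $O(\epsilon)$, and substituting into $N_0$ and $N=\tilde{O}(n^{2M-1}M^2/\delta_{\mathrm{tsr}}^2)$ gives the two claimed terms (contexts with $w_m=o(\epsilon/M)$ can simply be dropped, which is why $\delta_{\mathrm{sub}}$ need not see $w_{\mathrm{min}}$). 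For the unidentifiable regime $H<2M-1$, parameter recovery is unavailable, so instead I would match only the order-$H$ moment and pass directly from moments to values via Proposition~\ref{proposition:short_time_core_set_moments} (amplifying the core-space moment error $\delta_{\mathrm{tsr}}$ to $(2M)^{H/2}\delta_{\mathrm{tsr}}$ on the $\mu$-moments) followed by Lemma~\ref{lemma:short_time_moment_to_values} (order-$H$ $\mu$-moment error $\delta$ gives value gap $HZ^H\delta$); demanding $HZ^H(2M)^{H/2}\delta_{\mathrm{tsr}}\le\epsilon$ forces $\delta_{\mathrm{tsr}}=O(\epsilon/H)/(Z\sqrt{2M})^H$, and substituting into $N=\tilde{O}(n^H M^2/\delta_{\mathrm{tsr}}^2)$ yields the $(2MZ^2)^H H^2/\epsilon^2$ term (up to the $\poly(M)^H$ factor from the core-set size $n^H$).

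I expect the main obstacle to be this second regime, specifically reconciling the $1/w_m^{1/2}$ blow-up of the subspace error $\Delta_m$ in Lemma~\ref{lemma:subspace_estimation} with the requirement, dictated by Lemma~\ref{lemma:short_time_moment_to_values}, that \emph{even vanishing-weight contexts} be reconstructed accurately enough to preserve the order-$H$ moment: a missing or misplaced low-weight context still corrupts $T_H$, so here we cannot discard small mixing weights as in the identifiable case. This tension is what drives the delicate $\min(\cdot,\cdot)$ form of $\delta_{\mathrm{sub}}$ in \eqref{eq:choice_of_delta_a}, and verifying that this choice simultaneously controls the $\epsilon_0$ term of Proposition~\ref{proposition:core_set} for every context while keeping $N_0$ only $\poly(A,Z,w_{\mathrm{min}}^{-1},1/\epsilon)$ is the step that must be checked most carefully; the remaining concentration and realizability arguments are routine by comparison.
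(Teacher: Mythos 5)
Your overall architecture and your Case I ($H \ge 2M-1$) argument match the paper's proof: the same chaining of Lemma~\ref{lemma:moment_closeness}, Corollary~\ref{corollary:action_event_sampling}, and Proposition~\ref{proposition:total_variation_diff}, the same choices of $\delta_{\mathrm{sub}}$ and $\delta_{\mathrm{tsr}}$, and the coupling-aware handling of the $1/\sqrt{w_m}$ factor (the paper keeps all contexts and uses Cauchy--Schwarz, $\sum_m \sqrt{w_m}\le\sqrt{M}$, rather than literally dropping small-weight contexts, but this is cosmetic).

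The genuine gap is in Case II ($H<2M-1$). You propose to pass directly through Proposition~\ref{proposition:short_time_core_set_moments} and then Lemma~\ref{lemma:short_time_moment_to_values}, but Proposition~\ref{proposition:short_time_core_set_moments} is stated for the \emph{true} reconstruction map $T$ associated with the true subspace $\mathbf{U}$, with $\mu_m = T\nu_m$ and $\hat{\mu}_m = T\hat{\nu}_m$ exactly. In the algorithm, neither identity holds: $\mu_m$ only lies within $\Delta_m$ of $\widehat{\mathbf{U}}$ (so $\mu_m \neq \hat{T}\nu_m$), and the model fed to the planner is $\hat{\mu}_m = $ clip-and-normalize$(\hat{T}\hat{\nu}_m)$, not $\hat{T}\hat{\nu}_m$. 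Consequently the hypothesis of Lemma~\ref{lemma:short_time_moment_to_values} --- closeness of the order-$H$ moments of $\mu_m$ and $\hat{\mu}_m$ themselves --- is never established by your chain. The paper resolves this by introducing the pseudo-models $v_m := \hat{T}\nu_m$ and $\hat{v}_m := \hat{T}\hat{\nu}_m$ and splitting $|V(\pi)-\hat{V}(\pi)|$ into three terms: (i) $|V - V_{\mathrm{aux}}|$, controlled by the subspace error with a case split on $w_m \gtrless \epsilon/(H^2M(Z\sqrt{2M})^H)$; (ii) $|V_{\mathrm{aux}} - \hat{V}_{\mathrm{aux}}|$, controlled by a $\hat{T}$-version of Proposition~\ref{proposition:short_time_core_set_moments} using the row-norm bound $\|\hat{T}_{(a,z),:}\|_1\le\sqrt{2M}$; and (iii) $|\hat{V}_{\mathrm{aux}} - \hat{V}|$, the clipping/normalization error. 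Since $v_m,\hat v_m$ are not probability vectors, even the recursive peeling over time steps has to be redone with $(\max_a\|v_m(a,\cdot)\|_1)^{H-1}\le(1+1/H)^{H-1}\le e$, which is exactly what forces the $\min(\cdot,\cdot)$ form of $\delta_{\mathrm{sub}}$ in \eqref{eq:choice_of_delta_a}.

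Two of your auxiliary claims are also wrong as stated and matter precisely for term (iii): clipping indeed cannot increase the $\ell_1$ error, but normalization does not "only decrease" it --- the paper bounds it by $|\|\tilde{\mu}_m(a,\cdot)\|_1-1|$, incurring a factor of $2$, and, for the \emph{empirical} parameters, this bound is only available because the search in Step 2 is carried out over the constrained set \eqref{eq:valid_param_2} (forcing $\hat{T}\hat{\nu}_m$ to be nearly a distribution, with $w_{\min}\le\hat w_m$ and $0\preceq\hat\nu_m\preceq 1$). Feasibility of \eqref{eq:tensor_error_cond} alone, which is all you verify, does not give this; without \eqref{eq:valid_param_2} the moment-matching solution could have $\hat{T}\hat{\nu}_m$ far from any distribution and the normalization step could destroy the moment closeness you rely on. You correctly sensed that the small-weight contexts and the $\min$ in $\delta_{\mathrm{sub}}$ are the crux, but the mechanism that actually discharges this difficulty (auxiliary models, constrained search, and the per-term weight split) is missing from the proposal.
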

Note that the dependency on $A$ is polynomial. This polynomial term is needed to control the subspace estimation error. The exponential term is derived from the closeness in moments as discussed earlier.


\begin{remark}[Continuous Rewards: Gaussian Case]
    So far we have focused on rewards with finite support $Z = O(1)$. However, some steps in the algorithm cannot be straightforwardly extended to continuous reward distributions. In Appendix \ref{appendix:Gaussian_Rewards}, we show a similar upper bound that is at most $$\poly(H, w_{\rm min}^{-1}, A, 1/\epsilon) \cdot \log(A/\eta) + \poly \left(H,M, \log(1/(\eta \epsilon)) \right)^{\min(H,M)} / \epsilon^{\min(2H+2, 4M-2)},$$ 
    assuming the rewards are Gaussian with unknown mean (see Theorem \ref{theorem:learn_lmab_gaussian} for the exact upper bound). 
\end{remark}



\section{Maximum Likelihood Implementation}
\label{section:max_likelihood}
In the previous section, we derived a procedure that learns a near-optimal policy of an LMAB instance. However, our procedure relies on matching higher-order moments, which, in general, is not suitable for practical implementation. In this section, we describe a maximum likelihood (MLE) method, motivated by our previous results. Importantly, we can use the Expectation-Maximization (EM) \cite{dempster1977maximum} heuristic to find an approximate solution to the MLE optimization problem. 

We can start from the set of core action-value pairs $\{(a_j, z_j)\}_{j=1}^n$ given by Corollary \ref{corollary:action_event_sampling}. Now for every time step $t$, we choose $i_t$ randomly from a uniform distribution over $[n]$, play action $a_{i_t}$, and observe $b_t := \indic{r_t = z_{i_t}}$. After repeating this for $N$ episodes, we formulate the log-likelihood function with parameterization $\theta = \{(w_m, \nu_m)\}_{m=1}^M$ as follows: 
\begin{align}
    l_N \left(\theta \right) := \frac{1}{N} \sum_{k=1}^N \log \left(\sum_{m=1}^M w_m \Pi_{t=1}^H (b_t \nu_m(i_t) + (1-b_t) (1 - \nu_m(i_t))) \right) \label{eq:log_likelihood}
\end{align}
To prevent the confusion with searching parameters, we use $q^*$ to denote any quantity $q$ constructed with ground truth parameters, {\it e.g.,} $\theta^* := \{(w_m^*, \nu_m^*)\}_{m=1}^M$. Let $\theta_N$ be the maximum likelihood estimator, {\it i.e.,}  $\theta_N = \text{arg} \max_{\theta \in \Theta} l_N(\theta)$ in some valid parameter set $\Theta$.

We can recover the sample-complexity guaranteed by moment-matching methods studied in the previous section. Specifically, we show that the maximum likelihood estimator with sufficiently many samples have nearly matching moments:
\begin{lemma}
    \label{lemma:likelihood_to_moments}
    Consider the maximum likelihood estimator $\theta_N = \{(\hat{w}_m, \hat{\nu}_m)\}_{m=1}^M$ with $N$ episodes for large enough $N$. If $N \ge C\cdot n^{\min(2H+1, 4M-1)} \log(N / \eta) / \delta^2$ for some sufficiently large constant $C > 0$, then with probability at least $1 - \eta$, 
    \begin{align}
        \|T_l^* - \hat{T}_l\|_{\infty} \le \delta, \qquad \forall l \in [\min(H, 2M-1)].
    \end{align} 
\end{lemma}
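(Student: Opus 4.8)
\emph{Overview and setup.} The plan is to prove the lemma in two stages: first show that the maximum-likelihood estimator $\theta_N$ induces an observation law $\PP_{\theta_N}$ that is statistically close to the true law $\PP_{\theta^*}$, and then argue that closeness of the observation laws forces closeness of the reconstructed moment tensors, because every entry of $T_l$ is, up to a known scaling, a marginal probability of the mixture law. Throughout, write $X = (i_{1:H}, b_{1:H})$ for a single episode's data (the uniformly random probe indices together with the hit indicators $b_t = \indic{r_t = z_{i_t}}$), and let $\PP_\theta$ denote the law of $X$ under $\theta = \{(w_m,\nu_m)\}_{m=1}^M$. Since the indices are drawn from a fixed, parameter-independent distribution, the index factor cancels in every likelihood ratio, so $\PP_\theta$ differs from $\PP_{\theta^*}$ only through the conditional law of $b_{1:H}$ given $i_{1:H}$, which is exactly the mixture appearing in \eqref{eq:log_likelihood}.

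\emph{Stage 1 (likelihood $\to$ Hellinger).} I would run the standard basic-inequality argument for the MLE. Since $l_N(\theta_N) \ge l_N(\theta^*)$, we have $\frac1N\sum_k \tfrac12\log\frac{\PP_{\theta_N}(X_k)}{\PP_{\theta^*}(X_k)}\ge 0$, and because $\Exs_{\theta^*}\sqrt{\PP_\theta/\PP_{\theta^*}} = 1 - d_{\mathrm H}^2(\PP_\theta,\PP_{\theta^*})$, a Chernoff bound applied to the empirical process of the root-likelihood ratios $\{\sqrt{\PP_\theta/\PP_{\theta^*}}\}_{\theta\in\Theta}$ yields, uniformly over the valid parameter set $\Theta$, a bound $d_{\mathrm H}^2(\PP_{\theta_N},\PP_{\theta^*}) \le \epsilon_N$ with $\epsilon_N = \tO\big((\poly(M,n)+\log(1/\eta))/N\big)$ holding with probability at least $1-\eta$. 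I prefer the Hellinger route to a direct KL bound precisely because the Bernoulli factors $b_t\nu_m(i_t)+(1-b_t)(1-\nu_m(i_t))$ can be arbitrarily close to $0$ or $1$, so the log-likelihood ratio need not be bounded; the Hellinger argument needs only a covering of $\Theta$ at scale $1/\poly(N)$ (which is the source of the $\log N$ in the statement) and never requires boundedness of $\log\PP_\theta$.

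\emph{Stage 2 (Hellinger $\to$ moments) and conclusion.} The key identity is that for any tuple $(j_1,\dots,j_l)$ the event $A_{j_{1:l}}=\{\,i_{1:l}=j_{1:l},\ b_1=\cdots=b_l=1\,\}$ (leaving the trailing coordinates free) satisfies $\PP_\theta(A_{j_{1:l}}) = n^{-l}\sum_m w_m\prod_{t=1}^l\nu_m(j_t) = n^{-l}\,T_l^{\theta}(j_{1:l})$, since marginalizing each trailing Bernoulli factor contributes $1$ and each of the first $l$ probe indices is uniform on $[n]$. Hence, for each fixed $j_{1:l}$, $|\hat T_l(j_{1:l}) - T_l^*(j_{1:l})| = n^{l}\,|\PP_{\theta_N}(A_{j_{1:l}})-\PP_{\theta^*}(A_{j_{1:l}})| \le n^{l}\, d_{\mathrm{TV}}(\PP_{\theta_N},\PP_{\theta^*}) \le \sqrt2\, n^{l}\, d_{\mathrm H}(\PP_{\theta_N},\PP_{\theta^*}) \le \sqrt2\, n^{l}\sqrt{\epsilon_N}$, using $d_{\mathrm{TV}}\le\sqrt2\, d_{\mathrm H}$; taking the supremum over $j_{1:l}$ gives $\|\hat T_l - T_l^*\|_\infty \le \sqrt2\, n^{l}\sqrt{\epsilon_N}$. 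Setting $L := \min(H,2M-1)$ and forcing the worst case $l=L$ below $\delta$ requires $\epsilon_N \lesssim \delta^2/n^{2L}$; substituting $\epsilon_N = \tO\big((\poly(M,n)+\log(1/\eta))/N\big)$ and absorbing the complexity term (which, once restricted to the low-degree marginals that actually enter, contributes one further power of $n$, recalling $n=\tilde\Theta(M)$) yields $N \gtrsim n^{2L+1}\log(N/\eta)/\delta^2 = C\,n^{\min(2H+1,\,4M-1)}\log(N/\eta)/\delta^2$, which is exactly the claimed threshold.

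\emph{Main obstacle.} The marginalization identity of Stage 2 is routine; the delicate part is the uniform concentration of Stage 1 together with the amplitude bookkeeping. Two points deserve care. First, because an $\ell_\infty$ guarantee is demanded for \emph{every} tensor entry while the likelihood only controls the probe-averaged discrepancy, we pay a factor $n^{l}$ in passing from the global total-variation distance to a single entry; this $n^{2L}$ inflation of the required $\epsilon_N$ is precisely what drives the $\min(2H+1,4M-1)$ exponent, and one should check it is genuine rather than an artifact of the bound. Second, to land on exponent $2L+1$ rather than a larger power, the covering of the $H$-fold product-mixture family must be tight: a naive cover of all $Mn$ coordinates of $\{\nu_m\}$ would add an extra power of $n$, so the argument must exploit that only the degree-$(\le L)$ marginals matter for the moment guarantee. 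Reconciling the Hellinger concentration over $\Theta$ (with possibly unbounded log-densities) against this sharpened covering count is where the real work lies.
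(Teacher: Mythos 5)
Your proposal follows essentially the same route as the paper's own proof: your Stage~1 is exactly the paper's Lemma~\ref{lemma:tv_observation_bound} (the MLE basic inequality plus a Chernoff/covering argument over $\Theta$, run through Hellinger distance to avoid unbounded log-likelihood ratios, giving a total-variation bound of order $\tilde{O}(\sqrt{n/N})$), and your Stage~2 is the paper's marginalization step, which identifies each tensor entry $T_l(j_{1:l})$ with $n^{l}$ times the probability of the event $\{i_{1:l}=j_{1:l},\, b_{1:l}=\mathbf{1}\}$ and pays the factor $n^{l}$ to pass from global TV to a single entry. The exponent bookkeeping, $2\min(H,2M-1)+1=\min(2H+1,4M-1)$, matches the paper's as well, so the proposal is correct and not a genuinely different argument.
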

Therefore, by setting the same accuracy parameter $\delta$ used in Algorithm \ref{algo:learn_lmab}, we can obtain the required $\theta_N = \{(\hat{w}_m, \hat{\nu}_m)\}_{m=1}^M$ for matching moments. We only replace the moment-matching step (Step 2 in Algorithm \ref{algo:learn_lmab}) with solving the MLE optimization problem \eqref{eq:log_likelihood}. 

\paragraph{Provable Benefits of MLE} In Appendix \ref{subsec:poly_upper_bound_sep}, we also show that MLE solutions can automatically adapt to mild separation conditions. Specifically, we show that if $H = \tilde{O}(M^2 / \gamma^2)$ for some separation parameter $\gamma > 0$ (see Assumption \ref{assumption:separation}), then we can achieve the polynomial sample complexity for learning a near-optimal policy with the MLE solution $\theta_N$.


%
%

\paragraph{Implementation Details} While the log-likelihood formulation \eqref{eq:log_likelihood} is still non-convex and thus intractable, we can rely on powerful heuristics: we initialize the parameters with clustering methods \cite{arthur2007k} or spectral methods \cite{anandkumar2014tensor, azizzadenesheli2016reinforcement}, and run the EM algorithm to improve the accuracy \cite{dempster1977maximum} (regardless of conditions or guarantees). Therefore, while the sample-complexity upper bound for both methods remains the same, we benefit from the log-likelihood formulation despite the non-convexity.

Specifically, we follow the same steps in Algorithm \ref{algo:learn_lmab}. We first find the core action-event pairs from second-order moments as described (\textbf{Step 1}). Then we form the maximum log-likelihood objective \eqref{eq:log_likelihood} and find an approximate MLE solution with the EM algorithm (\textbf{Step 2}). After obtaining $\theta_N$, we recover an empirical model which we use as an input to an approximate planning oracle (\textbf{Step 3}). We compare our MLE-based implementation with experimental design (ED + MLE) to other baselines applicable to learning in LMABs ({\it e.g.,} L-UCRL + EM \cite{kwon2021rl}, tensor-decomposition methods \cite{anandkumar2014tensor, azizzadenesheli2016reinforcement, zhou2021regime}, naive-UCB \cite{azar2017minimax}). In our experimental results, we can observe that in practice, our method (ED+MLE) outperforms other baselines and the worst case guarantee given in Theorem \ref{theorem:learn_lmab}.

\subsection{Experiments}

We demonstrate our maximum likelihood estimation \eqref{eq:log_likelihood} on synthetic examples. We generate random LMAB instances with Bernoulli rewards, where the mean-reward vectors $\{\mu_m\}_{m=1}^M$ lie in a subspace of dimension roughly $4$, {\it i.e.,} $\text{rank} \left(\sum_{m=1}^M w_m \mu_m \mu_m^\top \right) \approx 4$. We compare our MLE with experimental design method~\eqref{eq:log_likelihood} (ED+MLE) to three benchmarks: 
\begin{enumerate}
    \item Naive UCB \cite{auer2002finite} without considering contexts (thus a returned policy is stationary).
    \item Tensor-decomposition methods \cite{anandkumar2014tensor, zhou2021regime}.
    \item L-UCRL with spectral initialization \cite{kwon2021rl}.
\end{enumerate}
Even when the theoretical conditions required for the success of tensor-decomposition or spectral methods do not, in practice, standard tensor-decomposition technique by \cite{anandkumar2014tensor} serves as good initialization for the EM or L-UCRL algorithms. After estimating the LMAB $\hat{\mB}$, we compute a heuristic policy using Q-MDP \cite{littman1995learning} for $\hat{\mB}$. We refer to the computed policy by Q-MDP \cite{littman1995learning} using the true LMAB model $\mB$ as the \emph{genie policy}.

\begin{figure}[t]
    \centering
    \includegraphics[width=0.8\textwidth]{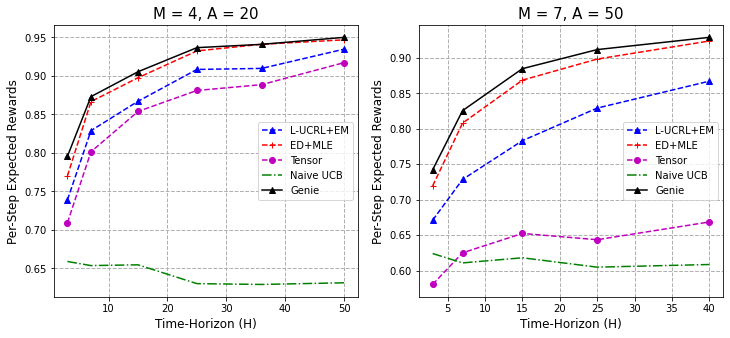}
    \caption{Per time-step rewards with history-dependent policies after model estimation}
    \label{fig:experiment_h}
\end{figure}

In the first experiment, we compare the performance of the aforementioned four alternatives. We draw a random LMAB instance with $M = 4, A = 20$ using $N = 5\cdot 10^4$ sampled episodes, and $M = 7, A = 50$ using $N = 100000$ episodes (Figure \ref{fig:experiment_h}). We compare the averaged {\it per time-step} rewards obtained with each policy with increasing length of episodes $H$. 


When $M = 4$, all methods (except naive UCB) exhibit the same pattern of improved performance as the algorithms access more data. As we generated instances to satisfy the full-rank condition, {\it i.e.,} $\text{rank} \left(\sum_{m=1}^M w_m \mu_m \mu_m^\top \right) \approx 4$, even pure tensor-decomposition method works well in practice in this setting. However, when $M = 7$, pure tensor-decomposition method significantly under-performs L-UCRL or ED+MLE. This demonstrates that for LMAB instances with rank degeneracy, additional iteration steps with EM are necessary to get a good solution. Furthermore, the performance of ED+MLE and of L-UCRL does not significantly drop in $M = 7$. This demonstrated that practically it works much better than what can be guaranteed in theory in the worst case. We conjecture that this is because the EM iteration converges to the MLE (even when converging to local optimums), and MLE solutions in general show much better performance under some mild conditions. ({\it e.g.,} with Assumption \ref{assumption:separation}, or if random perturbations are applied to the underlying LMAB model \cite{bhaskara2014smoothed}).

\begin{figure}[t]
    \centering
    \includegraphics[width=0.8\textwidth]{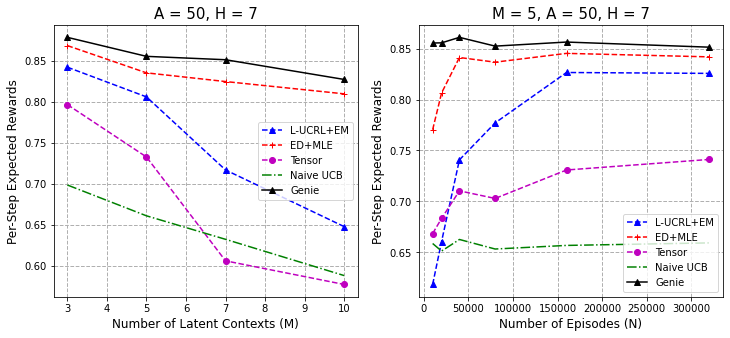}
    \caption{Left: Performance with increasing $M$, Right: Performance with increasing exploration episodes}
    \label{fig:experiment_MN}
\end{figure}

In our second experiment, we test the performance of the different methods while scanning different control parameters. We first fix $A = 50$, $H = 7$, and change the number of contexts (left of Figure \ref{fig:experiment_MN}). Since we keep generating instances satisfying $\text{rank} \left(\sum_{m=1}^M w_m \mu_m \mu_m^\top \right) \approx 4$, tensor-based method performs significantly worse as $M$ increases. We then fix $M = 5$ and observe the performance gain by increasing the sample episodes (right of Figure \ref{fig:experiment_MN}). We see that the performance of both L-UCRL+EM and our method (ED+MLE) approaches to that of genie as we increase $N$. 


From both experiments, we conclude that, in practice, MLE based methods perform much better than the worst case guarantee shown via method of moments. We believe it will be an interesting future direction whether we can derive much better guarantees for MLE based solutions.



\section{Conclusion and Future Work}
In this work, we designed an algorithm that learns a near-optimal policy for an LMAB instance that requires only $\poly(A) + \poly(M,H)^{\min(H, 2M-1)}$ number of samples. We achieved this with the experimental design and method-of-moments, and the maximum likelihood estimation which is more suitable in practice. We discuss a few limitations of this work and future directions below. 

\paragraph{Lower Bounds} The question of a lower bound on the sample complexity of the LMAB problem remains unresolved. For LMDP, an MDP extension of LMAB, a lower bound of $\Omega(A^M)$ is known due to \cite{kwon2021rl}. While we conjecture that some exponential dependence in $M$ is unavoidable, characterizing the minimax dependence of the exponent is left as an interesting future research question.

\paragraph{Latent MDPs} We believe that the moment-matching based approach we took in this work offers a promising way for designing RL algorithms in the presence of latent contexts. Specifically, we believe these techniques can be used to design algorithm that finds a near-optimal policy of LMDPs~\cite{kwon2021rl} or reward-mixing MDPs~\cite{kwon2021reinforcement} with $M = O(1)$ and without further separation assumptions.

\paragraph{Linear Bandits / Continuous Rewards} Our work has focused on the tabular setting, where all arms are independent. It will be an interesting future work to consider the same objective in linear bandit settings. Also, while we only considered Gaussian rewards, investigating a general parametric classes of rewards is an important future research direction from practical perspective. 

\bibliographystyle{abbrv}
\bibliography{main}

\begin{appendices}

\section{Proofs for Section \ref{section:main_result}}

\subsection{Proof of Lemma \ref{lemma:find_sampling_coordinates}}
\label{appendix:find_sampling_coordinates}
Let $\{\beta_j\}_{j=1}^k \subseteq \mathbb{R}^d$ be orthonormal basis of a $k$-dimensional subspace $\mathbf{U}$. Let $\Phi \in \mathbb{R}^{d\times k}$ be a matrix of form $\begin{bmatrix} \beta_1 & \beta_2 & ... & \beta_k\end{bmatrix}$, {\it i.e.,} the $j^{\mathrm{th}}$ column of $\Phi$ is $\beta_j$. We need to show the existence of a small set of core coordinates, from which we can reconstruct $\mu \in \mathbf{U}$.

Proposition \ref{proposition:core_set} implies that we can find a set of coordinates $\{i_j\}_{j=1}^n$ with cardinality at most $n = 4d \log\log d + 16$, such that if $\mu \in \mathbf{U}$ and we can access the vector  $[ \mu(i_1),\ldots,\mu(i_n) ]$ up to accuracy $\epsilon_1$, then we can reconstruct $\mu$ up to $\epsilon_1 \sqrt{2k}$ error. We can also infer that for any $\mu \in \mathbf{U}$, we have $\Phi \theta = \mu$ where $\theta$ is given by $G(\rho)^{-1} \sum_{i \in [d]} \rho(i) \mu(i) a_i$.

To conclude Lemma \ref{lemma:find_sampling_coordinates}, suppose we find $\hat{\mu} \in \mathbf{U}$ such that $|\mu(i_j) - \hat{\mu}(i_j)| \le \epsilon_0$ for all $j \in [n]$. Let $\eta$ be such that $\eta(i) = \mu(i) - \hat{\mu}(i)$ if $i \in \{i_j\}_{j=1}^n$, and $\eta(i) = 0$ otherwise. Applying Proposition \ref{proposition:core_set} with $\Delta = 0$ and $\eta$ be as defined above, with $\hat{\mu} = \Phi \hat{\theta}$, we have $\|\hat{\mu} - \mu\|_{\infty} \le \epsilon_0 \sqrt{2k}$.

\subsection{Proof of Corollary \ref{corollary:action_event_sampling}}
Let $\{\beta_m\}_{m=1}^M \subseteq \mathbb{R}^{AZ}$ be orthonormal basis of a $M$-dimensional subspace that includes $\mathbf{U} = span(\{\mu_m\}_{m=1}^M)$. 
Then by Lemma \ref{lemma:find_sampling_coordinates}, we have
\begin{align*}
    \max_{a \in \mA} \|\mu_m(a,\cdot) - \hat{\mu}_m (a,\cdot) \|_1 &= \max_{a \in \mA} \sum_{z \in \mZ} |\mu_m(a, z) - \hat{\mu}_m(a, z)| \\
    &\le 2 Z \sqrt{2M} \max_{j \in [n]} |\mu_m (a_j, z_j) - \hat{\mu}_m (a_j, z_j)|,
\end{align*}
with a set of core action-value pairs $\{(a_j, z_j)\}_{j=1}^n$. Plugging $v_m(j) = \mu_m(a_j, z_j)$ and $\hat{\nu}_m(j) = \hat{\mu}_m (a_j, z_j)$ gives Corollary \ref{corollary:action_event_sampling}.

\begin{remark}[Eliminating factor $Z$ for $H\geq M$]
    \label{remark:expensive_core_set}
    Instead of core action-value pairs, we can also define core action-event pairs to save a factor of $Z$. For instance, define a basis $\{\tilde{\beta}_m\}_{m=1}^{M}$ in a lifted space $\mathbb{R}^{A \times 2^Z}$ defined as the following:
    \begin{align*}
        \tilde{\beta}_m (a, S) := \sum_{z \in S} \beta_m(a,z), \qquad \forall a \in \mA, \ S \subseteq \mZ.
    \end{align*}
    For each $\mu_m$, define $\phi_m$:
    \begin{align}
        \label{eq:define_phi}
        \phi_m(a, S) := \sum_{z \in S} \mu_m(a,z), \qquad    \forall a \in \mA, S \subseteq \mZ.
    \end{align}
    By definition, $span(\{\phi_m\}_{m=1}^M) \subseteq span(\{\tilde{\beta}_m\}_{m=1}^M)$. Since $\frac{1}{2} \|\mu_m(a, \cdot) - \hat{\mu}_m (a, \cdot) \|_1 = \max_{S \subseteq \mZ} |\phi_m(a, S) - \hat{\phi}_m(a,S)|$, it follows from Lemma \ref{lemma:find_sampling_coordinates} that we can find a set of core action-events $\{(a_j, \mZ_j)\}_{j=1}^n$ of size at most $n = 4M \log\log M + 16$ such that for any $\phi_m, \hat{\phi}_m \in V$, we have
    \begin{align*}
        \| \phi_m - \hat{\phi}_m \|_{\infty} \le \sqrt{2M} \cdot \max_{j \in [n]} |\phi_m (a_j, \mZ_j) - \hat{\phi}_m (a_j, \mZ_j)|.
    \end{align*}
    This approach enables to approximate reward distributions in $l_1$ statistical distance, and thus we can save a factor of $Z$ in subsequent analysis. However, it comes with exponentially more expensive (in $Z$) computations for finding the core set.  
\end{remark}

\subsection{Proof of Lemma \ref{lemma:moment_closeness}}
To prove this result, we use the recent results on converting closeness in higher-order moments to closeness in Wasserstein distances for atomic distributions \cite{wu2020optimal, doss2020optimal}. A key result in \cite{wu2020optimal} states a connection between moments and Wasserstein distance in one-dimensional case:
\begin{theorem}[Proposition 1 in \cite{wu2020optimal}]\label{thm:1d_mixture}
    If $n=1$ and $\left|\sum_{m=1}^M w_m \nu_m^d - \sum_{m=1}^M \hat{w}_m \hat{\nu}_m^d \right| < \delta$ for all $d = 1, 2, ..., 2M-1$, then 
    \begin{align*}
        W(\gamma, \hat{\gamma}) \le O \left(M \delta^{1/(2M-1)} \right).
    \end{align*}
\end{theorem}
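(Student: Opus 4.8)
The plan is to reduce the claim to a one-dimensional estimate on the difference of cumulative distribution functions, and then to exploit the fact that this difference changes sign at most $O(M)$ times. A purely dual/polynomial argument does not suffice (I explain why below), so the $2M$-atom structure must enter explicitly.

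\textbf{Step 1 (reduction to CDFs).} In one dimension the Wasserstein distance with cost $|\cdot|$ has the closed form $W(\gamma,\hat\gamma)=\int_{\mathbb R}|G(x)|\,dx$, where $G(x):=F_\gamma(x)-F_{\hat\gamma}(x)$ and $F_\gamma,F_{\hat\gamma}$ are the CDFs of $\gamma,\hat\gamma$. Since $\gamma,\hat\gamma$ are probability measures with at most $M$ atoms each inside $[-1,1]$, the function $G$ satisfies $|G|\le 1$, is piecewise constant with breakpoints only at the (at most $2M$) combined atoms, and has compact support, because $\sum_m w_m=\sum_m \hat w_m=1$ forces $G(+\infty)=0$. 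Integrating by parts converts the moment hypothesis on $\sigma:=\gamma-\hat\gamma$ into moment bounds on $G$: for $d\ge 1$ one has $\int x^d\,d\sigma=-d\int x^{d-1}G(x)\,dx$, so $|\int x^d d\sigma|<\delta$ for $d=1,\dots,2M-1$ yields $|\int x^{j}G(x)\,dx|\le \delta$ for every $j=0,1,\dots,2M-2$.

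\textbf{Step 2 (a sign-matching test polynomial).} Because $G$ is piecewise constant on the at most $2M-1$ bounded gaps between consecutive atoms, it changes sign at most $k\le 2M-2$ times. Let $t_1<\dots<t_k\in[-1,1]$ separate consecutive sign regions, and set $Q(x):=\pm\prod_{i=1}^k(x-t_i)$, a monic polynomial of degree $k\le 2M-2$ with the overall sign chosen so that $Q(x)G(x)\ge 0$ for all $x$ (outside the support $G=0$, so the inequality is automatic there). Writing $Q=\sum_{j=0}^{k}q_j x^j$ and using Step 1, $\int QG\,dx=\sum_{j=0}^{k}q_j\int x^jG\,dx$ has absolute value at most $\delta\sum_j|q_j|\le \delta\prod_i(1+|t_i|)\le \delta\,2^{2M-2}$. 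Since $QG\ge 0$ this equals $\int|Q(x)|\,|G(x)|\,dx\le \delta\,2^{2M-2}$.

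\textbf{Step 3 (sublevel-set truncation and optimization).} It remains to pass from the weighted integral $\int|Q||G|$ back to $\int|G|=W(\gamma,\hat\gamma)$; the weight $|Q|$ is only small near its roots. Fix a threshold $\tau>0$. On $\{|Q|\ge\tau\}$, $\int_{|Q|\ge\tau}|G|\le \tau^{-1}\int|Q||G|\le \delta\,2^{2M-2}/\tau$. On $\{|Q|<\tau\}$, use $|G|\le 1$ and Pólya's sublevel-set inequality for a monic degree-$k$ polynomial, $\mathrm{meas}\{x:|Q(x)|\le\tau\}\le 4\tau^{1/k}$, to get $\int_{|Q|<\tau}|G|\le 4\tau^{1/k}$. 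Hence $W(\gamma,\hat\gamma)\le \delta\,2^{2M-2}\tau^{-1}+4\tau^{1/k}$, and balancing the two terms with $\tau\asymp(\delta\,2^{2M-2})^{k/(k+1)}$ gives $W(\gamma,\hat\gamma)\le C\,(\delta\,2^{2M-2})^{1/(k+1)}\,\mathrm{poly}(k)$. Since $k+1\le 2M-1$ and $\delta<1$, this is $O(M\,\delta^{1/(2M-1)})$, the claimed bound.

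The main obstacle is Step 3, and conceptually the necessity of using the $2M$-atom structure rather than only the smallness of the moments. The naive route — bounding $\int f\,d\sigma$ over $1$-Lipschitz $f$ by its best degree-$(2M-1)$ polynomial approximant — fails, because at a fixed degree the uniform approximation error of a Lipschitz function does not vanish, so such a bound cannot tend to $0$ as $\delta\to 0$, even though two $M$-atomic laws with identical first $2M-1$ moments must coincide. The argument above circumvents this exactly because $G$ has only $O(M)$ sign changes, which lets a single low-degree test polynomial $Q$ capture the entire mass of $|G|$ away from $Q$'s roots; controlling the remaining neighborhood of those roots via Pólya's inequality is what produces both the $\delta^{1/(2M-1)}$ rate and the polynomial-in-$M$ prefactor.
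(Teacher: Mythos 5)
The paper itself does not prove this statement: it is imported verbatim as Proposition~1 of \cite{wu2020optimal} and used as a black box in the proof of Lemma~\ref{lemma:moment_closeness} (the paper only proves the multi-dimensional extension, by reducing to this 1D case via a Gaussian projection). So your blind proof can only be compared against the cited original, and your argument is essentially the canonical one behind that result: write $W(\gamma,\hat\gamma)=\int|F_\gamma-F_{\hat\gamma}|$, observe that the CDF difference $G$ is piecewise constant, compactly supported, and has at most $k\le 2M-2$ sign changes, pair it with a sign-matching monic polynomial $Q$ of degree $k$ so that $\int|Q||G|=\int QG\le \delta\sum_j|q_j|$, and then convert the $|Q|$-weighted bound into an unweighted one. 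Your Steps 1 and 2 are correct (the integration by parts, the reduction of the moment hypothesis to $|\int x^jG|\le\delta$ for $j\le 2M-2$, and the coefficient bound $\sum_j|q_j|\le\prod_i(1+|t_i|)\le 2^k$), and finishing via P\'olya's sublevel-set inequality plus a threshold-balancing step is a clean way to close the argument; your remark on why naive polynomial approximation of Lipschitz test functions cannot work at fixed degree is also accurate.

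There is one quantitative slip in Step 3. After relaxing $\prod_i(1+|t_i|)\le 2^k$ to $2^{2M-2}$, your final bound reads $W\le C\,(\delta\,2^{2M-2})^{1/(k+1)}\,\mathrm{poly}(k)$, and the justification ``since $k+1\le 2M-1$ and $\delta<1$ this is $O(M\delta^{1/(2M-1)})$'' is false as stated: for $k=1$ and, say, $\delta=1/2$, the left side is of order $2^{M}$ while the right side is $O(M)$. The monotonicity step $x^{1/(k+1)}\le x^{1/(2M-1)}$ requires the \emph{base} $x=\delta 2^{2M-2}$ to be at most $1$, i.e.\ $\delta\le 2^{-(2M-2)}$, not merely $\delta<1$. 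The fix is already contained in your own Step 2: keep the tighter bound $2^k$, balance with $\tau\asymp(2^k\delta)^{k/(k+1)}$, and you get $W\le C(2^k\delta)^{1/(k+1)}\le 2C\,\delta^{1/(k+1)}\le 2C\,\delta^{1/(2M-1)}$ for all $\delta\le 1$ with a universal constant, which is in fact slightly stronger than the stated $O(M\delta^{1/(2M-1)})$. (Alternatively, in the regime in which the paper invokes the result --- Lemma~\ref{lemma:moment_closeness} assumes $\delta$ sufficiently small --- your looser version also goes through.)
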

Theorem~\ref{thm:1d_mixture}, which holds for 1-dimensional mixtures--can be generalized to the high-dimensional case, as the following theorem shows.
\begin{theorem}
    For $n \ge 2$, if $\left \|\sum_{m=1}^M w_m \nu_m^d - \sum_{m=1}^M \hat{w}_m \hat{\nu}_m^d \right\|_{\infty} < \delta$ for all $d = 1, 2, ..., 2M-1$, then 
    \begin{align*}
        W(\gamma, \hat{\gamma}) \le O\left( M^3 n \cdot \delta^{1/(2M-1)} \right).
    \end{align*}
\end{theorem}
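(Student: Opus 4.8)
The plan is to reduce the $n$-dimensional statement to the one-dimensional Proposition~\ref{thm:1d_mixture} by projecting the atomic measures $\gamma$ and $\hat\gamma$ onto a single, carefully chosen direction, and then to lift the resulting one-dimensional coupling back to $\mathbb{R}^n$.

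First I would record the basic projection identity. For any vector $u\in\mathbb{R}^n$, the $d$-th moment of the pushforward $u_\#\gamma = \sum_m w_m\delta_{\langle u,\nu_m\rangle}$ equals $\langle u^{\bigotimes d}, T_d\rangle$ with $T_d = \sum_m w_m\nu_m^{\bigotimes d}$, and likewise for $\hat\gamma$. Hence the discrepancy of the $d$-th projected moments is $|\langle u^{\bigotimes d}, T_d - \hat T_d\rangle| \le \|T_d - \hat T_d\|_\infty\,\|u\|_1^{d}$. Consequently, whenever $\|u\|_1\le 1$ the two one-dimensional mixtures $u_\#\gamma$ and $u_\#\hat\gamma$ have all moments of order $d\le 2M-1$ matched up to $\delta$, so Proposition~\ref{thm:1d_mixture} yields a one-dimensional coupling $\Gamma$ on indices $[M]\times[M]$, obtained by disaggregating the optimal transport plan between the projected point masses (which keeps the marginals $w,\hat w$ and the transport cost unchanged), with $\sum_{m,m'}\Gamma(m,m')\,|\langle u,\nu_m\rangle - \langle u,\hat\nu_{m'}\rangle| \le O(M\delta^{1/(2M-1)})$.

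The crux is to choose $u$ so that this single coupling is simultaneously good in \emph{every} coordinate, not merely along $u$. For this I would establish a separating, distance-preserving direction: there is a $u$ with $\|u\|_1\le 1$ such that $|\langle u, \nu_m - \hat\nu_{m'}\rangle| \ge c\,\|\nu_m - \hat\nu_{m'}\|_\infty$ for all of the at most $\binom{2M}{2} = O(M^2)$ difference vectors, with $c = \Omega(1/(M^2 n))$. I would prove existence probabilistically, drawing $u$ with random signs scaled to $\|u\|_1\le 1$ and applying a small-ball / Littlewood--Offord anti-concentration bound to each fixed difference vector, then union-bounding over the $O(M^2)$ vectors; forcing the per-vector failure probability below $1/M^2$ is exactly what sets the resolution $c = \Omega(1/(M^2 n))$.

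Given such $u$ and its coupling $\Gamma$, the proof closes by a single chain: $W(\gamma,\hat\gamma) \le \sum_{m,m'}\Gamma(m,m')\|\nu_m - \hat\nu_{m'}\|_\infty \le \tfrac1c\sum_{m,m'}\Gamma(m,m')|\langle u,\nu_m\rangle - \langle u,\hat\nu_{m'}\rangle| \le \tfrac1c\,O(M\delta^{1/(2M-1)}) = O(M^3 n\,\delta^{1/(2M-1)})$, where the first inequality uses that $\Gamma$ is a feasible coupling in the sense of \eqref{eq:gamma_set} and the second uses the distance-preserving property of $u$. The main obstacle is precisely this choice of direction---manufacturing one $u$ that both has bounded $\ell_1$-norm (to preserve the moment-matching accuracy $\delta$) and uniformly lower-bounds all pairwise projected gaps; the tension between these requirements, resolved by anti-concentration together with the union bound over $O(M^2)$ pairs, is what produces the extra $M^2 n$ factor relative to the one-dimensional bound.
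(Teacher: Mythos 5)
Your overall strategy is exactly the paper's: find a single direction $u$ with $\|u\|_1 \le 1$ that approximately preserves all pairwise $\ell_\infty$ distances up to a factor $c = \Omega(1/(M^2 n))$, push the moment bound through H\"older ($\|u\|_1^d \le 1$) to get $\delta$-matched one-dimensional moments, invoke the one-dimensional result to obtain a coupling, and lift that coupling back (your projection identity, moment-transfer step, and disaggregation of the transport plan are all correct and coincide with the paper's argument). The gap is in the one step where you deviate: constructing $u$ from \emph{random signs} scaled to $\|u\|_1 \le 1$ and invoking Littlewood--Offord. Rademacher sums admit no small-ball bound at scales below $\|x\|_\infty$: for $x = (1,1,0,\dots,0)$ one has $\PP\left(\sum_i \epsilon_i x_i = 0\right) = 1/2$, so the per-vector failure probability $\PP\left(|\langle \epsilon, x\rangle| \le \tau \|x\|_\infty\right)$ is stuck at $1/2$ no matter how small you take $\tau$, and it can never be forced below $1/M^2$; the union bound over the $O(M^2)$ difference vectors therefore cannot close. (The Erd\H{o}s--Littlewood--Offord bound $O(1/\sqrt{n})$ requires \emph{all} coordinates of $x$ to be comparable to $\|x\|_\infty$, which difference vectors $\nu_m - \hat{\nu}_{m'}$ need not satisfy.)

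The failure is not merely in the proof but in the construction itself: with $n=2$, $\nu_1 = (1/2,1/2)$, $\hat{\nu}_1 = (0,0)$, $\hat{\nu}_2 = (0,1)$, the difference vectors include $(1/2,1/2)$ and $(1/2,-1/2)$, and \emph{every} sign pattern $\epsilon \in \{\pm 1\}^2$ annihilates one of them, so no direction of your proposed form is distance-preserving for all pairs. The paper avoids this by drawing $\theta \sim \mathcal{N}(0, I)$: the Gaussian projection $\theta^\top x$ has a bounded density, giving $\PP\left(|\theta^\top x| \le \tau \|x\|_2\right) \le \tau$ for \emph{every} $\tau > 0$, so the per-vector failure probability can be tuned to $O(1/M^2)$; combining the union bound with $\PP(\|\theta\|_1 \ge 2n) < 1/2$ (Markov) and normalizing $u = \theta/\|\theta\|_1$ yields the required direction with $c = \Omega(1/(M^2 n))$. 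Replacing your sign vector by this Gaussian direction (or any direction drawn from a distribution with genuine small-ball behavior at all scales) repairs your argument and reproduces the paper's proof verbatim.
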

\begin{proof}
    The idea follows the proof of Lemma 3.1 in \cite{doss2020optimal}. Suppose a standard Gaussian random variable $\theta \sim \mathcal{N}(0, I)$ in $\mathbb{R}^n$. For any $x \in \mathbb{R}^n$, anti-concentration of Normal distribution says 
    \begin{align*}
        \PP(|\theta^\top x| \le \tau \|x\|_2) \le \tau,
    \end{align*}
    for any $\tau > 0$. By union bound and the fact that $\|x\|_2 \ge \|x\|_{\infty}$, if we define $\mathcal{X} = \{v - \hat{v} | v \in \{\nu_m\}_{m=1}^M, \hat{v} \in \{\hat{\nu}_m\}_{m=1}^M \}$, then we have 
    \begin{align*}
        \PP(|\theta^\top x| \le \tau \|x\|_{\infty}) \le M^2 \tau, \qquad \forall x \in \mathcal{\mathcal{X}}. 
    \end{align*}
    Also with high probability, we have $\PP(\|\theta\|_1 \ge 2n) \le \frac{n \Exs_{t \sim \mathcal{N}(0,1)} [|t|]}{2n} < 1/2$ by Markov inequality. Thus, 
    \begin{align*}
        \PP \left(\frac{\theta^\top x}{\|\theta\|_1} > \frac{\tau}{2n} \|x\|_{\infty} \right) > 1/2 - M^2\tau.
    \end{align*}
    By setting $\tau = M^2 / 2$, this probabilistic argument implies that there exists $\theta \in \mathbb{R}^n$ with a unit $L_1$ norm, $\|\theta\|_1 = 1$, such that for all $x \in \mathcal{X}$, we have $\|x\|_{\infty} \le 4M^2n |\theta^\top x|$. Using this $\theta$, we have
    \begin{align}
        \label{eq:wasserstein_multi_relation}
        W(\gamma, \hat{\gamma}) = \inf_{\Gamma} \{\Exs_{(m,m') \sim \Gamma} \left[ \|\nu_m - \hat{\nu}_{m'}\|_{\infty} \right]\} \le 4M^2 n \cdot \inf_{\Gamma} \{\Exs_{(m,m')\sim \Gamma} [ |\theta^\top (\nu_m - \hat{\nu}_{m'})| ]\}. 
    \end{align}
    
    On the other hand, consider 1-dimensional $M$-atomic distributions $\gamma_{\theta} := \sum_{m=1}^M w_m \delta_{\theta^\top \nu_m}$ and $\hat{\gamma}_{\theta} := \sum_{m=1}^M \hat{w}_m \delta_{\theta^\top \hat{\nu}_m}$. Then by Cauchy-Schwartz inequality 
    \begin{align*}
        \left|\sum_{m=1}^M w_m (\theta^\top \nu_m)^d - \sum_{m=1}^M \hat{w}_m (\theta^\top \hat{\nu}_m)^d \right| \le \|\theta\|_1^d \left\|\sum_{m=1}^M w_m \nu_m^{\bigotimes d} - \sum_{m=1}^M \hat{w}_m \hat{\nu}_m^{\bigotimes d} \right\|_{\infty} \le \delta,
    \end{align*}
    for all $d = 1, 2, ..., 2M-1$. Since $\gamma_\theta, \hat{\gamma}_\theta$ are 1-dimensional atomic distributions, by Theorem~\ref{thm:1d_mixture},
    \begin{align*}
        W(\gamma_{\theta}, \hat{\gamma}_{\theta}) := \inf_{\Gamma} \{\Exs_{(m,m') \sim \Gamma} [ |(\theta^\top \nu_m) - (\theta^\top \hat{\nu}_{m'})| ]\} \le O \left(M \delta^{1/(2M-1)} \right). 
    \end{align*}
    Note that any coupling over $\nu_{\theta}, \hat{\nu}_{\theta}$ can be converted to a coupling over $\nu, \hat{\nu}$ and vice versa. Plugging the above result into \eqref{eq:wasserstein_multi_relation}, we have the theorem. 
\end{proof}

With the above theorems, if we set $\delta < (\epsilon / (M^3n))^{2M-1}$, we have $W(\gamma, \hat{\gamma}) \le O(\epsilon)$.

\subsection{Proof of Proposition \ref{proposition:total_variation_diff}}
First note that the difference in expected values of a policy is bounded by total variation distance between trajectory distribution:
\begin{align*}
    &|V(\pi) - \hat{V}(\pi)| \\
    &\le H \cdot \sum_{\substack{r_{1:H} \in \mR^H, \\ a_{1:H} \in \mA^H}} |\PP^\pi (r_{1:H}, a_{1:H}) - \hat{\PP}^\pi (r_{1:H}, a_{1:H})| \\
    &\le H \sum_{r_{1:H}, a_{1:H}} \left|\sum_{m=1}^M w_m \Pi_{t=1}^H \mu_m(a_t, r_t) - \sum_{m=1}^M \hat{w}_m \Pi_{t=1}^H \hat{\mu}_m (a_t, r_t)\right| \cdot \Pi_{t=1}^H \pi(a_t | a_{1:t-1}, r_{1:t-1}).
\end{align*}
Now we only need to focusing on total variation distance bound. Fron this point, when we sum over sequences, we sum over all possible sequences if not specified. With a slight abuse in notation, we use a compact notation for probability of action sequences:
\begin{align}
    \pi(a_{1:t}|r_{1:t-1}) \coloneqq \Pi_{t'=1}^t \pi(a_{t'} | r_{1:t'-1}, a_{1:t'-1}).  \label{eq:simplify_action_prob}
\end{align}
First, suppose any coupling $\Gamma$ between contexts $m$ and $m'$ in two systems, and we can write 
\begin{align*}
    \sum_{r_{1:H}, a_{1:H}} & \left|\sum_{m=1}^M w_m \Pi_{t=1}^H \mu_m(a_t, r_t) - \sum_{m=1}^M \hat{w}_m \Pi_{t=1}^H \hat{\mu}_m (a_t, r_t)\right| \cdot \pi(a_{1:H}|r_{1:H-1}) \\
    &\le \sum_{r_{1:H}, a_{1:H}} \pi(a_{1:H}|r_{1:H-1}) \sum_{(m,m')} \Gamma(m,m') \left|\Pi_{t=1}^H \mu_m(a_t, r_t) - \Pi_{t=1}^H \hat{\mu}_{m'} (a_t, r_t) \right|,
\end{align*}
where the inequality holds by the triangle inequality.
Now we can proceed as
\begin{align*}
    &\sum_{(m,m')} \Gamma(m,m') \sum_{r_{1:H}, a_{1:H}} \pi(a_{1:H}|r_{1:H-1}) \left|\Pi_{t=1}^H \mu_m(a_t, r_t) - \Pi_{t=1}^H \hat{\mu}_{m'} (a_t, r_t) \right| \\
    &\le \sum_{(m,m')} \Gamma(m,m') \cdot \Bigg( \sum_{r_{1:H-1}, a_{1:H}} \pi(a_{1:H}|r_{1:H-1}) \cdot \Pi_{t=1}^{H-1} \mu_m(a_t, r_t) \sum_{r_H} |\mu_m (r_H; a_H) - \hat{\mu}_{m'} (a_H, r_H)| \\
    &\quad + \sum_{r_{1:H-1}, a_{1:H}} \pi(a_{1:H}|r_{1:H-1}) \cdot \left|\Pi_{t=1}^{H-1} \mu_m(a_t, r_t) - \Pi_{t=1}^{H-1} \hat{\mu}_{m'} (a_t, r_t)\right| \cdot \sum_{r_H} |\hat{\mu}_{m'} (a_H, r_H) | \Bigg),
\end{align*}
where we used triangle inequality at the $t = H$ time step. Note that $\sum_{r_H} |\mu_m (a_H, r_H) - \hat{\mu}_{m} (a_H, r_H)| = \|\mu( a_H, \cdot) - \hat{\mu}_m (a_H, \cdot)\|_1$. Thus, we can bound the first term by 
\begin{align*}
    \sum_{r_{1:H-1}, a_{1:H}} &\pi(a_{1:H}|r_{1:H-1}) \cdot \Pi_{t=1}^{H-1} \mu_m(a_t, r_t) \sum_{r_H} |\mu_m (a_H, r_H) - \hat{\mu}_{m'} (a_H, r_H)| \\
    &\le \max_{a \in \mA} \|\mu_m(a, \cdot) - \hat{\mu}_{m'} (a, \cdot)\|_1.
\end{align*}
where we summed over all probabilities over sequences where we used the fact that 
\begin{align*}
    \sum_{r_{1:H-1}, a_{1:H}} &\pi(a_{1:H}|r_{1:H-1}) \cdot \Pi_{t=1}^{H-1} \mu_m(a_t, r_t) =\sum_{r_{1:H-1}, a_{1:H}} \PP_m^\pi (r_{1:H-1},a_{1:H}) = 1.
\end{align*}

For the second term, we observe that
\begin{align*}
    \sum_{r_{1:H-1}, a_{1:H}} &\pi(a_{1:H}|r_{1:H-1}) \left|\Pi_{t=1}^{H-1} \mu_m(a_t, r_t) - \Pi_{t=1}^{H-1} \mu_{m'} (a_t, r_t)\right| \sum_{r_H} |\hat{\mu}_{m'} (a_H, r_H) | \\
    &= \sum_{r_{1:H-1}, a_{1:H-1}} \pi(a_{1:H-1}|r_{1:H-2}) \left|\Pi_{t=1}^{H-1} \mu_m(a_t, r_t) - \Pi_{t=1}^{H-1} \hat{\mu}_{m'} (a_t, r_t)\right|.
\end{align*}
since $\sum_{r_H} \hat{\mu}_{m'} (a, r_H) = 1$ for any $a$. From here, we can apply the same decomposition which we used at $t=H$ level, and apply the same argument recursively until $t=1$. That gives for any coupling $\Gamma$, 
\begin{align*}
    \sum_{r_{1:H}, a_{1:H}} & \left|\sum_{m=1}^M w_m \Pi_{t=1}^H \mu_m(a_t, r_t) - \sum_{m=1}^M \hat{w}_m \Pi_{t=1}^H \hat{\mu}_m (a_t, r_t)\right| \cdot \pi(a_{1:H}|r_{1:H-1}) \\
    &\le H \cdot \sum_{(m,m')} \Gamma(m, m') \max_{a \in \mA} \|\mu_m(a, \cdot) - \hat{\mu}_{m'} (a, \cdot)\|_1. 
\end{align*}
Since the inequality holds for all valid couplings $\Gamma(m, m')$ such that 
\begin{align*}
    &\sum_{m}\Gamma(m, m') = \hat{w}_{m'}, \text{and},\ \sum_{m'}\Gamma(m, m') = w_m,
\end{align*}
we can take the infimum over $\Gamma$, and conclude that
\begin{align*}
    |V(\pi) - \hat{V}(\pi)| \le H^2 \cdot \inf_{\Gamma} \sum_{(m,m')} \Gamma(m, m') \max_{a \in \mA} \|\mu_m(a, \cdot) - \hat{\mu}_{m'} (a, \cdot)\|_1.
\end{align*}

\subsection{Proof of Lemma \ref{lemma:short_time_moment_to_values}}
In order to bound difference between expected long-term rewards for a fixed history-dependent policy $\pi$, it is sufficient to bound the difference in distributions of observations following $\pi$. We first explicitly write down total variation distance between observations from $\{(w_m, \mu_m)\}_{m=1}^M$ and $\{(\hat{w}_m, \hat{\mu}_m)\}_{m=1}^M$:
\begin{align*}
    \sum_{r_{1:H}, a_{1:H}} &| \PP^\pi (r_{1:H}, a_{1:H}) - \hat{\PP}^\pi (r_{1:H}, a_{1:H}) | \\
    &= \sum_{r_{1:H}, a_{1:H}} \left| \sum_{m=1}^M w_m \Pi_{t=1}^H \mu_m(a_t, r_t) - \sum_{m=1}^M \hat{w}_m \Pi_{t=1}^H \hat{\mu}_m(a_t, r_t) \right| \pi(a_{1:H} | r_{1:H-1}) \\
    &\le \delta \sum_{r_{1:H}, a_{1:H}} \Pi_{t=1}^H \pi(a_t | a_{1:t-1}, r_{1:t-1}) = Z^H \cdot \delta,
\end{align*}
where $\pi(a_{1:H} | r_{1:H-1})$ is as defined in \eqref{eq:simplify_action_prob}, in the first inequality we used the condition $\|\sum_{m=1}^M w_m \mu_m^{\bigotimes H} - \sum_{m=1}^M \hat{w}_m \hat{\mu}_m^{\bigotimes H}\|_{\infty} \le \delta$, and the last inequality follows from that:
\begin{align*}
    \sum_{r_{1:H}, a_{1:H}} &\pi(a_{1:H} | r_{1:H-1}) = \sum_{r_{1:H}} \sum_{a_{1:H}} \Pi_{t=1}^H \pi(a_t | a_{1:t-1}, r_{1:t-1}) \\
    &= \sum_{r_{1:H} \in \mR^H} \sum_{a_{1:H-1} \in \mA^{H-1}} \Pi_{t=1}^{H-1} \pi(a_t | a_{1:t-1}, r_{1:t-1}) \sum_{a_H \in \mA} \pi(a_H | a_{1:H-1}, r_{1:H-1}) \\
    &= \sum_{r_{1:H} \in \mR^H} \sum_{a_{1:H-1} \in \mA^{H-1}} \Pi_{t=1}^{H-1} \pi(a_t | a_{1:t-1}, r_{1:t-1}) = ... = \sum_{r_{1:H} \in \mR^H} 1 = Z^H.
\end{align*}
Now the difference in expected rewards follows as
\begin{align*}
    |f(\pi) - \hat{f} (\pi)| \le H \cdot \sum_{r_{1:H}, a_{1:H}} | \PP^\pi (r_{1:H}, a_{1:H}) - \hat{\PP}^\pi (r_{1:H}, a_{1:H}) | \le H Z^H \cdot \delta.
\end{align*}

\subsection{Proof of Proposition \ref{proposition:short_time_core_set_moments}}
\label{appendix:short_horizon_core_set_moments}
To show this result, we first express each coordinate of $\mu_m$ in terms of $\nu_m$. That is, for all $(a,z) \in \mA \times \mZ$, using Proposition \ref{proposition:core_set}, by setting $\eta = 0$ we can express $\mu_m$ as 
\begin{align*}
    \mu_m = T \nu_m,
\end{align*}
for some linear mapping $T \in \mathbb{R}^{AZ \times n}$ (see Section \ref{appendix:theorem_lmab_procedure} for details on $T$). Furthermore, from the conclusion of Proposition \ref{proposition:core_set} which implies the robustness of $\mu_m$ against the perturbation of $\nu_m$ in $l_\infty$ norm, we can infer that every row of $T$ has $l_1$ norm bounded by $\sqrt{2M}$, {\it i.e.,} $\| T_{(a,z), :} \|_1 \le \sqrt{2M}$ for all $(a,z) \in \mA \times \mZ$. 

To bound $\| \sum_{m=1}^M w_m \mu_m^{\bigotimes l} - \sum_{m=1}^M \hat{w}_m \hat{\mu}_m^{\bigotimes l} \|_{\infty}$, we only need to check one entry of tensors at any position $((a_1,z_1), ..., (a_l, z_l))$ and all other entries are bounded in a similar fashion. We first check that 
\begin{align*}
    \sum_{m=1}^M &w_m \cdot \Pi_{i=1}^l \mu_m(a_i, z_i) - \sum_{m=1}^M \hat{w}_m \cdot \Pi_{i=1}^l \hat{\mu}_m(a_i, z_i) \\
    &= \sum_{m=1}^M w_m \cdot \Pi_{i=1}^l \left( \sum_{j=1}^n \nu_m(j) T_{(a_i, z_i), j} \right) - \sum_{m=1}^M \hat{w}_m \cdot \Pi_{i=1}^l \left( \sum_{j=1}^n \hat{\nu}_m(j) T_{(a_i, z_i), j} \right),
\end{align*}
Unfolding the product expression over $i$ for the original parameter part $\nu$, 
\begin{align*}
    \sum_{m=1}^M w_m \cdot \Pi_{i=1}^l \left( \sum_{j=1}^n \nu_m(j) T_{(a_i, z_i), j} \right) &= \sum_{m=1}^M w_m \sum_{(j_1, j_2, ..., j_l) \in [n]^l} \Pi_{k=1}^l \nu_m(j_k) \cdot \Pi_{k=1}^l T_{(a_k, z_k), j_k} \\
    &= \sum_{(j_1, j_2, ..., j_l) \in [n]^l}  \Pi_{k=1}^l T_{(a_k, z_k), j_k} \cdot \sum_{m=1}^M w_m \Pi_{k=1}^l \nu_m(j_k).
\end{align*}
Plugging this expression, we conclude the proof:
\begin{align*}
    \Bigg|\sum_{m=1}^M &w_m \cdot \Pi_{i=1}^l \mu_m(a_i, z_i) - \sum_{m=1}^M \hat{w}_m \cdot \Pi_{i=1}^l \hat{\mu}_m(a_i, z_i) \Bigg| \\
    &= \Bigg| \sum_{(j_1, j_2, ..., j_l) \in [n]^l} \Pi_{k=1}^l T_{(a_k, z_k), j_k} \cdot \left(\sum_{m=1}^M w_m \Pi_{k=1}^l \nu_m(j_k) - \sum_{m=1}^M \hat{w}_m \Pi_{k=1}^l \hat{\nu}_m(j_k) \right) \Bigg| \\
    &\le \delta \cdot \sum_{(j_1, j_2, ..., j_l) \in [n]^l} \Bigg|  \Pi_{k=1}^l T_{(a_k, z_k), j_k} \Bigg| = \delta \cdot \Pi_{k=1}^l \left( \sum_{j=1}^n | T_{(a_k, z_k), j} | \right) \le \delta (2M)^{l/2},
\end{align*}
which is what we needed to show.

\subsection{Proof of Lemma \ref{lemma:subspace_estimation}}
\label{appendix:lemma_subspace_estimation}
For each episode $k$ where $k \in [N_0]$, let the first and second actions be $a^k_1, a^k_2 \sim \text{Unif}(\mA)$, and let $r_1^k, r_2^k$ be the corresponding reward feedback. We construct an empirical second-order moments $\hat{M}_2$ such that $\hat{M}_2(i,j)$ is the mean of $r_1  \cdot r_2$ when $a_1^k = a_i, a_2^k = a_j$. Specifically, we construct $\hat{M}_2$ as the following:
\begin{align}
    \hat{M}_2 = \frac{1}{2 N_0} \sum_{k=1}^{N_0} \bm{e}_{(a^k_1, r^k_1)} \cdot \bm{e}_{(a^k_2, r^k_2)}^\top + \bm{e}_{(a^k_2, r^k_2)} \cdot \bm{e}_{(a^k_1, r^k_1)}^\top, \label{eq:subspace_estimation}
\end{align} 
where $\bm{e}_{(a^k_t, r^k_t)}$ is a standard basis vector in $\mathbb{R}^{AZ}$ with 1 at position $(a_t^k, r_t^k)$. The argument follows from a rather standard concentration argument for dimensionality reduction (e.g., Lemma 3.5 in \cite{doss2020optimal}). Let $\delta = \|M_2 - \hat{M}_2\|_2$. With standard measure of concentration arguments, we can show that $\|M_2 - \hat{M}_2\|_\infty < C \cdot \sqrt{A^2 \log(AZ / \eta) / N_0}$ with probability at least $1 - \eta$, which is translated to $\delta \le C \cdot \sqrt{\frac{A^4 Z^2  \log(AZ/\eta)}{N_0}}$ for some universal constant $C > 0$. 

Let $P_{\mathbf{U}}$ be the orthogonal projector onto the top-$M$ eigenspace of $M_2$ and $P_{\mathbf{U}}^{\perp} = I -  P_{\mathbf{U}}$. We can also define similar quantities from the empirical estimate $\hat{M}_2$. Let $P_{\widehat{\mathbf{U}}}$ similarly be the orthogonal projector onto the top-$M$ eigenspace of $\hat{M}_2$, and let $P_{\widehat{\mathbf{U}}^{\perp}} = I - P_{\widehat{\mathbf{U}}}$. By Weyl's theorem, we have $\lambda_{M+1}(\hat{M}_2) \le \delta$ since ${\rm rank} (M_2)$ is $M$. Our goal is to bound 
\begin{align*}
    \| \mu_m - P_{\widehat{\mathbf{U}}} \mu_m \|_2^2 &= \| P_{\widehat{\mathbf{U}}^\perp} \mu_m \|_2^2 = \max_{y \in Range(\widehat{\mathbf{U}}^\perp) \cap \mathbb{S}^{AZ-1}} (y^\top \mu_m)^2.
\end{align*}
for any $m \in [M]$. Let $y$ be the vector maximizing the above. Since $w_m \mu_m \mu_m^\top \preceq M_2$ and since $y$ belongs to the eigenspace of ranks lower than $M$, 
\begin{align*}
    w_m y^\top \mu_m \mu_m^\top y \le y^\top M_2 y = y^\top (M_2 - \hat{M}_2)y + y^\top \hat{M}_2 y \le \|M_2 - \hat{M}_2\|_2 + \delta \le 2\delta.
\end{align*}
We have shown that $\|\mu_m - \widehat{\mathbf{U}} \mu_m\|_2 \le \sqrt{2 \delta / w_m}$. Thus we need $\delta := O(\delta_{\rm sub}^2)$ to bound the $l_2$ error by $\delta_{\rm sub} / w_m^{1/2}$. Thus $N_0 = O \left(Z^2 A^4 \log(AZ/\eta) / \delta_{\rm sub}^{4} \right)$ samples are sufficient for the estimation of the subspace $\mathbf{U}$. Note that we have not optimized for the polynomial factors which can be improved by more tightly bounding $\delta = \|M_2 - \hat{M}_2\|_2$ from $N_0$ samples.

\subsection{Deferred Details of Theorem \ref{theorem:learn_lmab}}
\label{appendix:theorem_lmab}
The proof of Theorem \ref{theorem:learn_lmab} follows by collecting the results from the preceding lemmas for subspace estimation (Lemma \ref{lemma:subspace_estimation}) and results from Section \ref{subsec:general_long_horizon} and \ref{subsec:general_short_horizon}. Before we get into the proof, let us describe some details about finding a set of core action-value pairs, and how to recover the original model parameters $\{\mu_m\}_{m=1}^M$ from $\{\nu_m\}_{m=1}^M$.

\subsubsection{Detailed Procedures for Experimental Design}
\label{appendix:theorem_lmab_procedure}


\paragraph{Step 1. Subspace estimation} We first find the set of core action-value pairs $\{(a_j, z_j)\}_{j=1}^n$ following the same procedure in Corollary \ref{corollary:action_event_sampling}. By Lemma \ref{lemma:subspace_estimation}, for every $\mu_m$, we have
\begin{align*}
    \mu_m + \Delta_m \in \widehat{\mathbf{U}},
\end{align*}
where $\|\Delta_m\|_{\infty} \le \delta_{\mathrm{sub}} / w_m^{1/2}$. 

\paragraph{Step 2. Pick core action-event pairs} Let $\{\hat{\beta}_j\}_{j=1}^M$ be the orthonormal basis of $\widehat{\mathbf{U}}$. We can use this basis as input to Corollary \ref{corollary:action_event_sampling} to get a set of (approximate) core action-event pairs. Specifically, let $\hat{\Phi} \in \mathbb{R}^{AZ \times M}$ be a feature matrix where each $j^{\mathrm{th}}$ column $\hat{\Phi}_{:,j}$ is given as:
\begin{align}
    \hat{\Phi}_{:,j} (a, z) := \hat{\beta}_j(a,z), \qquad \forall a \in \mA, z \in \mZ.  \label{eq:feature_estimate}
\end{align}
After invoking Theorem \ref{theorem:small_core_set} with supplying $\hat{\Phi}$ as input, we use the support of $\rho$ as the set of core action-value pairs $\{(a_j, z_j)\}_{j=1}^n$. Let $\hat{G}(\rho)$ be defined as in equation \eqref{eq:g_optimal_design}. Note that with too small mixing weights $w_m$, we can instead use $\Delta_m = -\mu_m$ with $\|\Delta_m\|_{\infty} \le 1$.

\paragraph{Step 3. Search constraints for moment-matching} After finding a core action-event pairs $\{(a_j, z_j)\}_{j=1}^n$ as in Corollary \ref{corollary:action_event_sampling}, we estimate $\nu_m$ from higher-order tensors $\{\hat{T}_l\}_{l=1}^{\min(H, 2M-1)}$. When searching parameters for $\{(\hat{w}_m, \hat{\nu}_m)\}_{m=1}^M$, we can put constraints to ensure that $\hat{w}_m$ and $\hat{\nu}_m$ belong to a set of valid parameters for all $m \in [M]$: 
\begin{align}
    \sum_{m=1}^M \hat{w}_m = 1, \quad w_{\mathrm{min}} &\le \hat{w}_m, \quad 0 \preceq \hat{\nu}_m \preceq 1, \nonumber \\
    \left|\sum_{z \in \mZ} (\hat{T} \hat{\nu}_m)(a, z) - 1 \right| &\le -2Z \sqrt{M} \delta_{\mathrm{sub}}  / \hat{w}_m^{1/2},  \nonumber \\
    -2 \sqrt{M} \delta_{\mathrm{sub}} / \hat{w}_m^{1/2} \preceq \hat{T} \hat{\nu}_m &\preceq 1 + 2\sqrt{M} \delta_{\mathrm{sub}} / \hat{w}_m^{1/2}, \qquad \forall m \in [M], a \in \mA, \label{eq:valid_param_2}
\end{align}
where $\preceq$ is an element-wise inequality. That is, we want that $\hat{v}_m = \hat{T} \hat{\nu}_m$ is not too far from $\hat{\mu}_m$ after clipping and normalization. Without loss of generality, we assume that a rough estimate 
of $w_{\mathrm{min}}$ is known in advance (otherwise, we can repeat the same procedure with geometrically decreasing estimates of $w_{\mathrm{min}}$, {\it e.g.,} $1/M, 1/2M, 1/4M, ..., 1/M^{O(\min(M,H))}$, and pick the best returned policy). A solution satisfying all constraints is guaranteed to exist since the true model $\{(w_m, \nu_m)\}_{m=1}^M$ also satisfies constraints.

\paragraph{Step 4. Recovery of parameters} Let $\hat{\mu}_m$ be computed by \eqref{eq:coordinate_recovery} using $\hat{\nu}_m$ and $\hat{\Phi}$, {\it i.e.,} let $\hat{T} \in \mathbb{R}^{AZ \times n}$ be defined as
\begin{align}
    \hat{T}_{:, j} := \rho(a_j, z_j) \hat{\Phi} \hat{G}(\rho)^{-1} \hat{\Phi}_{(a_j, z_j), :} \qquad \forall j \in [n] \label{eq:transform_subspace}.
\end{align}
We let $v_m = \hat{T} \nu_m, \hat{v}_m = \hat{T} \hat{\nu}_m$. We recover $\hat{\mu}_m$ from $\hat{v}_m = \hat{T} \hat{\nu}_m$ as
\begin{align}
    \tilde{\mu}_m &:= \text{clip} (\hat{v}_m, 0, 1), \nonumber \\
    \hat{\mu}_m (a,z) &:= \frac{\tilde{\mu}_m (a,z)}{\sum_{z' \in \mZ} \tilde{\mu}_m(a,z')}, \qquad \forall a \in \mA, z \in \mZ. \label{eq:clipping_mu}
\end{align}
A simple algebra can show that the normalized estimates of $\hat{\mu}_m$ are close to $\mu_m$. Specifically, we show the following with Proposition \ref{proposition:core_set}:
\begin{align*}
    \| \mu_m - \hat{v}_m \|_{\infty} &\le
    \|\Delta_m\|_{\infty} + (\|\Delta_m\|_{\infty} + \max_{j \in [n]} |\mu_m (a_j, z_j) - \hat{\mu}_m (a_j, z_j)|) \cdot \sqrt{2M} \\
    &\le \|\Delta_m\|_{\infty} + (\|\Delta_m\|_{\infty} + \|\nu_m - \hat{\nu}_m\|_{\infty}) \cdot \sqrt{2M}.
\end{align*}

\subsubsection{Proof of Theorem \ref{theorem:learn_lmab}, Case I: $H \ge 2M-1$} 
We search the empirical parameters $\{(\hat{w}_m, \hat{\nu}_m)\}_{m=1}^M)$ over the set \eqref{eq:valid_param_2}. After finding $\hat{w}_m, \hat{\nu}_m$ that satisfy the moment matching condition \eqref{eq:tensor_error_cond}, we recover $\hat{\mu}_m$ from $\hat{v}_m = \hat{T} \hat{\nu}_m$ after clipping and normalization \eqref{eq:clipping_mu}. Then we first observe that for any $a \in \mA$,
\begin{align*}
    \|\mu_m(a, \cdot) - \tilde{\mu}_m(a, \cdot)\|_1 \le \sum_{z \in \mZ} |\mu_m(a,z) - \hat{v}_m(a,z)| \le Z \max_{z \in \mZ} |\mu_m(a, z) - \hat{v}_m(a, z)|,
\end{align*}
where in the first inequality, we used the fact that clipping can only improve the $l_1$ error. Then, errors from the normalization can be bounded as 
\begin{align*}
    \|\mu_m(a, \cdot) - \hat{\mu}_m(a, \cdot)\|_1 &\le \|\mu_m(a, \cdot) - \tilde{\mu}_m(a, \cdot)\|_1 + \|\tilde{\mu}_m(a, \cdot) - \hat{\mu}_m(a, \cdot)\|_1 \\
    &= \|\mu_m(a, \cdot) - \tilde{\mu}_m(a, \cdot)\|_1 + |\|\tilde{\mu}_m(a, \cdot)\|_1 - 1| \\
    &\le 2 \|\mu_m(a, \cdot) - \tilde{\mu}_m(a, \cdot)\|_1 \\
    &\le 2Z \max_{z \in \mZ} |\mu_m(a, z) - \hat{v}_m(a, z)| \\
    &\le 2Z \left(\|\Delta_m\|_{\infty} + (\|\Delta_m\|_{\infty} + \|\nu_m - \hat{\nu}_m\|_{\infty}) \sqrt{2M} \right).
\end{align*}
The second line holds since 
\begin{align*}
\|\tilde{\mu}_m(a,\cdot) - \hat{\mu}_m(a, \cdot)\|_1 &=  \sum_{z \in \mZ} \left|\tilde{\mu}_m(a,z) - \frac{\tilde{\mu}_m(a,z)}{\sum_{z \in \mZ} \tilde{\mu}_m(a,z)}\right| \\
&= \left|1 - \frac{1}{\sum_{z \in \mZ} \tilde{\mu}_m(a,z)}\right| \cdot \sum_{z \in \mZ} \left|\tilde{\mu}_m(a,z) \right|  \\
&= \left| \sum_{z \in \mZ} \tilde{\mu}_m(a,z) - \frac{\sum_{z \in \mZ} \tilde{\mu}_m(a,z)}{\sum_{z \in \mZ} \tilde{\mu}_m(a,z)} \right| = |\|\tilde{\mu}_m(a,z)\|_1 - 1|.
\end{align*}
where the third relation holds since $\sum_{z \in \mZ} \left|\tilde{\mu}_m(a,z) \right| /\left|\sum_{z \in \mZ} \tilde{\mu}_m(a,z) \right|=1$ since $\tilde{\mu}_m(a,z)\geq 0.$

By the choice of $\delta_{\mathrm{sub}}$, we have $\|\Delta_m\|_{\infty} \le \delta_{\mathrm{sub}} / w_m^{1/2} \le \epsilon / (2 Z MH^2 w_m^{1/2})$. Now we can call Proposition \ref{proposition:total_variation_diff}, and proceed as 
\begin{align*}
    |V(\pi) - \hat{V}(\pi)| &\le H^2 \cdot \inf_{\Gamma} \sum_{(m,m') \in [M]^2} \left( \Gamma(m, m') \cdot \max_{a \in \mA} \|\mu_m(a, \cdot) - \hat{\mu}_{m'} (a, \cdot)\|_{1} \right) \\
    &\le 2 Z H^2 \cdot \inf_{\Gamma} \sum_{(m,m') \in [M]^2} \Gamma(m, m') \cdot \left( \sqrt{M/w_m} \cdot \epsilon / (2 ZM H^2) + \sqrt{2M} \cdot \|\nu_m - \hat{\nu}_{m'}\|_{\infty}  \right) \\
    &\le 2 Z H^2 \cdot \left ( \sum_{m \in [M]} w_m \cdot \sqrt{M/w_m} \cdot \epsilon / (2 ZM H^2) + \sqrt{2M} \cdot W(\gamma, \hat{\gamma}) \right) \\
    &\le 2 Z H^2 \cdot \left( \epsilon / (2H^2) + 2\sqrt{2M} \cdot W(\gamma, \hat{\gamma}) \right),
\end{align*}
where in the last inequality, we used Cauchy-Schwartz inequality $\sum_{m=1}^M \sqrt{w_m} \le \sqrt{M}$. Hence if we have $2ZH^2 \sqrt{2M} W(\gamma, \hat{\gamma}) \le \epsilon$, which is given by the choice of $\delta_{\mathrm{sub}}$ and Lemma \ref{lemma:moment_closeness}, we have $|V(\pi) - \hat{V}(\pi)| \le O(\epsilon)$.

\subsubsection{Proof of Theorem \ref{theorem:learn_lmab}, Case II: $H < 2M-1$}
If $H < 2M-1$, we start by observing that for any $0 \preceq \hat{\nu}_m \preceq 1$, using Proposition \ref{proposition:core_set} and setting $\mu = 0$, $\epsilon_0 = 0$, $\eta = \nu_m$ and $\epsilon_1 = 1$, we have $-\sqrt{2M} \preceq \hat{T}\hat{\nu}_m \preceq \sqrt{2M}$. 

To exploit the moment-closeness property, we define an auxiliary model $\{(w_m, v_m)\}_{m=1}^M$ where $v_m := \hat{T} \nu_m$. Similarly to $H \ge 2M-1$ case, for any $a \in \mA$, we have that 
\begin{align}
    \|v_m(a,\cdot) - \mu_m(a, \cdot)\|_{1} \le 2 Z \|\Delta_m\|_{\infty}(1+\sqrt{2M}) \le Z\sqrt{2M} \delta_{\mathrm{sub}} /w_m^{1/2}. \label{eq:diff_vmu}
\end{align} 
We also have that 
\begin{align}
    \|v_m(a,\cdot)\|_1 \le \min\left( Z\sqrt{2M}\delta_{\mathrm{sub}} / w_m^{1/2}, Z\sqrt{2M} \right). \label{eq:l1_bound_vmu}
\end{align}
Let $\hat{v}_m = \hat{T} \hat{\nu}_m$ and $\hat{\mu}_m$ be defined as in \eqref{eq:clipping_mu}. Recall our goal to bound
\begin{align*}
    V(\pi) - \hat{V}(\pi) &= \sum_{a_{1:H}, r_{1:H}} \left( \sum_{t=1}^H r_t \right) \left(\sum_{m=1}^M w_m \Pi_{t=1}^H \mu_m(a_t, r_t) - \sum_{m=1}^M \hat{w}_m \Pi_{t=1}^H \hat{\mu}_m(a_t, r_t)\right) \pi(a_{1:H} | r_{1:H-1}).
\end{align*}
Define auxiliary value functions $V_{\mathrm{aux}}(\pi)$ and $\hat{V}_{\mathrm{aux}}(\pi)$ as
\begin{align*}
    V_{\mathrm{aux}}(\pi) &\coloneqq \sum_{a_{1:H}, r_{1:H}} \left( \sum_{t=1}^H r_t \right) \sum_{m=1}^M w_m \Pi_{t=1}^H v_m(a_t, r_t) \pi(a_{1:H} | r_{1:H-1}), \\
    \hat{V}_{\mathrm{aux}} (\pi) &\coloneqq \sum_{a_{1:H}, r_{1:H}} \left( \sum_{t=1}^H r_t \right) \sum_{m=1}^M \hat{w}_m \Pi_{t=1}^H \hat{v}_m(a_t, r_t) \pi(a_{1:H} | r_{1:H-1}),
\end{align*}
Then $|V(\pi) - \hat{V}(\pi)| \le |V(\pi) - V_{\mathrm{aux}} (\pi)| + |V_{\mathrm{aux}} (\pi) - \hat{V}_{\mathrm{aux}}(\pi)| + |\hat{V}_{\mathrm{aux}}(\pi) - \hat{V}(\pi)|$. We bound each term separately. 

\paragraph{Term I. $|V(\pi) - V_{\mathrm{aux}}(\pi)|$}: This is less than 
\begin{align*}
    |V(\pi) &- V_{\mathrm{aux}}(\pi)| = \left|\sum_{a_{1:H}, r_{1:H}} \left( \sum_{t=1}^H r_t \right) \sum_{m=1}^M w_m \left(\Pi_{t=1}^H \mu_m(a_t, r_t) - \Pi_{t=1}^H v_m(a_t, r_t)\right) \pi(a_{1:H} | r_{1:H-1}) \right| \\
    &\le H \cdot \sum_{a_{1:H}, r_{1:H}} \left| \sum_{m=1}^M w_m \left(\Pi_{t=1}^H \mu_m(a_t, r_t) - \Pi_{t=1}^H v_m(a_t, r_t)\right) \pi(a_{1:H} | r_{1:H-1}) \right| \\
    &\le H \cdot \sum_{a_{1:H-1}, r_{1:H-1}} \sum_{m=1}^M w_m \left| \Pi_{t=1}^{H-1} \mu_m(a_t, r_t) - \Pi_{t=1}^{H-1} v_m(a_t, r_t)\right| \sum_{a_H, r_H} |v_m(a_H, r_H)| \pi(a_{1:H} | r_{1:H-1})  \\
    &\quad + H\cdot \sum_{a_{1:H-1}, r_{1:H-1}} \sum_{m=1}^M w_m \Pi_{t=1}^{H-1} \mu_m(a_t, r_t) \sum_{a_H, r_H} \left| \mu_m(a_H, r_H) - v_m(a_H, r_H)\right| \pi(a_{1:H} | r_{1:H-1})  \\
    &\le H \sum_{m=1}^M w_m \max_{a \in \mA} \|v_m(a, \cdot)\|_1 \cdot \sum_{a_{1:H-1}, r_{1:H-1}} \left| \Pi_{t=1}^{H-1} \mu_m(a_t, r_t) - \Pi_{t=1}^{H-1} v_m(a_t, r_t) \right| \pi(a_{1:H-1} | r_{1:H-2}) \\
    &\quad + H \sum_{m=1}^M w_m \max_{a \in \mA} \|\mu_m(a, \cdot) - v_m(a, \cdot)\|_1 \cdot \sum_{a_{1:H-1}, r_{1:H-1}} \Pi_{t=1}^{H-1} \mu_m(a_t, r_t) \pi(a_{1:H} | r_{1:H-1}).
\end{align*}
For the second term, we have  
\begin{align*}
    &\left|\sum_{a_{1:H-1}, r_{1:H-1}} \Pi_{t=1}^{H-1} \mu_m(a_t, r_t) \pi(a_{1:H} | r_{1:H-1}) \right| = \left|\sum_{a_{1:H-1}, r_{1:H-1}} \PP_m(a_{1:H-1},r_{1:H-1}) \right|=1
\end{align*}
for all $m \in [M]$. For the first term, we can recursively apply the same inequality until the time step reaches to $t=1$. Applying this recursively,
\begin{align*}
    |V(\pi) - V_{\mathrm{aux}}(\pi)| &\le H^2 \cdot \sum_{m=1}^M w_m \left( \max_{a \in \mA} \|v_m(a,\cdot)\|_1 \right)^{H-1} \cdot \max_{a \in \mA} \|\mu_m(a, \cdot) - v_m(a, \cdot)\|_1.
\end{align*}
To bound the above, we first consider the case when $w_m \ge \frac{\epsilon}{H^2 M (Z\sqrt{2M})^H}$. Note that if $w_{\mathrm{min}} \ge \frac{\epsilon}{H^2 M (Z\sqrt{2M})^H}$, then this is always the case. In this case, for every $a \in \mA$, we have (recall \eqref{eq:diff_vmu})
\begin{align*}
    \max_{a \in \mA} \|\mu_m(a,\cdot) - v_m(a, \cdot)\|_1 &\le 2\sqrt{M} Z\delta_{\mathrm{sub}} /w_m^{1/2} \le \frac{\epsilon}{H^2 \sqrt{Mw_m}}, \\
    \max_{a \in \mA} \|v_m(a,\cdot)\|_1 &\le 1 + \max_{a \in \mA} \|v_m(a,\cdot) - \mu_m(a,\cdot)\|_1 \\
    &\le 1 + 2\sqrt{M} Z\delta_{\mathrm{sub}} /w_m^{1/2} \le 1 + 1/H,
\end{align*}
where we use our choice of $\delta_{\mathrm{sub}}$ in \eqref{eq:choice_of_delta_a}. By the second condition, $$(\max_{a \in \mA} \|v_m(a,\cdot)\|_1)^{H-1} \le (1+1/H)^{H-1} \le e.$$ 
The first condition can be combined with Proposition \ref{proposition:core_set} similarly to the $H \ge 2M-1$ case to get
\begin{align*}
    H^2 \sum_{m: w_m \ge \frac{\epsilon}{H^2 M (Z\sqrt{2M})^H}} w_m &\left( \max_{a \in \mA} \|v_m(a,\cdot)\|_1 \right)^{H-1} \max_{a\in\mA} \|\mu_m(a,\cdot) - v_m(a, \cdot)\|_1 \\
    &\le e \sum_{m=1}^M \sqrt{w_m/M} \epsilon \le O(\epsilon). 
\end{align*}
On the other hand, if $w_m < \frac{\epsilon}{H^2 M (Z\sqrt{2M})^H}$, then we can directly bound as
\begin{align*}
    H^2 \sum_{m: w_m < \frac{\epsilon}{H^2 M (Z\sqrt{2M})^H}} w_m &\left( \max_{a \in \mA} \|v_m(a,\cdot)\|_1 \right)^{H-1} \max_{a\in\mA} \|\mu_m(a,\cdot) - v_m(a, \cdot)\|_1 \\
    &\le \frac{\epsilon}{M} \sum_{m=1}^M \frac{1}{(Z\sqrt{2M})^H} (Z\sqrt{2M})^{H} \le O(\epsilon). 
\end{align*}
Thus, we have $|V(\pi) - V_{\mathrm{aux}} (\pi)| \le O(\epsilon)$.

\paragraph{Term II. $|V_{\mathrm{aux}} (\pi) - \hat{V}_{\mathrm{aux}} (\pi)|$}: We use the moment-closeness properties between $v_m$ and $\hat{v}_m$ given similarly to Lemma \ref{proposition:short_time_core_set_moments}.
\begin{lemma}
    For any given degree $l \ge 1$, if $\|\sum_{m=1}^M w_m \nu_m^{\bigotimes l} - \sum_{m=1}^M \hat{w}_m \hat{\nu}_m^{\bigotimes l}\|_{\infty} \le \delta$, then $v_m$ and $\hat{v}_m$ satisfy
    \begin{align*}
        \left \|\sum_{m=1}^M w_m v_m^{\bigotimes l} - \sum_{m=1}^M \hat{w}_m \hat{v}_m^{\bigotimes l} \right\|_{\infty} \le (2M)^{l/2} \cdot \delta.
    \end{align*}
\end{lemma}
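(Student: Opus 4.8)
The plan is to recognize that this lemma is the exact analogue of Proposition~\ref{proposition:short_time_core_set_moments}, with the ideal reconstruction map $T$ replaced by the empirical map $\hat{T}$ defined in \eqref{eq:transform_subspace}. Recall that $v_m = \hat{T}\nu_m$ and $\hat{v}_m = \hat{T}\hat{\nu}_m$, so both families are images of the corresponding parameter vectors under the \emph{same} linear map $\hat{T}\in\mathbb{R}^{AZ\times n}$. Consequently the entire combinatorial argument of Appendix~\ref{appendix:short_horizon_core_set_moments} transfers once a single structural fact about $\hat{T}$ is in place.

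First I would establish the row-norm bound $\max_{(a,z)\in\mA\times\mZ}\|\hat{T}_{(a,z),:}\|_1 \le \sqrt{2M}$. This is immediate from the construction of $\hat{T}$: it is the core-set reconstruction map built from the estimated subspace $\widehat{\mathbf{U}}$ via Lemma~\ref{lemma:find_sampling_coordinates} (equivalently Proposition~\ref{proposition:core_set} with $\epsilon_0=0$), so it satisfies the $\ell_\infty$ robustness guarantee $\|\hat{T}(\nu-\hat{\nu})\|_\infty \le \sqrt{2M}\,\|\nu-\hat{\nu}\|_\infty$ for every pair of inputs. Because $\hat{T}$ is linear, this uniform bound is exactly equivalent to each row of $\hat{T}$ having $\ell_1$ norm at most $\sqrt{2M}$; this is the same observation used to control the rows of $T$ in the proof of Proposition~\ref{proposition:short_time_core_set_moments}.

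With the row bound secured, I would mirror that expansion. Writing each coordinate as $v_m(a,z)=\sum_{j=1}^n \hat{T}_{(a,z),j}\,\nu_m(j)$ and examining a single tensor entry at position $((a_1,z_1),\ldots,(a_l,z_l))$, I would unfold the $l$-fold product and pull the sum over multi-indices $(j_1,\ldots,j_l)\in[n]^l$ outside, obtaining
\begin{align*}
    \sum_{(j_1,\ldots,j_l)\in[n]^l}\Big(\Pi_{k=1}^l \hat{T}_{(a_k,z_k),j_k}\Big)\cdot\Big(\sum_{m=1}^M w_m \Pi_{k=1}^l \nu_m(j_k) - \sum_{m=1}^M \hat{w}_m \Pi_{k=1}^l \hat{\nu}_m(j_k)\Big).
\end{align*}
The hypothesis $\|\sum_m w_m \nu_m^{\bigotimes l} - \sum_m \hat{w}_m \hat{\nu}_m^{\bigotimes l}\|_\infty\le\delta$ bounds each parenthesized moment factor by $\delta$ uniformly in $(j_1,\ldots,j_l)$, so the absolute value is at most $\delta\cdot\sum_{(j_1,\ldots,j_l)}\Pi_{k=1}^l|\hat{T}_{(a_k,z_k),j_k}| = \delta\cdot\Pi_{k=1}^l\|\hat{T}_{(a_k,z_k),:}\|_1 \le \delta\,(2M)^{l/2}$, using the row-norm bound and factoring the product over coordinates. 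Since the position was arbitrary, taking the elementwise maximum gives the claim.

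I do not anticipate a genuine obstacle: the combinatorial expansion is identical to the one already carried out for Proposition~\ref{proposition:short_time_core_set_moments}. The only point requiring care is confirming that the $\sqrt{2M}$ row bound, derived there for the ideal map $T$, carries over to the empirical map $\hat{T}$. This holds because $\hat{T}$ is produced by the same optimal-design procedure applied to $\widehat{\mathbf{U}}$ in place of $\mathbf{U}$, and the row bound depends only on the $g$-optimality guarantee $g(\rho)\le 2M$ of Theorem~\ref{theorem:small_core_set}, not on how accurately $\widehat{\mathbf{U}}$ approximates $\mathbf{U}$.
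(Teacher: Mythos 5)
Your proof is correct and follows essentially the same route as the paper: establish the row bound $\|\hat{T}_{(a,z),:}\|_1 \le \sqrt{2M}$ via Proposition~\ref{proposition:core_set}, then repeat the tensor-entry expansion of Appendix~\ref{appendix:short_horizon_core_set_moments} with $\hat{T}$ in place of $T$. The paper's proof is just a terser statement of exactly these two steps, so there is nothing to add.
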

\begin{proof}
    Note that $\| \sum_{m=1}^M w_m (\hat{T} \nu_m)^{\bigotimes l} - \sum_{m=1}^M \hat{w}_m (\hat{T} \hat{\nu}_m)^{\bigotimes l}\|_{\infty} \le (2M)^{l/2} \delta$, following the same argument in Appendix \ref{appendix:short_horizon_core_set_moments}: the conclusion of Proposition \ref{proposition:core_set} also implies that the $l_1$ norm of every row in $\hat{T}$ is less than $\sqrt{2M}$, {\it i.e.,}
    \begin{align}
        \| \hat{T}_{(a,z), :} \|_1 \le \sqrt{2M}, \qquad \forall (a,z) \in \mA \times \mZ, \label{eq:hat_T_row_norm}
    \end{align}
    Lemma follows since $v_m$ is a vector consisting of partial coordinates of $\hat{T} \nu_m$. 
\end{proof}

Now we proceed as
\begin{align*}
    |V_{\mathrm{aux}}(\pi) &- \hat{V}_{\mathrm{aux}} (\pi)| = \left|\sum_{a_{1:H}, r_{1:H}} \left( \sum_{t=1}^H r_t \right) \left( \sum_{m=1}^M w_m \Pi_{t=1}^H v_m(a_t, r_t) - \sum_{m=1}^M \hat{w}_m \Pi_{t=1}^H \hat{v}_m(a_t, r_t) \right) \pi(a_{1:H} | r_{1:H-1}) \right| \\
    &\le H \cdot \sum_{a_{1:H}, r_{1:H}} \left| \sum_{m=1}^M w_m \Pi_{t=1}^H v_m(a_t, r_t) - \sum_{m=1}^M \hat{w}_m \Pi_{t=1}^H \hat{v}_m(a_t, r_t) \right| \pi(a_{1:H} | r_{1:H-1}) \\
    &\le H Z^H (2M)^{H/2} \delta_{\mathrm{tsr}}. 
\end{align*}
Choice of $\delta_{\mathrm{tsr}} = (\epsilon/H) / (\sqrt{2M} Z)^H$ for $H < 2M-1$ gives $|V_{\mathrm{aux}} (\pi) - \hat{V}_{\mathrm{aux}} (\pi)| = O(\epsilon)$.

\paragraph{Term III. $|\hat{V}_{\mathrm{aux}} (\pi) - \hat{V} (\pi)|$}: This case is almost similar to the case $|V (\pi) - V_{\mathrm{aux}} (\pi)|$.
\begin{align*}
    |\hat{V}(\pi) &- \hat{V}_{\mathrm{aux}}(\pi)| = \left|\sum_{a_{1:H}, r_{1:H}} \left( \sum_{t=1}^H r_t \right) \sum_{m=1}^M \hat{v}_m \left(\Pi_{t=1}^H \hat{\mu}_m(a_t, r_t) - \Pi_{t=1}^H \hat{\mu}_m(a_t, r_t)\right) \pi(a_{1:H} | r_{1:H-1}) \right| \\
    &\le H \cdot \sum_{a_{1:H}, r_{1:H}} \left| \sum_{m=1}^M \hat{w}_m \left(\Pi_{t=1}^H \hat{v}_m(a_t, r_t) - \Pi_{t=1}^H \hat{\mu}_m(a_t, r_t)\right) \pi(a_{1:H} | r_{1:H-1}) \right| \\
    &\le H \cdot \sum_{a_{1:H-1}, r_{1:H-1}} \sum_{m=1}^M \hat{w}_m \left| \Pi_{t=1}^{H-1} \hat{v}_m(a_t, r_t) - \Pi_{t=1}^{H-1} \hat{\mu}_m(a_t, r_t)\right| \sum_{a_H, r_H} |\hat{v}_m(a_H, r_H)| \pi(a_{1:H} | r_{1:H-1})  \\
    &\quad + H\cdot \sum_{a_{1:H-1}, r_{1:H-1}} \sum_{m=1}^M \hat{w}_m \Pi_{t=1}^{H-1} \hat{\mu}_m(a_t, r_t) \sum_{a_H, r_H} \left| \hat{\mu}_m(a_H, r_H) - \hat{v}_m(a_H, r_H)\right| \pi(a_{1:H} | r_{1:H-1})  \\
    &\le H \cdot \sum_{a_{1:H-1}, r_{1:H-1}} \sum_{m=1}^M \hat{w}_m \max_{a \in \mA} \|\hat{v}_m(a, \cdot)\|_1 \left| \Pi_{t=1}^{H-1} \hat{\mu}_m(a_t, r_t) - \Pi_{t=1}^{H-1} \hat{v}_m(a_t, r_t) \right| \pi(a_{1:H-1} | r_{1:H-2}) \\
    &\quad + H \sum_{m=1}^M \hat{w}_m \max_{a \in \mA} \|\hat{\mu}_m(a, \cdot) - \hat{v}_m(a, \cdot)\|_1 \sum_{a_{1:H-1}, r_{1:H-1}} \Pi_{t=1}^{H-1} \hat{\mu}_m(a_t, r_t) \pi(a_{1:H} | r_{1:H-1}) \\
    &\le H^2 \cdot \left(\sum_{m=1}^M \hat{w}_m \left(\max_{a \in \mA} \|\hat{v}_m(a,\cdot)\|_1 \right)^{H-1} \cdot \max_{a \in \mA} \|\hat{\mu}_m(a, \cdot) - \hat{v}_m(a, \cdot)\|_1 \right).
\end{align*}
For each $m \in [M]$, if $\hat{w}_{m} \ge \frac{\epsilon}{H^2 M (Z\sqrt{2M})^H}$, then
\begin{align*}
    \|\hat{v}_m(a, \cdot)\|_1 &\le \left|\sum_{z: \hat{v}_m(a, z) < 0} \hat{v}_m(a,z) \right| + \left|\sum_{z: \hat{v}_m(a, z) > 0} \hat{v}_m(a,z) \right| \\
    &\le 1 + 2Z\sqrt{M}\delta_{\mathrm{sub}} / \hat{w}_m^{1/2} \le 1 + 1/H,
\end{align*}
where we used the choice of $\delta_{\mathrm{sub}}$ in \eqref{eq:choice_of_delta_a}. We also need to show that $\|\hat{v}_m(a, \cdot) - \hat{\mu}_m(a,\cdot)\|_1$ is bounded. Let $\tilde{\mu}_m$ be the intermediate step after clipping $\hat{v}_m$ before normalization as in \ref{eq:clipping_mu}. Due to the third condition of \eqref{eq:valid_param_2}, clipped amount can be at most
\begin{align*}
    \|\hat{v}_m (a,\cdot) - \tilde{\mu}_m(a, \cdot)\|_1 \le 2Z\sqrt{M}\delta_{\mathrm{sub}} / \hat{w}_m^{1/2}.
\end{align*}
With the second condition of \eqref{eq:valid_param_2}, we have
\begin{align*}
    \|\tilde{\mu}_m(a, \cdot)\|_1 &= \sum_{z \in \mZ} \tilde{\mu}_m(a, z) \le \left|\sum_{z \in \mZ} \tilde{\mu}_m(a, z) - \hat{\mu}_m(a,z)\right| + \left| \sum_{z \in \mZ} \hat{\mu}_m(a,z)\right| \\
    &\le \|\hat{v}_m (a,\cdot) - \tilde{\mu}_m(a, \cdot)\|_1 + \| \hat{\mu}_m(a,\cdot) \|_1 \le 1 + 3Z\sqrt{M}\delta_{\mathrm{sub}}/\hat{w}_m^{1/2}. 
\end{align*}
Similarly, 
\begin{align*}
    \|\tilde{v}_m(a, \cdot)\|_1 &= \sum_{z \in \mR} \tilde{v}_m(a, z) \ge \left| \sum_{z \in \mR} v_m(a,z)\right| - \left|\sum_{z \in \mR} \tilde{v}_m(a, z) - v_m(a,z)\right| \\
    &\ge |\hat{\phi}_m(a,\mR)| - \|\hat{v}_m (a,\cdot) - \tilde{v}_m(a, \cdot)\|_1 \ge 1 - 3Z\sqrt{M}\delta_{\mathrm{sub}}/\hat{w}_m^{1/2}. 
\end{align*}
Therefore, we can show that 
\begin{align*}
    \|\hat{\mu}_m(a, \cdot) - \hat{v}_m(a, \cdot)\|_1 &\le \|\hat{\mu}_m(a, \cdot) - \tilde{v}_m(a, \cdot)\|_1 + \|\tilde{v}_m(a, \cdot) - \hat{v}_m(a, \cdot)\|_1 \\
    &\le \frac{|\|\tilde{v}_m(a, \cdot)\|_1 - 1|}{\|\tilde{v}_m(a, \cdot)\|_1} + 2Z\sqrt{M}\delta_a / \hat{w}_m^{1/2} \\
    &\le 8Z\sqrt{M}\delta_{\mathrm{sub}} / \hat{w}_m^{1/2},
\end{align*}
where we used $1 - 3Z\delta_{\mathrm{sub}}/\hat{w}_m^{1/2} \ge 1/2$ due to the first constraint and the choice of $\delta_{\mathrm{sub}}$. 

If $\hat{w}_m < \frac{\epsilon}{H^2 M (Z\sqrt{2M})^H}$, then we can use the fact that all $l_1$-norm of rows of $\hat{T}$ are less than $\sqrt{2M}$ (equation \eqref{eq:hat_T_row_norm}), and thus $\|\hat{v}_m(a,\cdot)\|_1 \le \sqrt{2M}$ for all $a \in \mA$. We also have that $\|\hat{\mu}_m(a,\cdot) - \hat{v}_m(a,\cdot)\|_1 \le 1 + \sqrt{2M}$. Now we plug all things together, and proceed as
\begin{align*}
    |\hat{V}(\pi) - \hat{V}_{\mathrm{aux}}(\pi)| &\le H^2 \cdot \left(\sum_{m=1}^M \hat{w}_m \left(\max_{a \in \mA} \|\hat{v}_m(a,\cdot)\|_1 \right)^{H-1} \cdot \max_{a \in \mA} \|\hat{\mu}_m(a, \cdot) - \hat{v}_m(a, \cdot)\|_1 \right) \\
    &\le e H^2 \cdot \left(\sum_{m:\hat{w}_m \ge \frac{\epsilon}{H^2 M (Z\sqrt{2M})^H}} \hat{w}_m \cdot \max_{a \in \mA} \|\hat{\mu}_m(a, \cdot) - \hat{v}_m(a, \cdot)\|_1 \right) \\
    & \quad + H^2 \cdot \left(\sum_{m:\hat{w}_m < \frac{\epsilon}{H^2 M (Z\sqrt{2M})^H}} \hat{w}_m \cdot (Z\sqrt{2M})^H \right) \\
    &\le 4 e H^2 \sum_{m=1}^M \epsilon \sqrt{\hat{w}_m/M} / H^2 + \epsilon \le O(\epsilon). 
\end{align*}
Collecting all three terms, we have $|V(\pi) - \hat{V}(\pi)| \le O(\epsilon)$. This concludes the proof of Theorem \ref{theorem:learn_lmab}.

\section{LMAB with Gaussian Rewards}
\label{appendix:Gaussian_Rewards}
So far we have focused on rewards with finite support $Z = O(1)$. In this section we consider and LMAB setting with Gaussian rewards and generalize Algorithm~\ref{algo:learn_lmab}. Indeed, some steps in the algorithm cannot be straightforwardly extended if $Z = \infty$. In this subsection, we consider a standard Gaussian reward distribution--a special case of continuous rewards--and generalize Algorithm~\ref{algo:learn_lmab} to this setting. We make the following assumption.
\begin{assumption}[Gaussian Rewards]
     \label{assumption:gaussian_rewards}
     The reward distribution conditioning on an action $a \in \mA$ in a context $m \in [M]$ is $\mathcal{N}(\mu_m(a), 1)$ for some $|\mu_m(a)| \le 1$.
\end{assumption}
Even though the rewards have infinite support, we show that the same conclusion holds as in finite-support case, {\it i.e.,} the sample-complexity is upper bounded by $O((MH/\epsilon)^{O(\min(H, M))} + \poly(A,H,M))$. Algorithms for the Gaussian case differ significantly in identifiable $H \ge 2M-1$ and unidentifiable regimes $H < 2M-1$. When $H \ge 2M-1$, there are only minor changes in the algorithm design for defining core actions and how tensors are constructed. We handle this case in Appendix \ref{appendix:algorithm_gaussian_long}. A more interesting case is the parameter unidentifiable regime, where we follow an alternative approach and discretize the support of rewards by $O(\epsilon/H^2)$-level. This approach is described in Appendix \ref{appendix:algorithm_gaussian_short}.

We can reach similar conclusions for the Gaussian rewards:
\begin{theorem}
    \label{theorem:learn_lmab_gaussian}
     Consider any LMAB with $M$ contexts under Gaussian reward Assumption \ref{assumption:gaussian_rewards}. There exists an algorithm such that with probability at least $1 - \eta$, it returns an $\epsilon$-optimal policy using a number of episodes at most 
     \begin{align*}
         \poly(H, M, A, 1/\epsilon, \log(A/\eta)) + \poly(\log(M/\eta), H, M)^{2M-1} \cdot \epsilon^{-(4M-2)}, &\quad \text{if } H \ge 2M-1, \\
         \poly(H, w_{\mathrm{min}}^{-1}, A, 1/\epsilon, \log(A/\eta)) + \poly(\log(MH/(\eta\epsilon)), H, M)^{H} \cdot \epsilon^{-(2H+2)}, &\quad \text{otherwise}.
    \end{align*}
\end{theorem}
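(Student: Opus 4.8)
The plan is to port the three-step pipeline of Algorithm~\ref{algo:learn_lmab} (subspace estimation, moment matching, planning) to the Gaussian model, making two changes: replacing bounded-reward (Hoeffding-type) concentration by sub-Gaussian/sub-exponential concentration throughout, and treating the two regimes separately. Because conditioned on a context $m$ and action $a$ the reward is $\mathcal{N}(\mu_m(a),1)$, each latent parameter is now a \emph{mean vector} $\mu_m \in \mathbb{R}^A$, and conditional independence of rewards across time gives, for distinct steps, $\Exs[\Pi_t r_t \mid m, (a_{i_t})_t] = \Pi_t \mu_m(a_{i_t})$. Thus playing a fixed action at each of $l$ distinct steps and multiplying the observed rewards is an unbiased estimator of the moment tensor $T_l = \sum_m w_m \nu_m^{\bigotimes l}$ with core coordinates $\nu_m(j) = \mu_m(a_j)$ --- using distinct time steps is exactly what removes the variance-$1$ contamination a single $\Exs[r^2]$ would carry.

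For the \textbf{identifiable regime} $H \ge 2M-1$ I would run the pipeline almost verbatim with $d=A$, $k=M$ in Lemma~\ref{lemma:find_sampling_coordinates}, so the core set is $n=O(M\log\log M)$ \emph{actions} rather than action-value pairs. The subspace $\mathbf{U}=\mathrm{span}\{\mu_m\}$ is recovered from $\hat M_2$, whose entries $\Exs[r_1 r_2 \mid a_1=a_i, a_2=a_j]=\sum_m w_m \mu_m(a_i)\mu_m(a_j)$ are estimated from first-two-step reward products; the only change from Lemma~\ref{lemma:subspace_estimation} is that concentration now uses sub-exponential tails, at the cost of $\log$-factors. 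The moment-to-Wasserstein conversion of Lemma~\ref{lemma:moment_closeness} is a statement about atomic distributions over real vectors, so it applies unchanged. The value-closeness step of Proposition~\ref{proposition:total_variation_diff} is replaced by its Gaussian analogue: since the total-variation distance between $\mathcal{N}(\mu_m(a),1)$ and $\mathcal{N}(\hat\mu_{m'}(a),1)$ is $O(|\mu_m(a)-\hat\mu_{m'}(a)|)$, the per-step $L_1$ term in \eqref{eq:value_to_wasserstein} becomes $O(|\mu_m(a)-\hat\mu_{m'}(a)|)$ and the telescoping argument goes through. Combining the $\poly(A)$ subspace cost with the $n^{2M-1}\delta_{\mathrm{tsr}}^{-2}$ moment cost --- where estimating products of up to $2M-1$ Gaussians to accuracy $\delta_{\mathrm{tsr}}$ inflates the concentration constants by $\poly(\log(M/\eta))^{O(M)}$ --- yields the first bound.

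For the \textbf{unidentifiable regime} $H<2M-1$ the obstruction is that Lemma~\ref{lemma:short_time_moment_to_values} uses the factor $Z^H$, which is vacuous for unbounded rewards. The remedy is discretization: truncate rewards to $[-R,R]$ with $R=O(\sqrt{\log(MH/(\eta\epsilon))})$ (a Gaussian tail bound makes the truncation error $o(\epsilon/H)$) and bin the interval at width $O(\epsilon/H^2)$, producing an \emph{effective discrete} LMAB with $Z'=\tilde O(H^2/\epsilon)$ reward levels whose bin-center rewards differ from the true ones by $O(\epsilon/H^2)$, so the cumulative-reward distortion is $O(\epsilon/H)$. I would then run the discrete analysis of Theorem~\ref{theorem:learn_lmab} on this $Z'$-alphabet model (matching $H$-th order moments of the binned observations, with the subspace now living in $\mathbb{R}^{AZ'}$ and small-weight contexts retained, hence the $w_{\mathrm{min}}^{-1}$ dependence). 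Substituting $Z'=\tilde O(H^2/\epsilon)$ into the discrete moment cost $\sim n^H\,\delta_{\mathrm{tsr}}^{-2}$ with $\delta_{\mathrm{tsr}}=O(\epsilon/H)/(Z'\sqrt{2M})^H$ gives $(Z')^{2H}\epsilon^{-2}=\poly(\log(MH/(\eta\epsilon)),H,M)^{H}\,\epsilon^{-(2H+2)}$, the claimed second bound.

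The step I expect to be the main obstacle is precisely this extension of Lemma~\ref{lemma:short_time_moment_to_values} to unbounded rewards, i.e.\ certifying that the discretized problem is $O(\epsilon)$-faithful. It is not enough that binning perturbs each reward by $O(\epsilon/H^2)$; one must also show that restricting a policy to act on \emph{binned} (rather than continuous) observation histories loses at most $O(\epsilon)$ of the optimal achievable value, and that matching the discrete $H$-th moments of the binned distributions then controls the total-variation distance between the two continuous trajectory laws. A secondary technical point is the sub-Weibull concentration of products of up to $\min(H,2M-1)$ standard Gaussians: one must verify that the moments-of-products constants scale only polynomially (inside the exponentiated base) rather than catastrophically, so that the $\log(1/\eta)$ dependence enters the base of the exponent, as stated, rather than the exponent itself.
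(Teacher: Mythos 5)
Your proposal follows essentially the same route as the paper's proof: in the identifiable regime the paper likewise works with core \emph{actions} in $\mathbb{R}^A$, estimates $T_l$ by products of rewards at distinct time steps, controls them via the hypercontractivity (Gaussian-polynomial) concentration inequality, reuses Lemma \ref{lemma:moment_closeness} unchanged, and replaces Proposition \ref{proposition:total_variation_diff}'s $L_1$ term by the Gaussian bound $d_{TV}(\mathcal{N}(\mu_m(a),1),\mathcal{N}(\hat{\mu}_{m'}(a),1)) \le |\mu_m(a)-\hat{\mu}_{m'}(a)|$; in the unidentifiable regime it likewise truncates rewards at $O(\sqrt{\log(H/\epsilon)})$, bins at width $\epsilon/H^2$, and runs the discrete algorithm on the resulting $\tilde{O}(H^2/\epsilon)$-letter alphabet. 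The two obstacles you flag are exactly where the paper spends its technical effort: the faithfulness of the discretization is its Lemma \ref{lemma:auxiliary_system}, and the product-of-Gaussians concentration is handled by Lemma \ref{lemma:hypercontrat_ineq}, whose $\log^l(1/\eta)$ cost indeed enters the base of the exponent as claimed.
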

Note that the dependency on $\epsilon$ is at most $\epsilon^{-(4M-2)}$ and smaller when $H < 2M-2$. In the parameter identifiable regime, similarly to discrete reward cases, near-optimality of returned policy comes from the closeness of latent model parameters in the Wasserstein metric. In the parameter unidentifiable regime, we first discretize the support of rewards in $O(\epsilon)$ level. Then we can apply the same procedures for handling discrete rewards as in Section \ref{subsec:general_short_horizon}. We provide the full details in Appendix \ref{appendix:Gaussian_Rewards}.

\subsection{Algorithm for Identifiable Regime $H \ge 2M-1$}
\label{appendix:algorithm_gaussian_long}
Let $\hat{M}_2 \in \mathbb{R}^{A \times A}$ be the empirical second-order moments as
\begin{align*}
    \hat{M}_2 = \frac{1}{2N_0} \sum_{k=1}^{N_0} r_1^k r_2^k \cdot \bm{e}_{a_1^k} \bm{e}_{a_2^k}^\top.
\end{align*}
Then let $\widehat{\mathbf{U}}$ be the subspace spanned by top-$M$ eigenvectors of $\hat{M}_2$. A similar conclusion to Lemma \ref{lemma:subspace_estimation} holds:
\begin{lemma}
    \label{lemma:subspace_estimation2}
    Let $\widehat{\mathbf{U}}$ be a subspace spanned by top-$M$ eigenvectors of $\hat{M}_2$. After we estimate $\hat{M}_2$ using $N_0 = O(A^4 \log(A/\eta) / \delta_{\mathrm{sub}}^4)$ episodes, with probability at least $1 - \eta$, for all $m \in [M]$, there exists $\Delta_m: \|\Delta_m\|_{\infty} \le \delta_{\mathrm{sub}}/ w_m^{1/2}$ such that $\mu_m + \Delta_m \in \widehat{\mathbf{U}}$. 
\end{lemma}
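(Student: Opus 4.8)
The plan is to follow the proof of Lemma \ref{lemma:subspace_estimation} essentially verbatim, isolating the single place where the Gaussian assumption forces a genuine change: the concentration of $\hat M_2$ around its mean. First I would compute the mean. Within an episode the two rewards $r_1,r_2$ are conditionally independent given the latent context $m$, so for $a_1=a_i,\,a_2=a_j$ we have $\mathbb{E}[r_1 r_2\mid m]=\mu_m(a_i)\mu_m(a_j)$. Because actions are drawn uniformly and the construction is symmetrized, $\mathbb{E}[\hat M_2]$ is a fixed positive multiple of $M_2=\sum_{m=1}^M w_m\mu_m\mu_m^\top\in\mathbb{R}^{A\times A}$; concretely each entry averages only the $\approx N_0/A^2$ episodes whose action pair matches, so $\mathbb{E}[\hat M_2]=A^{-2}M_2$. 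Rescaling to $\tilde M_2:=A^2\hat M_2$ gives $\mathbb{E}[\tilde M_2]=M_2$, a matrix of rank at most $M$ with range exactly $\mathbf{U}=\mathrm{span}\{\mu_1,\dots,\mu_M\}$; its top-$M$ eigenspace coincides with $\widehat{\mathbf{U}}$ since positive scaling does not move eigenvectors.

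The key new step is the deviation bound $\|\tilde M_2-M_2\|_2\le\delta$. In the discrete setting each summand is an indicator-type quantity in $[0,1]$, so Hoeffding/Bernstein applies directly. Here each summand is a product $r_1r_2$ of two Gaussians with means in $[-1,1]$ and unit variance; this product is unbounded but sub-exponential with $\psi_1$-norm $O(1)$ (a product of two $O(1)$-subgaussians). I would therefore apply Bernstein's inequality for sub-exponential variables entrywise. Since entry $(i,j)$ of $\tilde M_2$ is the sample average of $r_1r_2$ over the $\approx N_0/A^2$ matching episodes, its effective sample size is $N_0/A^2$, and the sub-exponential tail yields, in the regime $N_0\gtrsim A^2\log(A/\eta)$ where the square-root term dominates, $\|\tilde M_2-M_2\|_\infty\le C\sqrt{A^2\log(A/\eta)/N_0}$ with probability $1-\eta$ after a union bound over the $A^2$ entries. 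Converting to spectral norm via $\|\cdot\|_2\le A\|\cdot\|_\infty$ for $A\times A$ matrices gives $\delta=\|\tilde M_2-M_2\|_2\le C\sqrt{A^4\log(A/\eta)/N_0}$.

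With this in hand the remainder is identical to the discrete proof. By Weyl's inequality and $\mathrm{rank}(M_2)=M$ we get $\lambda_{M+1}(\tilde M_2)\le\lambda_{M+1}(M_2)+\delta=\delta$. For any unit $y$ in the orthogonal complement of $\widehat{\mathbf{U}}$, using $w_m\mu_m\mu_m^\top\preceq M_2$,
\begin{align*}
    w_m\,(y^\top\mu_m)^2 \le y^\top M_2\, y = y^\top(M_2-\tilde M_2)\,y + y^\top \tilde M_2\, y \le \delta + \delta = 2\delta,
\end{align*}
so $\|\mu_m-P_{\widehat{\mathbf{U}}}\mu_m\|_2\le\sqrt{2\delta/w_m}$ and hence $\|\mu_m-P_{\widehat{\mathbf{U}}}\mu_m\|_\infty\le\sqrt{2\delta/w_m}$. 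Taking $\Delta_m:=P_{\widehat{\mathbf{U}}}\mu_m-\mu_m$ yields $\mu_m+\Delta_m\in\widehat{\mathbf{U}}$ with $\|\Delta_m\|_\infty\le\sqrt{2\delta/w_m}$; choosing $\delta=\Theta(\delta_{\mathrm{sub}}^2)$ forces $\|\Delta_m\|_\infty\le\delta_{\mathrm{sub}}/w_m^{1/2}$, and solving $C\sqrt{A^4\log(A/\eta)/N_0}=\Theta(\delta_{\mathrm{sub}}^2)$ gives $N_0=O(A^4\log(A/\eta)/\delta_{\mathrm{sub}}^4)$, as claimed.

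The main obstacle is the sub-exponential concentration: one must verify that $r_1 r_2$ has $\psi_1$-norm $O(1)$ uniformly over contexts and action pairs (using $|\mu_m(a)|\le1$ and unit variance), and that $N_0$ is large enough that Bernstein operates in its square-root regime rather than its heavier linear regime, so the $A^4$ rate survives; the unbounded reward support is precisely what prevents a direct appeal to the discrete argument. Everything downstream of the deviation bound is a verbatim repeat of the proof of Lemma \ref{lemma:subspace_estimation}.
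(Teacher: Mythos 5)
Your proposal is correct and takes essentially the same route as the paper: the paper's proof of this lemma is literally a one-line deferral to the discrete-case proof of Lemma \ref{lemma:subspace_estimation}, i.e., exactly the entrywise-concentration $\rightarrow$ spectral-norm $\rightarrow$ Weyl $\rightarrow$ quadratic-form-projection chain you reproduce. If anything you are more careful than the paper, since you isolate and fix the one point where ``identical'' is not quite accurate --- the summands $r_1^k r_2^k$ are now unbounded, so the bounded-variable concentration must be replaced by sub-exponential (Bernstein/$\psi_1$) concentration, together with the check that $N_0 \gtrsim A^2\log(A/\eta)$ keeps the tail bound in its square-root regime so the $A^4/\delta_{\mathrm{sub}}^4$ rate survives.
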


Proof of Lemma \ref{lemma:subspace_estimation2} is identical to the proof of Lemma \ref{lemma:subspace_estimation}. Let $\delta_{\mathrm{sub}} = \epsilon / (2MH^2)$. 

Similarly to finite-support reward distributions, let $\{\hat{\beta}_j\}_{j=1}^M$ be the orthonormal basis of $\widehat{\mathbf{U}}$ and construct $\hat{\Phi} \in \mathbb{R}^{A \times M}$ such that the $j^{\mathrm{th}}$ column of $\hat{\Phi}$ is $\hat{\beta_j}$, {\it i.e.,} $\hat{\Phi}_{:, j} = \hat{\beta}_j$. We invoke Theorem \ref{theorem:small_core_set} to get a set of core actions $\{a_j\}_{j=1}^n$. The main difference to the finite-support case is that we do not need to specify a corresponding event of rewards. Instead, we measure the correlation in terms of actual reward values. Specifically, for every multi-index $(i_1, i_2, ..., i_l) \in [n]^l$, using $N_1$ episodes where 
$$N_1 = O\left( \log (l n^l / \eta) \right)^l / \delta_{\mathrm{tsr}}^2.$$ 
We play $a_t^k = a_{i_t}$ for $t = 1, ..., l$ and $k \in [N_b]$, and estimate higher-order moments:
\begin{align*}
    \hat{T}_l (i_1, ..., i_l) = \frac{1}{N_1} \sum_{k=1}^{N_1} \Pi_{t=1}^d r_t^k. 
\end{align*}
We can easily verify that 
\begin{align*}
    T_l = \Exs[\hat{T}_l] = \sum_{m=1}^M w_m \nu_m^{\bigotimes l}. 
\end{align*}
We can apply the concentration of higher-order polynomials of sub-Gaussian random variables element-wise, given from the following lemma on hypercontractivity inequality:
\begin{lemma}[Hypercontractivity Inequality (Theorem 1.9 in \cite{schudy2012concentration})]
    \label{lemma:hypercontrat_ineq}
    Consider a degree-$l$ polynomial $f$ defined over a set of $N$ independent samples of zero-mean unit-variance Gaussians, such that $f(X_{1:N}) := f(X_1, ..., X_N)$. Then,
    \begin{align*}
        \PP\left(|f(X_{1:N}) - \Exs[f(X_{1:N})]| \ge \lambda \right) \le e^2 \exp\left( -\left( \frac{\lambda^2}{C \cdot var(f(X_{1:N}))} \right)^{1/l} \right),
    \end{align*}
    for some absolute constant $C > 0$.
\end{lemma}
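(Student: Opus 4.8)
The plan is to prove this tail bound by the standard two-step route for polynomial chaos: first establish a hypercontractive \emph{moment} bound, then convert it into a tail bound via Markov's inequality applied to a high moment, followed by an optimization over the moment order. As a preliminary reduction I would center the polynomial: set $g := f(X_{1:N}) - \Exs[f(X_{1:N})]$, which is again a polynomial of degree at most $l$ in the i.i.d.\ standard Gaussians $X_1,\dots,X_N$, but now with $\Exs[g]=0$ and $\Exs[g^2] = var(f(X_{1:N})) =: \sigma^2$. The claim then becomes a tail bound for $|g|$ in terms of $\sigma$, which is invariant under this centering.

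The key step is the Gaussian hypercontractive moment inequality: for every real $p \ge 2$,
\begin{align*}
    \|g\|_p \le (p-1)^{l/2}\,\|g\|_2 = (p-1)^{l/2}\,\sigma \le p^{l/2}\,\sigma.
\end{align*}
I would justify this by decomposing $g$ along its Wiener chaos (Hermite) expansion $g = \sum_{d=1}^{l} g_d$, where $g_d$ is the projection onto the degree-$d$ chaos (the sum starts at $d=1$ since $g$ is centered, and stops at $l$ since $\deg f \le l$). The Ornstein--Uhlenbeck semigroup acts diagonally as $U_\rho g_d = \rho^d g_d$, and Nelson's hypercontractivity theorem states that $U_\rho$ maps $L^2$ into $L^p$ contractively whenever $\rho \le (p-1)^{-1/2}$. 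Taking $\rho = (p-1)^{-1/2}$ and using that each surviving chaos has degree at most $l$ yields the displayed bound. For the purposes of this paper it suffices to cite this classical inequality (equivalently the Gaussian Bonami--Beckner inequality).

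Given the moment bound, I would apply Markov's inequality to the $p$-th moment: for any $p \ge 2$,
\begin{align*}
    \PP\!\left(|g| \ge \lambda\right) = \PP\!\left(|g|^p \ge \lambda^p\right) \le \frac{\Exs[|g|^p]}{\lambda^p} = \left(\frac{\|g\|_p}{\lambda}\right)^{p} \le \left(\frac{p^{l/2}\,\sigma}{\lambda}\right)^{p}.
\end{align*}
I would then optimize by choosing $p$ so that $p^{l/2}\sigma/\lambda = 1/e$, i.e.\ $p = e^{-2/l}(\lambda^2/\sigma^2)^{1/l}$, which gives
\begin{align*}
    \PP\!\left(|g| \ge \lambda\right) \le e^{-p} = \exp\!\left(-\,e^{-2/l}\,(\lambda^2/\sigma^2)^{1/l}\right) = \exp\!\left(-\,(\lambda^2/(e^2\sigma^2))^{1/l}\right),
\end{align*}
which is exactly the asserted form with $C = e^2$. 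This optimal choice is only admissible when it respects $p \ge 2$, i.e.\ in the large-deviation regime; in the complementary small-$\lambda$ regime the chosen exponent is below $2$, so the right-hand side $e^2\exp(-(\lambda^2/(e^2\sigma^2))^{1/l})$ exceeds $e^2 e^{-2} = 1 \ge \PP(\cdot)$ and the bound holds trivially. This is precisely the role of the prefactor $e^2$ in the statement.

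The main obstacle is entirely concentrated in the hypercontractive moment inequality $\|g\|_p \le (p-1)^{l/2}\|g\|_2$; once that is in hand, the Markov step and the one-parameter optimization are routine, and the trivial small-deviation regime is absorbed by the $e^2$ factor. I would therefore present the Markov/optimization argument in full and invoke Nelson's theorem as the single external ingredient, rather than reproving Gaussian hypercontractivity from scratch.
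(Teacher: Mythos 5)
Your proof is correct, but it takes a genuinely different route from the paper: the paper does not prove this lemma at all, importing it as a black box from Theorem 1.9 of \cite{schudy2012concentration}, where the result is established for polynomials of general independent, moment-bounded random variables (not only Gaussians) via combinatorial moment estimates. Your argument instead exploits the Gaussian structure directly: the Wiener-chaos (Hermite) decomposition together with Nelson's hypercontractivity gives the moment bound $\|g\|_p \le (p-1)^{l/2}\|g\|_2 \le p^{l/2}\sigma$ for every real $p \ge 2$, Markov's inequality at the $p$-th moment gives $\PP(|g|\ge \lambda) \le (p^{l/2}\sigma/\lambda)^p$, and the choice $p = (\lambda/(e\sigma))^{2/l}$ yields $\exp\left(-(\lambda^2/(e^2\sigma^2))^{1/l}\right)$ whenever this $p$ is admissible (at least $2$), while the prefactor $e^2$ makes the claimed bound trivial in the complementary small-deviation regime since the exponent is then below $2$. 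All of these steps check out — the centering, the restriction $p\ge 2$, and the absorption of the small-$\lambda$ regime are handled correctly — and you even obtain an explicit constant $C = e^2$, which the cited theorem leaves unspecified. The trade-off between the two routes: citing Schudy--Sviridenko buys generality beyond Gaussian noise (the statement would survive a change of reward model), whereas your derivation is shorter, self-contained modulo one classical theorem, and tailored to exactly the case the paper needs, since under Assumption \ref{assumption:gaussian_rewards} the variables $X_t^k = r_t^k - \mu_{m^k}(a_t^k)$ to which the lemma is applied are standard Gaussians.
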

To show the concentration of $\hat{T}_l (i_1,...,i_l)$ around $T_l(i_1, ..., i_l)$, we can apply Lemma \ref{lemma:hypercontrat_ineq} with plugging $\lambda = O(\delta_{\mathrm{tsr}})$ and $var(f) \le 2^l \cdot var(X^l) / N_1$, where $X \sim \mathcal{N}(0,1)$. Here, $f$ can be viewed as a degree-$l$ polynomial of $X_t^k := r_t^k - \mu_{m^k} (a_t^k)$ for $t = 1, \ldots, l$ and $k = 1, ..., N_1$, where $m^k$ is a latent context for the episode $k$.

Since $var(X^l) \le O(l^l)$, we need $N_1 = O(l^l \log^l \left(l n^l /\eta \right) / \delta_{\mathrm{sub}}^2)$ to make the exponent less than $\eta / (ln^l)$. Take union bound over all elements in $\hat{T}_l$ ensures that $\| T_l - \hat{T}_l \|_{\infty} \le \delta_{\mathrm{tsr}}$ with probability at least $1 - \eta$.

Now we find a set of parameters $\{(\hat{w}_m, \hat{\nu}_m)\}_{m=1}^M$ with the only constraint:
\begin{align*}
    \hat{w}_m \in \mathbb{R}_+, \ \sum_{m=1}^M \hat{w}_m = 1.
\end{align*}

We aim to find
\begin{align*}
    \left\|\sum_{m=1}^M \hat{w}_m \hat{\nu}_m^{\bigotimes l} - \hat{T}_l \right\|_{\infty} \le \delta_{\mathrm{tsr}}, \qquad \forall l \in [2M-1].
\end{align*}
Now we can construct an empirical model by recovering $\hat{\mu}_m = \hat{T}\hat{\nu}_m$ where $\hat{T} \in \mathbb{R}^{A \times M}$ is defined similarly to \eqref{eq:transform_subspace} as
\begin{align*}
    \hat{T}_{:, j} := \rho(a_j) \hat{\Phi} \hat{G}(\rho)^{-1} \hat{\Phi}_{a_j, :}, \qquad \forall j \in [n],
\end{align*}
where $\rho$ is a distribution over rows of $\hat{\Phi}$ found by Theorem \ref{theorem:small_core_set} and $\hat{G}(\rho)$ is defined as in \eqref{eq:g_optimal_design}. We do not need extra clipping and normalization steps here since any $\hat{\mu}_m$ is a valid model parameter. Now with $\{(\hat{w}_m, \hat{\mu}_m)\}$, we call the planning oracle \ref{definition:planning_oracle} and obtain an $\epsilon$-optimal policy.

\subsection{Algorithm with Short Time-Horizon $H < 2M-1$}
\label{appendix:algorithm_gaussian_short}
We first discretize possible reward values: let $\mZ = \{z_1, z_2, ..., z_Z\}$ where $z_i = -4\sqrt{ \log(H/\epsilon)} + (i-1) \cdot \epsilon/H^2$ and $Z = \lfloor 8H^2 \sqrt{\log (H/\epsilon)} / \epsilon \rfloor$. We define an auxiliary reward (pseudo) p.d.f $p_m(a, \cdot)$ for each $a$ and $m$ as the following: for all $s \in [Z-1]$,
\begin{align}
    p_m(a, r) &= \frac{H^2}{\epsilon} \int_{z_s}^{z_{s+1}} \frac{1}{\sqrt{2\pi}} \exp \left( -\frac{(x - \mu_m(a))^2}{2} \right) dx, & \forall r \in [z_s, z_{s+1}),
\end{align}
and $p_m(a,r) = 0$ for all $r \in (-\infty, z_1) \cup [z_Z, \infty)$. Define $\tilde{V}(\cdot)$ be the policy evaluation function in a (pseudo) LMAB model $\tilde{\mB} = (\mA, \{w_m\}_{m=1}^M, \{p_m\}_{m=1}^M)$:
\begin{align*}
    \tilde{V}(\pi) := \sum_{m=1}^M w_m \cdot \sum_{a_{1:H}} \int_{r_{1:H}} \left(\sum_{t=1}^H r_t \right) \Pi_{t=1}^H p_m(a_t, r_t) \pi(a_{1:H} | r_{1:H-1}) d(r_{1:H}).
\end{align*}
We first show that $p_m(a,\cdot)$ is good approximation of true reward distributions:
\begin{lemma}
    \label{lemma:auxiliary_system}
    Let $\mB$ and $\tilde{\mB}$ defined as above. Then for any history-dependent policy $\pi$, $|V(\pi) - \tilde{V}(\pi)| \le 10 \epsilon$. 
\end{lemma}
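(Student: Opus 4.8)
The plan is to control the two distinct sources of error separately: the \emph{truncation} error from discarding the Gaussian tails outside $[z_1,z_Z)$, and the \emph{histogram} error from replacing the true density by its piecewise-constant bin average inside the range. Write $g_m(a,r) := \tfrac{1}{\sqrt{2\pi}}\exp(-(r-\mu_m(a))^2/2)$ for the true reward density, so that $V(\pi)$ and $\tilde{V}(\pi)$ have identical form with $\prod_t g_m$ and $\prod_t p_m$ respectively, and note $p_m(a,\cdot)$ is exactly the bin average of $g_m(a,\cdot)$ on each $[z_s,z_{s+1})$ of width $\epsilon/H^2$. I would first reduce the whole difference to a sum of $H$ single-factor replacements via the telescoping identity
\[
\prod_{t=1}^H g_m(a_t,r_t) - \prod_{t=1}^H p_m(a_t,r_t) = \sum_{\tau=1}^H \Big(\prod_{t<\tau} g_m(a_t,r_t)\Big)\,\big(g_m-p_m\big)(a_\tau,r_\tau)\,\Big(\prod_{t>\tau} p_m(a_t,r_t)\Big),
\]
and denote by $E_\tau$ the corresponding contribution to $V(\pi)-\tilde{V}(\pi)$, so that $V(\pi)-\tilde{V}(\pi)=\sum_{\tau=1}^H E_\tau$.

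The core of the argument is two per-coordinate estimates, uniform in $a$ and $m$: an unweighted bound $\int_{\mathbb{R}} |g_m(a,r)-p_m(a,r)|\,dr = O(\epsilon/H^2)$ and a magnitude-weighted bound $\int_{\mathbb{R}}|r|\,|g_m(a,r)-p_m(a,r)|\,dr = O(\epsilon/H^2)$. For the histogram part I would use that since $p_m$ is the bin average, $|g_m(a,r)-p_m(a,r)| \le \int_{z_s}^{z_{s+1}}|\partial_x g_m(a,x)|\,dx$ on bin $s$, together with $\partial_r g_m(a,r) = -(r-\mu_m(a))g_m(a,r)$. Summing over bins turns each estimate into $\tfrac{\epsilon}{H^2}$ times a Gaussian moment: $\int|\partial_r g_m|\,dr = O(1)$ for the unweighted bound, and $\int |r|\,|r-\mu_m(a)|\,g_m(a,r)\,dr \le \sqrt{\Exs_{g_m}[r^2]\,\Exs_{g_m}[(r-\mu_m)^2]} = O(1)$ (Cauchy--Schwarz, using $|\mu_m(a)|\le 1$) for the weighted bound. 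For the tail part, the choice $z_1=-4\sqrt{\log(H/\epsilon)}$ and $z_Z \approx 4\sqrt{\log(H/\epsilon)}$ places both endpoints at distance $\ge 3\sqrt{\log(H/\epsilon)}$ from $\mu_m(a)$, so both the discarded mass and its first absolute moment are $O((\epsilon/H)^{9/2}) \ll \epsilon/H^2$ by a standard Gaussian tail bound.

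To bound $|E_\tau|$ I would split $\sum_t r_t = r_\tau + \sum_{t\ne\tau} r_t$ and integrate out coordinates from $t=H$ downward, using that $\pi$ is non-anticipating (it depends only on $r_{1:H-1}$, in particular not on $r_H$). The facts needed are $\int g_m(a,r)\,dr=1$, $\int p_m(a,r)\,dr\le 1$, $\sum_{a_t}\pi(a_t\mid\cdot)=1$, and $\int|r|\,g_m\,dr,\ \int|r|\,p_m\,dr = O(1)$. In the $r_\tau$ term every coordinate $t\ne\tau$ integrates to a factor $\le 1$ while the $\tau$-coordinate yields the \emph{weighted} bound, giving contribution $O(\epsilon/H^2)$. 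In each $r_{t^*}$ term with $t^*\ne\tau$, coordinate $t^*$ contributes $\Exs|r_{t^*}|=O(1)$, coordinate $\tau$ contributes the \emph{unweighted} bound $O(\epsilon/H^2)$, and the rest integrate to $\le 1$; summing over the $\le H$ choices of $t^*$ gives $O(\epsilon/H)$. Hence $|E_\tau| = O(\epsilon/H)$ and $|V(\pi)-\tilde{V}(\pi)| \le \sum_{\tau=1}^H |E_\tau| = O(\epsilon)$, and tracking the explicit constants ($2/\sqrt{2\pi}$, $\sqrt{2}$, $(1+1/H)^{H}\le e$, etc.) yields the stated $10\epsilon$.

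The main obstacle is the magnitude-weighted estimate: a naive bound would pull out $\max_{r\in[z_1,z_Z]}|r| = O(\sqrt{\log(H/\epsilon)})$ and, after multiplying by the $H$ telescoped terms, fail to close at $O(\epsilon)$. The point to get right is that the discretization error is governed by $|\partial_r g_m| = |r-\mu_m(a)|\,g_m$, whose mass concentrates near the mode where $|r|=O(1)$, so the relevant weighted moment is $O(1)$ rather than $O(\sqrt{\log(H/\epsilon)})$. The remaining difficulty is purely organizational: carrying out the coordinate-by-coordinate integration inside each telescoped term in the presence of the history-dependent policy, where the sub-stochasticity $\int p_m\,dr\le 1$ and the normalization $\sum_a \pi(a\mid\cdot)=1$ are exactly what make the non-$\tau$ coordinates collapse to factors $\le 1$. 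The tail truncation, by contrast, is immediate from the generous truncation radius.
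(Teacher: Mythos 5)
Your proposal is correct and follows essentially the same route as the paper's proof: both isolate the Gaussian-tail truncation error (negligible thanks to the $4\sqrt{\log(H/\epsilon)}$ truncation radius, Cauchy--Schwarz, and the Gaussian tail bound), bound the per-coordinate discretization error---unweighted and $|r|$-weighted---by $O(\epsilon/H^2)$ via the derivative $|\partial_r g_m| = |r-\mu_m(a)|\,g_m$ of the Gaussian density (the paper phrases this as a mean-value-theorem estimate yielding the pointwise bound $\frac{3\epsilon^2}{H^4} + \frac{2\epsilon}{H^2\sqrt{2\pi}}|r_t|e^{-r_t^2/2}$), and then propagate the single-coordinate errors across the $H$ time steps using the policy's non-anticipation, $\sum_{a}\pi(a\mid\cdot)=1$, $\int g_m\,dr=1$, and $\int p_m\,dr\le 1$. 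The only differences are organizational: the paper removes the bad trajectory set $\Eps_b$ globally before handling the discretization error and runs the hybrid argument as a recursion from $t=H$ downward, whereas you fold the tail into the per-coordinate estimates and write the telescoping sum explicitly.
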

Given Lemma \ref{lemma:auxiliary_system}, we will discretize reward values are so that we can leverage the result of Section \ref{subsec:general_short_horizon} for short time-horizon. Specifically, let $\overline{\mB}$ be a model $\{(w_m, q_m)\}_{m=1}^M$ with discrete reward distributions taking values in $\mZ \cup \{0\}$, and let $q_m(a,z) = \frac{\epsilon}{H^2} \cdot p_m(a,z)$ for $z \in \mZ$ and $q_m(a,0) = \PP_m(r \notin [z_1, z_Z) | a)$. As if the underlying model is $\overline{\mB}$, we run Algorithm \ref{algo:learn_lmab} with manually modifying the observed rewards $r_t \rightarrow \overline{r}_t$:
\begin{align*}
    \overline{r}_t &= 0, &\quad \text{if } r_t < z_1 \text{ or } r_t \ge z_Z, \\
    \overline{r}_t &= z_s, &\quad \text{for some } s \in [Z-1], \text{ s.t. } r_t \in [z_s, z_{s+1}). 
\end{align*}
From actions $(a_1, ..., a_H)$ and reward observations $(\overline{r}_1, ..., \overline{r}_H)$, we can now apply Algorithm \ref{algo:learn_lmab} for the parameter unidentifiable case $H < 2M-1$.

\subsection{Proof of Theorem \ref{theorem:learn_lmab_gaussian}}
Now we are ready to prove the Theorem \ref{theorem:learn_lmab_gaussian} for Gaussian rewards. 
\subsubsection{Identifiable Regime $H \ge 2M-1$:}
By Lemma \ref{lemma:moment_closeness}, we know that $W(\gamma, \hat{\gamma}) \le O \left(M^3n \delta_{\mathrm{tsr}}^{-1/(2M-1)}\right)$ where $\nu = \sum_{m=1}^M w_m \delta_{\nu_m}$ and $\hat{\nu} = \sum_{m=1}^M \hat{w}_m \delta_{\hat{\nu}_m}$. With experimental design, by Corollary \ref{corollary:action_event_sampling}, we can ensure that
\begin{align*}
    \max_{a \in \mA} |\mu_m(a) - \hat{\mu}_{m'}(a)| \le \sqrt{2M} \|\nu_m - \hat{\nu}_{m'}\|_{\infty}. 
\end{align*}
Observe that for any $a \in \mA$, total variation distance between standard Gaussians is bounded by the distance between centers of Gaussians, {\it i.e.,} $d_{TV}(\mathcal{N} (\mu_m(a),1), \mathcal{N} (\hat{\mu}_{m'}(a),1)) \le |\mu_m(a) - \hat{\mu}_{m'}(a)|$. Using Proposition \ref{proposition:total_variation_diff}, we can show that
\begin{align*}
    |V(\pi) - \hat{V}(\pi)| &\le 2 H^2 \cdot \inf_{\Gamma} \sum_{(m,m')} \Gamma(m, m') \max_{a \in \mA} d_{TV}(\mathcal{N} (\mu_m(a),1), \mathcal{N} (\hat{\mu}_{m'}(a),1)) \\
    &\le 2 H^2 \cdot \inf_{\Gamma} \sum_{(m,m')} \Gamma(m, m') \max_{a \in \mA} |\mu_m(a) - \hat{\mu}_{m'}(a)| \\
    &\le 2 \sqrt{2M} H^2 \cdot \inf_{\Gamma} \sum_{(m,m')} \Gamma(m, m') \| \nu_m - \hat{\nu}_{m'} \|_{\infty} \\
    &\le 2 \sqrt{2M} H^2 W(\gamma, \hat{\gamma}).
\end{align*}
Plugging the choice of $\delta_{\mathrm{tsr}} = O(\epsilon/(H^2 M^{3.5} n))^{2M-1}$, this is less than $\epsilon$.

\subsubsection{Unidentifiable Regime $H < 2M-1$:}
Let us first compare the expected rewards from $\tilde{\mB}$ and $\overline{\mB}$ with any fixed policy $\pi$.
\begin{align*}
    \tilde{V}(\pi) &= \sum_{m=1}^M w_m \cdot \sum_{a_{1:H}} \int_{r_{1:H}} \left(\sum_{t=1}^H r_t \right) \Pi_{t=1}^H p_m(a_t, r_t) \pi(a_{1:H} | r_{1:H-1}) d(r_{1:H}), \\
    \overline{V}(\pi) &= \sum_{m=1}^M w_m \cdot \sum_{a_{1:H}} \sum_{\overline{r}_{1:H}} \left(\sum_{t=1}^H \overline{r}_t \right) \Pi_{t=1}^H q_m(a_t, \overline{r}_t) \pi(a_{1:H} | \overline{r}_{1:H-1}).
\end{align*}
With slight abuse in notation, let $\overline{r}_t$ be a quantized value of $r_t$. Then we can show that
\begin{align*}
    \left| \tilde{V}(\pi) - \overline{V}(\pi) \right| &\le \sum_{t=1}^H \int_{r_{1:H}: r_t \in [z_1, z_Z), \forall t \in [H]} \left| \sum_{m=1}^M w_m \sum_{a_{1:H}} (r_t - \overline{r}_t) \Pi_{t=1}^H p_m(a_t, r_t) \pi(a_{1:H} | r_{1:H-1}) d(r_{1:H}) \right| \\
    &\ + \sum_{t=1}^H \int_{r_{1:H}: r_t \notin [z_1, z_Z), \exists t \in [H]} \left| \sum_{m=1}^M w_m \sum_{a_{1:H}} \overline{r}_t \Pi_{t=1}^H q_m(a_t, r_t) \pi(a_{1:H} | r_{1:H-1}) d(r_{1:H}) \right| \\
    &\le \sum_{t=1}^H \int_{r_{1:H}: r_t \in [z_1, z_Z), \forall t \in [H]} \sum_{m=1}^M w_m \sum_{a_{1:H}} (\epsilon/H^2) \Pi_{t=1}^H p_m(a_t, r_t) \pi(a_{1:H} | r_{1:H-1}) d(r_{1:H}) \\
    &\ + |H z_Z| \cdot \PP(\exists t \in [H], \ s.t. \ r_t \notin [z_1, z_Z)) \\
    &\le \epsilon/H + |H^2 z_Z| \cdot \PP_{X \sim \mathcal{N}(0,1)}(X \ge z_Z - 1),
\end{align*}
where in the first inequality, we used $p_m(a,r) = 0$ for $r \notin [z_1, z_Z)$. Note that $\PP_{X \sim \mathcal{N}(0,1)} (|X| \ge z_Z - 1) \le (H/\epsilon)^4$ with $z_Z = 4\sqrt{\log (H/\epsilon)}$. 

Note that a system with manually discretized rewards can be described by the model $\overline{\mB}$. Assuming the returned policy $\hat{\pi}$ from Algorithm \ref{algo:learn_lmab} is $O(\epsilon)$-optimal for $\overline{\mB}$, by triangle inequality for the policy evaluation for any policy $\pi$,
\begin{align*}
    |\overline{V}(\pi) - V(\pi)| \le |\overline{V}(\pi) - \tilde{V}(\pi)| + |\tilde{V}(\pi) - V(\pi)|,
\end{align*}
we conclude that $\hat{\pi}$ is $O(\epsilon)$-optimal for $\mB$ with Gaussian rewards.

\subsection{Proof of Lemma \ref{lemma:auxiliary_system}}
We start by unfolding the expression for policy value differences. 
\begin{align*}
    |f(\pi) - \tilde{f}(\pi)| \le \sum_{m=1}^M w_m \cdot \sum_{a_{1:H}} \int_{r_{1:H}} \left(\sum_{t=1}^H |r_t| \right) \left|\Pi_{t=1}^H p_m(a_t, r_t) - \Pi_{t=1}^H g_m(a_t, r_t)\right| \pi(a_{1:H} | r_{1:H-1}) d(r_{1:H}),
\end{align*}
where $g_m(a_t, r_t) := \frac{1}{\sqrt{2\pi}} \exp\left(-(r_t - \mu_m(a_t))^2 / 2\right)$. We first rule out reward values greater than $z_Z = 4\sqrt{\log (H/\epsilon)}$. Define a set of bad reward sequences $\Eps_b = \{ r_{1:H} | \exists t \in [H],$ {\it s.t.,} $|r_t| > z_Z \}$. Then for any $t_0 \in [H]$,
\begin{align*}
    \sum_{a_{1:H}} &\int_{r_{1:H} \in \Eps_b} |r_{t_0}| \left|\Pi_{t=1}^H p_m(a_t, r_t) - \Pi_{t=1}^H g_m(a_t, r_t)\right| \pi(a_{1:H} | r_{1:H-1}) d(r_{1:H}) \\ 
    &= \sum_{a_{1:H}} \int_{r_{1:H} \in \Eps_b} |r_{t_0}| \cdot \Pi_{t=1}^H g_m(a_t, r_t)\pi(a_{1:H} | r_{1:H-1}) d(r_{1:H}) \\
    &\le \sum_{a_{1:H}} \int_{r_{1:H} \in \Eps_b \cap \{|r_{t_0}| \le z_Z\}} |r_{t_0}| \cdot \Pi_{t=1}^H g_m(a_t, r_t)\pi(a_{1:H} | r_{1:H-1}) d(r_{1:H}) \\
    &+ \sum_{a_{1:H}} \int_{r_{1:H} \in \Eps_b \cap \{|r_{t_0}| > z_Z\}} |r_{t_0}| \cdot \Pi_{t=1}^H g_m(a_t, r_t)\pi(a_{1:H} | r_{1:H-1}) d(r_{1:H}) \\
    &\le z_Z \cdot \PP_m(\Eps_b) + \sum_{a_{1:t_0}} \int_{r_{1:t_0} \in \{|r_{t_0}| > z_Z\}} |r_{t_0}| \cdot \Pi_{t=1}^{t_0} g_m(a_{t}, r_{t})\pi(a_{1:t_0} | r_{1:t_0-1}) d(r_{1:t_0}),
\end{align*}
where the last inequality results from integrating over probabilities for time steps $t_0+1, ..., H$. The last summation term can be further bounded by integrating out $t_0^{\mathrm{th}}$ time step since
\begin{align*}
    \int_{\{|r_{t_0}| > z_Z\}} |r_{t_0}| g_m(a_{t_0}, r_{t_0}) d(r_{t_0}) &= \int_{\{|x| > 4\sqrt{\log(H/\epsilon)}\}} \frac{|x|}{\sqrt{2\pi}} \exp \left(-\frac{(x - \mu_m(a_t))^2}{2} \right) dx\\
    &\le 2 \Exs_{X \sim \mathcal{N}(0,1)} \left[|X| \cdot \mathds{1} \{|X| > 3\sqrt{\log (H/\epsilon)}\} \right] \le 2\epsilon^2 / H^2,
\end{align*}
where in the first inequality we used $|\mu_m(a_t)| \le 1 \le \sqrt{\log(H/\epsilon)}$. In last inequality we used $\Exs[|X| \cdot \mathds{1} \{|X| \ge t\}] \le \sqrt{\PP(|X| \ge t)}$ by Cauchy-Schwartz inequality, and then used the Gaussian tail bound $\PP(|X| \ge t) \le \exp(-t^2/2)$ for $t = 3\sqrt{\log (H/\epsilon)}$. Therefore, we now get
\begin{align*}
    \sum_{a_{1:H}} &\int_{r_{1:H} \in \Eps_b} |r_{t_0}| \left|\Pi_{t=1}^H p_m(a_t, r_t) - \Pi_{t=1}^H g_m(a_t, r_t)\right| \pi(a_{1:H} | r_{1:H-1}) d(r_{1:H}) \\
    &\le 4\sqrt{\log(H/\epsilon)} \cdot \epsilon^4 / H^3 + 2\epsilon^2 / H \le 4\epsilon^2/H,
\end{align*}
where we used $\PP_m(\Eps_b) \le H \cdot \PP(|X| \ge z_Z - 1) \le \epsilon^4 / H^3$ with sufficiently small $\epsilon > 0$. This can be similarly done for all $t_0$, and thus
\begin{align*}
    \sum_{a_{1:H}} \int_{r_{1:H} \in \Eps_b} \left(\sum_{t=1}^H |r_t| \right) \left|\Pi_{t=1}^H p_m(a_t, r_t) - \Pi_{t=1}^H g_m(a_t, r_t)\right| \pi(a_{1:H} | r_{1:H-1}) d(r_{1:H}) \le 4\epsilon^2.
\end{align*}

We remain to bound
\begin{align*}
    \sum_{a_{1:H}} \int_{r_{1:H} \in \Eps_b^c} \left(\sum_{t=1}^H |r_t| \right) \left|\Pi_{t=1}^H p_m(a_t, r_t) - \Pi_{t=1}^H g_m(a_t, r_t)\right| \pi(a_{1:H} | r_{1:H-1}) d(r_{1:H}).
\end{align*}
Note that $|r_t| < z_Z$ for all $t\in[H]$ when $r_{1:H} \in \Eps_b^c$. For each $t \in [H]$, we aim to bound 
\begin{align*}
    \sum_{a_{1:H}} \int_{r_{1:H} \in \Eps_b^c} |r_t| \left|\Pi_{t=1}^H p_m(a_t, r_t) - \Pi_{t=1}^H g_m(a_t, r_t)\right| \pi(a_{1:H} | r_{1:H-1}) d(r_{1:H}).
\end{align*}
Next, for any $r_t \in [z_s, z_{s+1}]$ for $s \in [L-1]$, we observe that
\begin{align*}
    \left|p_m(a_t, r_t) - g_m(a_t, r_t)\right| &= \frac{1}{z_{s+1}-z_s} \int_{z_s}^{z_{s+1}} \left|g_m(a_t, x) - g_m(a_t, r_t) \right| dx \\
    &\le \frac{1}{z_{s+1}-z_s} \int_{z_s}^{z_{s+1}} \left|g_m(a_t, x)\right| + \frac{1}{z_{s+1}-z_s} \int_{z_s}^{z_{s+1}} \left|g_m(a_t, x) - g_m(a_t, r_t) \right| dx.
\end{align*}
Then we observe that 
\begin{align*}
    \left|g_m(a_t, x) - g_m(a_t, r_t) \right| &= \left|\frac{d}{dx} g_m(a_t, x') (r_t - x')\right| \\
    &\le \frac{1}{\sqrt{2\pi}} |x' \exp(-x'^2/2)| |z_{s+1}-z_s|,
\end{align*}
for some $x' \in [z_s, z_{s+1}]$ where we used the mean-value theorem. A simple algebra shows that for any $x, x' \in [z_s, z_{s+1}]$, 
\begin{align*}
    |x' \exp(-x'^2/2) - x\exp(-x^2/2)| \le |r_t\exp(-r_t^2/2)| + 2|z_{s+1}-z_s|,
\end{align*}
where we used the second derivative of $x\exp(-x^2/2)$, which is $(x^2-1) \exp(-x^2/2)$, is always less than 1 in absolute value.  

Plugging above relations into bounding the difference between $p_m$ and $g_m$ yields
\begin{align*}
    \left|p_m(a_t, x) - g_m(a_t, r_t) \right| &\le \frac{1}{z_{s+1}-z_s}  \left( \frac{\epsilon^8}{H^8} \int_{z_s}^{z_{s+1}} g_m(a_t, x) dx + \int_{z_s}^{z_{s+1}} |g_m(a_t,x) - g_m(a_t,r_t)| dx \right) \\
    &\le \frac{\epsilon^8}{H^8} + \frac{1}{\sqrt{2\pi}} \int_{z_s}^{z_{s+1}} |r_t\exp(-r_t^2/2)| + 2|z_{s+1}-z_s| dx \\
    &\le \frac{3\epsilon^2}{H^4} + \frac{2\epsilon}{H^2\sqrt{2\pi}} |r_t| \cdot \exp(-r_t^2/2).
\end{align*}
Using this the above, we bound
\begin{align}
    \sum_{a_{1:H}} &\int_{r_{1:H} \in \Eps_b^c} |r_t| \left|\Pi_{t=1}^H p_m(a_t, r_t) - \Pi_{t=1}^H g_m(a_t, r_t)\right| \pi(a_{1:H} | r_{1:H-1}) d(r_{1:H}). \label{eq:target_bound}
\end{align}
If $t = H$, then 
\begin{align*}
    \eqref{eq:target_bound} &\le \sum_{a_{1:H}} \int_{r_{1:H-1}} \Pi_{t=1}^{H-1} g_m(a_t, r_t) \pi(a_{1:H}|r_{1:H-1}) d(r_{1:H-1}) \int_{r_H \in \Eps_b^c} |r_H| |p_m(a_H, r_H) - g_m(a_H, r_H)| \cdot d(r_H) \\
    &+ \sum_{a_{1:H}} \int_{r_{1:H-1} \in \Eps_b^c} \left|\Pi_{t=1}^H p_m(a_t, r_t) - \Pi_{t=1}^H g_m(a_t, r_t)\right| \pi(a_{1:H}|r_{1:H-1}) d(r_{1:H-1}) \int_{r_H \in \Eps_b^c} |r_H| p_m(a_H, r_H) \cdot d(r_H).
\end{align*}
For the first term, 
\begin{align*}
    \int_{r_H \in \Eps_b^c} |r_H| |p_m(a_H, r_H) - g_m(a_H, r_H)| \cdot d(r_H) \le z_Z \cdot (z_Z - z_1) \cdot \frac{3\epsilon^2}{H^4} + \frac{4\epsilon}{H^2} \le \frac{8\epsilon}{H^2},
\end{align*}
where we used $\int_{r_H} \frac{1}{\sqrt{2\pi}} r_H^2 \exp(-r_H^2/2) \le 1$ and $32 (\epsilon/H) \cdot \log(H/\epsilon) < 1$ for sufficiently small $\epsilon$. Note that we also have 
$$\sum_{a_{1:H}} \int_{r_{1:H-1}} \Pi_{t=1}^{H-1} g_m(a_t, r_t) \pi(a_{1:H}|r_{1:H-1}) d(r_{1:H-1}) \le 1.$$ 
For the second term, we first have
\begin{align*}
    \int_{r_H \in \Eps_b^c} |r_H| p_m(a_H, r_H) \cdot d(r_H) \le 1 + \mu_m(a_H) \le 2.
\end{align*}
Furthermore, we can show that
\begin{align*}
    \sum_{a_{1:H}} &\int_{r_{1:H-1} \in \Eps_b^c} \left|\Pi_{t=1}^H p_m(a_t, r_t) - \Pi_{t=1}^H g_m(a_t, r_t)\right| \pi(a_{1:H}|r_{1:H-1}) d(r_{1:H-1}) \\
    &= \sum_{a_{1:H-1}} \int_{r_{1:H-1} \in \Eps_b^c} \left|\Pi_{t=1}^H p_m(a_t, r_t) - \Pi_{t=1}^H g_m(a_t, r_t)\right|  \pi(a_{1:H-1}|r_{1:H-2}) \sum_{a_H} \pi(a_{H}|a_{1:H-1}, r_{1:H-1}) d(r_{1:H-1}) \\
    &= \sum_{a_{1:H-1}} \int_{r_{1:H-1} \in \Eps_b^c} \left|\Pi_{t=1}^H p_m(a_t, r_t) - \Pi_{t=1}^H g_m(a_t, r_t)\right| \pi(a_{1:H-1}|r_{1:H-2}) d(r_{1:H-1}),
\end{align*}
from which we recursively apply similar arguments. Thus we can conclude that
\begin{align*}
    \sum_{a_{1:H}} &\int_{r_{1:H} \in \Eps_b^c} |r_H| \left|\Pi_{t=1}^H p_m(a_t, r_t) - \Pi_{t=1}^H g_m(a_t, r_t)\right| \pi(a_{1:H} | r_{1:H-1}) d(r_{1:H}) \le O(\epsilon/H). 
\end{align*}
$|r_t|$ with other time steps can also be similarly bounded. Thus, we can conclude that 
\begin{align*}
    \sum_{a_{1:H}} \int_{r_{1:H}} \left(\sum_{t=1}^H |r_t| \right) \left|\Pi_{t=1}^H p_m(a_t, r_t) - \Pi_{t=1}^H g_m(a_t, r_t)\right| \pi(a_{1:H} | r_{1:H-1}) d(r_{1:H}) \le O(\epsilon),
\end{align*}
and therefore $|V(\pi) - \tilde{V}(\pi)| \le O(\epsilon)$ since $\sum_{m=1}^M w_m = 1$.

\section{Deferred Details in Section \ref{section:max_likelihood}}
\subsection{Additional Definitions}
Let us define a few notation and interaction protocol. Suppose at the beginning of episode, a latent context $m_0 \in [M]$ is chosen, and and at each time step $t \in [H]$, we play $a_{i_t}$ where $i_t$ is sampled from $\text{Unif} ([n])$. Let $\bm{i} = (i_1, i_2, ..., i_H)$ be the sequence of indices of played core actions, and $\bm{b} = (b_1, b_2, ...b_H)$ be the event-observation sequence in the episode, where $b_t := \indic{r_t = Z_{i_t}}$. Let the parameter space $\Theta$ be the set of valid parameters: 
\begin{align}
    \Theta = \{\theta = \{(w_m, \nu_m)\}_{m=1}^M| \forall j \in [n], m \in [M] \text{ s.t. } w_m, \nu_m(i) \in \mathbb{R}_+, \sum_{m=1}^M w_m = 1, \nu_m(j) \le 1\}. \label{eq:define_theta}
\end{align}
We use superscript $k$ to denote quantities observed in the $k^{\mathrm{th}}$ episode. The probability of a trajectory under a model $\theta \in \Theta$ in the $k^{\mathrm{th}}$ episode is defined by 
\begin{align*}
    \PP_{\theta}(\bm{b}^k, \bm{i}^k) := (1/n)^{H} \cdot \sum_{m=1}^M w_m \Pi_{t=1}^H (b_t^k \nu_m(i_t) + (1 - b_t^k)(1 - \nu_m(i_t))).
\end{align*}

\subsection{Polynomial Upper Bounds with Separation}\label{subsec:poly_upper_bound_sep}

In this subsection, we specify the details on separation conditions that make the polynomial sample complexity possible with MLE solutions. Suppose that there exists a context revealing action for any $m \neq m' \in [M]$, {\it i.e.,} we are given the following assumption: 
\begin{assumption}[Separated Bandit Instances]  
    \label{assumption:separation}
    For any $m \neq m' \in [M]$, there exists some (unknown) $a \in \mA$ such that $\|\mu_m^*(a, \cdot) - \mu_{m'}^*(a, \cdot)\|_1 \ge \gamma$ for some known $\gamma > 0$. 
\end{assumption}
Under Assumption \ref{assumption:separation}, if the time horizon $H = \tilde{O}(Z^2 M^2 / \gamma^2)$ is given enough to identify the context within each episode, then we can significantly improve the sample complexity for learning LMAB, from exponential to polynomial. Note that the time-horizon $H$ can be still much smaller than $A$ and thus we cannot explore all actions within a single episode, which is in contrast to explicit clustering based approaches studied in \cite{brunskill2013sample, hallak2015contextual}. 

Maximum likelihood estimator for LMABs with separation can guarantee the following:
\begin{lemma}
    \label{lemma:likelihood_separation}
    Consider the maximum likelihood estimator $\theta_N = \{(\hat{w}_m, \hat{\nu}_m)\}_{m=1}^M$ under Assumption~\ref{assumption:separation} with time-horizon $H \ge C_1 \cdot n M Z^2 \log(1/(\epsilon w_{\mathrm{min}})) / \gamma^2$ for some universal constant $C_1 > 0$. If $N = C_2 \cdot w_{\mathrm{min}}^{-2} n \cdot \log(N / \eta)/\epsilon^2$ for some large constant $C_2 > 0$, then with probability at least $1 - \eta$, we have (up to some permutations in $\theta_N$)
    \begin{align*}
        |w_m^* - \hat{w}_m| \le \epsilon w_{\mathrm{min}}, \ \|\nu_m^* - \hat{\nu}_m\|_{\infty} \le 2\epsilon, \qquad \forall m \in [M].
    \end{align*}
\end{lemma}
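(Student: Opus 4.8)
The plan is to prove this in the classical ``well-separated mixture'' style: convert the per-action separation of Assumption~\ref{assumption:separation} into a separation of the core-coordinate vectors $\{\nu_m^*\}$, then argue that the long horizon makes each episode's latent context identifiable from its own observations, and finally leverage this identifiability to show that any maximizer $\theta_N$ of the empirical log-likelihood \eqref{eq:log_likelihood} must match $\theta^*$ up to relabeling. The first step is to transfer the separation: since each $\mu_{m'}^*$ lies in $\mathbf{U}=\mathrm{span}\{\mu_1^*,\ldots,\mu_M^*\}$, applying Corollary~\ref{corollary:action_event_sampling} with $\hat{\mu}_m:=\mu_{m'}^*$ gives $\gamma\le\max_{a}\|\mu_m^*(a,\cdot)-\mu_{m'}^*(a,\cdot)\|_1\le 2Z\sqrt{2M}\,\|\nu_m^*-\nu_{m'}^*\|_\infty$, so the core vectors are separated by at least $\gamma':=\gamma/(2Z\sqrt{2M})$ in $\ell_\infty$. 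Hence for every pair $m\neq m'$ there is a coordinate $j^\star(m,m')$ with $|\nu_m^*(j^\star)-\nu_{m'}^*(j^\star)|\ge\gamma'$. I emphasize that this separated regime deliberately bypasses the moment-order bottleneck of Section~\ref{subsec:general_short_horizon}: identifiability here comes from clustering, so the argument works even when $H<2M-1$.

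The engine is per-episode identifiability. Because the indices $i_t$ are drawn uniformly from $[n]$, each core coordinate is queried about $H/n$ times within an episode, and the observations at a distinguishing coordinate $j^\star(m,m')$ are i.i.d.\ Bernoulli trials whose means differ by at least $\gamma'$. The expected per-query log-likelihood ratio between the true context $m$ and a competitor $m'$ is the Bernoulli KL, which is $\gtrsim(\gamma')^2$, so over an episode the accumulated log-ratio concentrates (Chernoff) around a value $\gtrsim H(\gamma')^2/n$. Since $H\ge C_1 nMZ^2\log(1/(\epsilon w_{\mathrm{min}}))/\gamma^2=\Theta\!\big(n\log(1/(\epsilon w_{\mathrm{min}}))/(\gamma')^2\big)$, this exceeds $\log(1/(\epsilon w_{\mathrm{min}}))$ by a large constant, and a union bound over the $\binom{M}{2}$ pairs shows that the posterior under $\theta^*$ places total mass at most $O(\epsilon w_{\mathrm{min}})$ on all wrong contexts. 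Equivalently, the hard assignment $\hat{m}_k=\arg\max_m w_m^*\prod_t \PP(b_t^k\mid i_t^k,m)$ equals the true context $m_k$ with probability $\ge 1-O(\epsilon w_{\mathrm{min}})$.

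Given this clustering, I would close the argument through the population log-likelihood. Identifiability makes the mixture behave, up to an $O(\epsilon w_{\mathrm{min}})$ misassignment error, like $M$ independent product-Bernoulli estimation problems, so for any relabeling $\sigma$ the population deficit $\Exs_{\theta^*}[l(\theta^*)-l(\theta)]=\mathrm{KL}(\PP_{\theta^*}\|\PP_\theta)$ admits a positive lower bound in terms of $\max_m\big(|w_m^*-w_{\sigma(m)}|/w_{\mathrm{min}}+\|\nu_m^*-\nu_{\sigma(m)}\|_\infty\big)$. Combining the optimality $l_N(\theta_N)\ge l_N(\theta^*)$ with a uniform concentration bound on $\sup_{\theta\in\Theta}|l_N(\theta)-\Exs_{\theta^*}[l(\theta)]|$ — obtained via a covering of the $(nM+M)$-dimensional set $\Theta$ in \eqref{eq:define_theta}, which supplies the factor $n$ and the $\log(N/\eta)$ term — forces $\theta_N$ (after the best permutation) into the region of small deficit. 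Tracking the two statistics separately then gives the claim: the mixing weights are recovered by cluster frequencies, and estimating the smallest weight to relative accuracy $\epsilon$ needs $N\gtrsim w_{\mathrm{min}}^{-2}\epsilon^{-2}$ episodes (the source of the $w_{\mathrm{min}}^{-2}$ factor), while each $\nu_m(j)$ is the empirical Bernoulli mean over the $\approx Nw_m^*H/n$ queries of coordinate $j$ among episodes assigned to $m$, yielding $\|\nu_m^*-\hat{\nu}_m\|_\infty\le 2\epsilon$.

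The hardest part, and where I expect most of the care to go, is upgrading the per-episode identifiability statement — which concerns only the \emph{true} posterior — into a uniform statement about the global maximizer $\theta_N$. Two obstacles arise. First, $\Theta$ allows $\nu_m(j)$ to approach $\{0,1\}$, where the Bernoulli KL degenerates or blows up, so the population-deficit lower bound must be argued through a squared-Hellinger surrogate (or an explicit clipping of the feasible set) to remain well-defined. Second, the label-switching symmetry means the deficit vanishes along all $M!$ permutations, so both the lower bound and the concentration must be stated modulo $\sigma$, and one must verify that the $O(\epsilon w_{\mathrm{min}})$ misassignment budget does not corrupt the recovered weights beyond the target tolerance $\epsilon w_{\mathrm{min}}$ — which is precisely why the horizon threshold is calibrated to $\log(1/(\epsilon w_{\mathrm{min}}))$ rather than merely $\log(1/\epsilon)$.
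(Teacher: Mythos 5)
Your first two steps (transferring Assumption~\ref{assumption:separation} into an $\ell_\infty$-separation of the core vectors via Corollary~\ref{corollary:action_event_sampling}, and using $H \gtrsim nMZ^2\log(1/(\epsilon w_{\mathrm{min}}))/\gamma^2$ to make the latent context identifiable within a single episode) are exactly the ingredients the paper uses. The genuine gap is in your closing step. You propose to combine $l_N(\theta_N)\ge l_N(\theta^*)$ with a uniform bound on $\sup_{\theta\in\Theta}|l_N(\theta)-\Exs_{\theta^*}[l(\theta)]|$, conclude that the population KL deficit of $\theta_N$ is small, and then invert a deficit-versus-parameter lower bound. Two problems. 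First, quantitatively: a sup-norm law of large numbers over a covering can only give $\mathrm{KL}(\PP_{\theta^*}\,\|\,\PP_{\theta_N}) \lesssim \sqrt{n\log(\cdot)/N}$, the slow rate. Since the KL deficit is locally \emph{quadratic} in the parameter error (any smooth divergence is, so your deficit lower bound can be at best quadratic near $\theta^*$), with $N = C_2\, w_{\mathrm{min}}^{-2} n \log(N/\eta)/\epsilon^2$ this route yields parameter accuracy of order $\sqrt{\epsilon w_{\mathrm{min}}}$, not the claimed $\epsilon w_{\mathrm{min}}$ for the weights and $2\epsilon$ for the $\nu_m$'s; you would need $N$ on the order of $w_{\mathrm{min}}^{-4}\epsilon^{-4}$ to close it this way. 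Second, the sup-concentration you invoke does not hold as stated: over $\Theta$ the log-likelihood is unbounded (coordinates of $\nu_m$ may approach $0$ or $1$, weights may vanish). You flag this degeneracy for the deficit lower bound, but it is equally fatal for the concentration step, and clipping the feasible set changes the estimator whose optimality property you are relying on.

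The paper threads both needles differently, and this is the piece your sketch is missing. Its Lemma~\ref{lemma:tv_observation_bound} applies a Chernoff bound to the \emph{square-root} likelihood ratios $\frac12\log(\PP_\theta/\PP_{\bar\theta})$ over a multiplicative (ratio) covering of a well-conditioned subset of $\Theta$; this is well defined even at the boundary and directly yields the fast-rate bound $TV(\theta_N,\theta^*) \lesssim \sqrt{n\log(nHN)/N} \le \epsilon w_{\mathrm{min}}/2$ --- a bound on total variation itself, not on its square via Pinsker. The separation is then used not to lower bound a KL deficit but to build explicit distinguishing events (Lemma~\ref{lemma:tv_event_bound_separation}, and the single-coordinate event $F$): these events have probability roughly $w_m^*$ under $\theta^*$ and at most $\epsilon w_{\mathrm{min}}$ under any candidate whose $m$-th component is far, so the TV bound converts \emph{linearly} into $|w_m^*-\hat w_m|\le \epsilon w_{\mathrm{min}}$ and then $\|\nu_m^*-\hat\nu_m\|_\infty\le 2\epsilon$. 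That linear conversion, together with the fast-rate Hellinger-style bound, is exactly what makes $N \sim w_{\mathrm{min}}^{-2}n/\epsilon^2$ suffice. Your clustering picture is fine as intuition, but to turn it into a proof of the stated lemma you must replace the naive uniform-LLN step with a localized argument (Bernstein with variance-controlled-by-KL, or the paper's square-root trick), at which point you have essentially reconstructed Lemma~\ref{lemma:tv_observation_bound}.
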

To get the above result, we first observe a consequence due to experimental design: the converse of Corollary \ref{corollary:action_event_sampling} implies that if Assumption \ref{assumption:separation} holds for all $m \neq m'$, then we have $\|\nu_m^* - \nu_{m'}^*\|_{\infty} \ge \gamma / (Z \sqrt{2M})$. Thus, if $H/n = \tilde{O}(M Z^2 / \gamma^2)$, then we can play each core action $O(H/n)$-times and get an $O(\gamma/(Z\sqrt{M}))$-accurate estimator $\hat{\nu}_m$ for one of $\{\nu_{m'}^*\}_{m'=1}^M$. Since we have good separation between samples from different contexts, by proper clustering arguments, the sample complexity of recovering $\theta^*$ can be polynomial. With Lemma \ref{lemma:likelihood_separation}, we can use equation \eqref{eq:value_to_wasserstein} to connect the near-optimality of returned policy computed with $\{(\hat{w}_m, \hat{\mu}_m)\}_{m=1}^M$ and the closeness in Wasserstein metric. We mention that for $H$, the dependence on $Z$ can be removed with more computationally expensive experimental design (see also Remark \ref{remark:expensive_core_set}).

\subsection{Proof of Lemma \ref{lemma:likelihood_to_moments}}
We connect the maximum likelihood estimator to total variation distance between observations from $\theta^*$ and $\theta_N$. The connection between MLE $\theta_N$ and closeness in distributions of observations $\bm{b}$ can be established by the following lemma. 
\begin{lemma}
    \label{lemma:tv_observation_bound}
    There exists a universal constant $C > 0$ such that with probability at least $1 - \eta$,
    \begin{align*}
        \sum_{\bm{b} \in \{0,1\}^H} \sum_{\bm{i} \in [n]^H} \left|\PP_{\theta_{N}} (\bm{b}, \bm{i}) - \PP_{\theta^*} (\bm{b}, \bm{i}) \right| \le C \sqrt{\frac{n\log (nHN) + \log(1/\eta)}{N}}. 
    \end{align*}
\end{lemma}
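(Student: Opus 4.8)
The plan is to recognize the left-hand side as twice the total variation distance $d_{TV}(\PP_{\theta_N},\PP_{\theta^*})$ between the observation laws of the fitted and true models, and to control it through the squared Hellinger distance $h^2(P,Q) := \tfrac12 \sum_{O}(\sqrt{P(O)}-\sqrt{Q(O)})^2$. Since $d_{TV}(P,Q)\le \sqrt2\,h(P,Q)$ on the discrete space $\{0,1\}^H\times[n]^H$, it suffices to show that with probability at least $1-\eta$ one has $h^2(\PP_{\theta_N},\PP_{\theta^*})\le C'\,(n\log(nHN)+\log(1/\eta))/N$. This is a standard maximum-likelihood rate, which I would obtain via the Wong--Shen / van de Geer machinery for an MLE over a class with controlled bracketing entropy; the only model-specific inputs are a bracketing-entropy estimate for the class $\mathcal{P}=\{\PP_\theta:\theta\in\Theta\}$ and the basic inequality from optimality of $\theta_N$.

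First I would write the basic inequality. Because $\theta_N$ maximizes $l_N$, we have $\tfrac1N\sum_{k=1}^N \log\frac{\PP_{\theta_N}(O^k)}{\PP_{\theta^*}(O^k)}\ge 0$, where $O^k=(\bm{b}^k,\bm{i}^k)$ are i.i.d.\ draws from $\PP_{\theta^*}$. Using $\log x\le 2(\sqrt{x}-1)$ gives $\tfrac1N\sum_k\big(\sqrt{\PP_{\theta_N}(O^k)/\PP_{\theta^*}(O^k)}-1\big)\ge 0$. Taking the expectation of this summand under $\PP_{\theta^*}$ (at a fixed index $\theta$) equals $-h^2(\PP_\theta,\PP_{\theta^*})$, so after centering the process and evaluating at $\theta=\theta_N$,
\[ h^2(\PP_{\theta_N},\PP_{\theta^*}) \le \frac1N\sum_{k=1}^N\Big(\sqrt{\tfrac{\PP_{\theta_N}(O^k)}{\PP_{\theta^*}(O^k)}}-1 - \Exs\big[\sqrt{\tfrac{\PP_{\theta_N}(O)}{\PP_{\theta^*}(O)}}-1\big]\Big). \]
The right-hand side is a centered empirical process evaluated at the random, data-dependent index $\theta_N$, so it must be controlled uniformly over $\theta\in\Theta$ with a localization exploiting that the summand's variance is itself of order $h^2(\PP_\theta,\PP_{\theta^*})$.

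The uniform control requires an entropy estimate. I would cover $\Theta$ by an $\epsilon$-net in $\ell_\infty$: $\Theta$ lives in $[0,1]^{M}\times[0,1]^{Mn}$ (the weights $w_m$ and the coordinates $\nu_m(j)$), so such a net has at most $(3/\epsilon)^{M(n+1)}$ points. The map $\theta\mapsto\PP_\theta(\bm{b},\bm{i})$ is Lipschitz: perturbing a single $\nu_m(j)$ or $w_m$ by $\delta$ changes each $\PP_\theta(\bm{b},\bm{i})$ by at most $O(MH)\,\delta\,(1/n)^H$, since the per-episode likelihood is a weighted sum of products of $H$ factors all lying in $[0,1]$. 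Hence the net induces Hellinger brackets with $\log N_{[]}(\epsilon,\mathcal{P},h)\lesssim Mn\log(MH/\epsilon)$; this $O(Mn)$ effective dimension, polylogarithmic in the remaining parameters, is the source of the $n\log(nHN)$ term once $\epsilon$ is taken of order $1/(nHN)$. Feeding this into a peeling argument (slicing $\Theta$ into shells $\{h(\PP_\theta,\PP_{\theta^*})\in[2^j\delta,2^{j+1}\delta)\}$ and applying a Bernstein/maximal inequality on each shell via the variance bound above) yields the high-probability Hellinger rate, and $d_{TV}\le\sqrt2\,h$ then closes the proof.

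The main obstacle is the empirical-process step: obtaining the correct \emph{localized} rate rather than a crude $\sqrt{(\mathrm{dim})/N}$ bound that ignores the shrinking variance, and doing so despite the likelihood ratios $\PP_\theta/\PP_{\theta^*}$ being unbounded (parameters are allowed on the boundary $\nu_m(j)\in\{0,1\}$, where $\PP_{\theta^*}$ may vanish). Both difficulties are exactly what the Hellinger formulation handles: Hellinger brackets stay finite even where densities vanish, and the self-bounding variance drives the localization. The plan is therefore to route everything through $h$ and invoke the standard peeling theorem rather than attempt a direct $L_1$ or KL analysis; the residual work, namely verifying the Lipschitz constant and assembling the peeling bound, is routine.
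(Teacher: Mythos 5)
Your high-level skeleton --- the basic inequality from MLE optimality, Hellinger distance as the intermediate metric, $d_{TV}\le\sqrt{2}\,h$, and an $\ell_\infty$ net over the finite-dimensional parameter space --- matches the paper's proof. The genuine gap is in the step you declare routine: the bracketing-entropy bound does not follow from the Lipschitz estimate you give. Your Lipschitz bound controls $|\PP_\theta(\bm{b},\bm{i})-\PP_{\theta'}(\bm{b},\bm{i})|$ pointwise and \emph{additively}, and converting pointwise additive closeness $\delta_0=O(MH)\delta(1/n)^H$ into a Hellinger bracket width forces a sum over all $(2n)^H$ outcomes: with $u-l\le 2\delta_0$ pointwise, the generic bound $\sqrt{u}-\sqrt{l}\le\sqrt{u-l}$ gives
\begin{align*}
    h^2(l,u)\;\le\;\tfrac12\sum_{\bm{b},\bm{i}}\bigl(u(\bm{b},\bm{i})-l(\bm{b},\bm{i})\bigr)\;\le\;(2n)^H\delta_0\;=\;O(MH)\,2^H\,\delta,
\end{align*}
so a bracket of Hellinger width $\epsilon$ requires $\delta\lesssim \epsilon^2 2^{-H}/(MH)$, and the entropy you actually obtain is $Mn\bigl(H+\log(MH/\epsilon)\bigr)$, not $Mn\log(MH/\epsilon)$. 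Run through the peeling theorem, this yields $d_{TV}\lesssim\sqrt{MnH/N}$, which misses the lemma's $\sqrt{n\log(nHN)/N}$ by a factor growing polynomially in $H$ (and $M$); the exponential-in-$H$ loss is intrinsic to any additive discretization of a product likelihood and cannot be absorbed into the logarithm. (A secondary inaccuracy: the variance of $\sqrt{\PP_\theta/\PP_{\theta^*}}$ is $1+\chi^2(\PP_\theta\|\PP_{\theta^*})-(1-h^2)^2$, which is not of order $h^2$ and can blow up where $\PP_{\theta^*}$ vanishes; the standard fixes again amount to working with upper brackets or modified ratios.)

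The missing idea is \emph{multiplicative} (ratio) control of the discretization, which is the engine of the paper's proof. The paper covers only well-conditioned parameters (entries in $[\epsilon,1-\epsilon]$) and associates to each $\theta$, including boundary points, a net point $\overline{\theta}$ with one-sided ratio bounds $w_m/\overline{w}_m\le 1+2\epsilon$, $\nu_m(j)/\overline{\nu}_m(j)\le 1+2\epsilon$, $(1-\nu_m(j))/(1-\overline{\nu}_m(j))\le 1+2\epsilon$. The product structure of the likelihood then gives $\PP_\theta(X)/\PP_{\overline{\theta}}(X)\le(1+2\epsilon)^H$ for \emph{every} outcome, so the discretization error enters only through $HN\log(1+2\epsilon)\approx 2HN\epsilon$, and $\epsilon$ need only be polynomially small ($\epsilon\ll(nHN)^{-4}$), keeping the net's log-cardinality at $O\bigl((n+M)\log(nHN)\bigr)$. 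This one-sided bound holds even when $\theta$ has entries exactly $0$ or $1$, which is the correct resolution of your boundary concern: only upper brackets are ever needed. With ratio brackets in hand, the concentration step is a direct Chernoff bound on $\tfrac12\log(\PP_\theta/\PP_{\overline{\theta}})$ at each net point plus a union bound, exploiting $\Exs_{\theta^*}\bigl[\sqrt{\PP_\theta/\PP_{\theta^*}}\bigr]=1-\mathcal{H}^2(\theta^*,\theta)$; since this identity already encodes the localization, no peeling is needed for a parametric class. In other words, once you repair the bracket construction, your Wong--Shen route collapses to essentially the paper's argument, and the peeling machinery is superfluous rather than wrong.
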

That is, total variation distance between two observation distributions is bounded by $\tilde{O}\left(\sqrt{n/N}\right)$. On the other hand, for any $l \in [\min(H, 2M-1)]$ and any multi-index $(i_1, i_2, ..., i_l) \in [n]^l$, we have
\begin{align*}
     C &\sqrt{\frac{n\log (nHN) + \log(1/\eta)}{N}} \ge \sum_{\bm{b}, \bm{i}} \left|\PP_{\theta_{N}} (\bm{b}, \bm{i}) - \PP_{\theta^*} (\bm{b}, \bm{i}) \right| \\
     &\ge \sum_{\bm{b}} \left|\PP_{\theta_{N}} - \PP_{\theta^*} \right| (b_{1:l} | a_{1:l} = (a_{i_1}, a_{i_2}, ..., a_{i_l})) \cdot \PP(a_{1:d} = (a_{i_1}, a_{i_2}, ..., a_{i_l})) \\
     &= \sum_{b_{1:l}} \left|\PP_{\theta_{N}} - \PP_{\theta^*} \right| (b_{1:l} | a_{1:l} = (a_{i_1}, a_{i_2}, ..., a_{i_l})) \cdot n^{-l} \ge n^{-l} \cdot \|\hat{T}_l - T_l\|_{\infty}.
\end{align*}
This implies that 
\begin{align*}
    \|\hat{T}_l - T_l\|_{\infty} \le C n^l \cdot \sqrt{\frac{n \log (HNn) + \log(1/\eta)}{N}}.
\end{align*}
Applying this to all $l \in [\min(H, 2M-1)]$, we get Lemma \ref{lemma:likelihood_to_moments}.

\subsection{Proof of Lemma \ref{lemma:likelihood_separation}}
From Lemma \ref{lemma:tv_observation_bound}, without loss of generality, we assume that the total variation distance between observations from $\theta_N$ and $\theta^*$ is bounded by $\epsilon w_{\mathrm{min}}/2$ since $N = \tilde{O}(w_{\mathrm{min}}^{-2} n / \epsilon^2)$:
\begin{align*}
    \sum_{\bm{b}, \bm{i}} \left|\PP_{\theta_{N}} (\bm{b}, \bm{i}) - \PP_{\theta^*} (\bm{b}, \bm{i}) \right| \le \epsilon w_{\mathrm{min}}/2.
\end{align*}
We will verify that for every $m \in [M]$, there exists $m' \in [M]$ such that $\| \hat{\nu}_m - \nu_{m'}^* \|_{\infty} \le 2\epsilon$ and $|\hat{w}_m - w_{m'}^*| \le \epsilon w_{\mathrm{min}}$. 

First note that for all $m \neq m' \in [M]$, we have $\|\nu_m^* - \nu_{m'}^*\|_{\infty} \ge \lambda := \gamma / (Z\sqrt{2M})$. If it is not, then the model does not satisfy Assumption \ref{assumption:separation}. We start with the following lemma:
\begin{lemma}
    \label{lemma:tv_event_bound_separation}
    Suppose that there exists $m \in [M]$ such that $\| \nu_m^* - \hat{\nu}_{m'} \|_{\infty} \ge \lambda / 4$ for all $m' \in [M]$. For every $j \in [n]$, define $E_{m,j}$ an event defined as:
    \begin{align*}
        E_{m,j} := \left\{ \left| \frac{\sum_{t=1}^H \indic{i_t = j} b_t}{\sum_{t=1}^H \indic{i_t = j}} - \nu_{m}^*(j) \right| < \lambda / 8 \right\},
    \end{align*}
    and let $E_m = \cap_{j=1}^n E_{m,j}$. Then
    \begin{align}
        \PP_{\theta^*}(E_m) \ge w_m / 2, \ \PP_{\theta_N} (E_m) \le \epsilon w_{\mathrm{min}}.
    \end{align}
\end{lemma}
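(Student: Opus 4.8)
The plan is to analyze the single-episode event $E_m$ by conditioning on the realized latent context. Within an episode the indices $i_1,\dots,i_H$ are drawn i.i.d.\ uniformly from $[n]$, and conditioned on the latent context equaling $c$, each observation $b_t$ with $i_t = j$ is an independent $\mathrm{Bernoulli}(\nu_c(j))$ draw (where $\nu_c$ is the relevant parameter vector of whichever model we consider). For each $j\in[n]$ set $N_j := \sum_{t=1}^H \indic{i_t = j}$, the number of plays of core action $j$; then $N_j$ is $\mathrm{Binomial}(H,1/n)$ with mean $H/n$, and on $\{N_j>0\}$ the quantity inside $E_{m,j}$ is an empirical average of $N_j$ i.i.d.\ Bernoulli variables. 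Both claims will follow by separating a ``sufficiently many plays'' event from an ``empirical mean concentrates'' event and invoking the horizon lower bound $H \ge C_1 nMZ^2\log(1/(\epsilon w_{\mathrm{min}}))/\gamma^2 = \Omega\!\left( (n/\lambda^2)\log(1/(\epsilon w_{\mathrm{min}})) \right)$, which is exactly the budget needed to drive the relevant failure probabilities to the required levels.

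For the lower bound, I would first condition on the true context equaling $m$, an event of probability $w_m^*$. A multiplicative Chernoff bound with a union bound over coordinates gives $N_j \ge H/(2n)$ for all $j$ simultaneously, with probability at least $1 - n\exp(-H/(8n))$. On this event Hoeffding's inequality yields $\PP(E_{m,j}^c \mid \mathrm{context}=m,(N_j)_j) \le 2\exp(-2(\lambda/8)^2 N_j) \le 2\exp(-\lambda^2 H/(64n))$ for each $j$, so a further union bound over $j\in[n]$ together with the count event shows $\PP(E_m^c \mid \mathrm{context}=m) \le n\exp(-H/(8n)) + 2n\exp(-\lambda^2 H/(64n)) \le 1/2$ once $C_1$ is large enough. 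Hence $\PP_{\theta^*}(E_m) \ge w_m^* \cdot \tfrac12 = w_m/2$.

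For the upper bound I would decompose over the estimated context, $\PP_{\theta_N}(E_m) = \sum_{m'=1}^M \hat{w}_{m'}\,\PP_{\theta_N}(E_m\mid \mathrm{context}=m')$. The hypothesis $\|\nu_m^* - \hat{\nu}_{m'}\|_\infty \ge \lambda/4$ supplies, for each $m'$, a coordinate $j(m')$ with $|\nu_m^*(j(m')) - \hat{\nu}_{m'}(j(m'))| \ge \lambda/4$. Under context $m'$ the empirical rate at $j(m')$ concentrates about $\hat{\nu}_{m'}(j(m'))$, so by the triangle inequality the event $E_{m,j(m')}$ — which demands that rate lie within $\lambda/8$ of $\nu_m^*(j(m'))$ — forces a deviation of at least $\lambda/4-\lambda/8 = \lambda/8$ from its own mean. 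Bounding this by the small-count event plus Hoeffding gives $\PP_{\theta_N}(E_m\mid \mathrm{context}=m') \le \exp(-H/(8n)) + 2\exp(-\lambda^2 H/(64n))$, which the horizon condition makes at most $\epsilon w_{\mathrm{min}}$; summing against $\hat{w}_{m'}$ (which sum to one) yields $\PP_{\theta_N}(E_m)\le \epsilon w_{\mathrm{min}}$.

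The hard part, such as it is, will be the constant bookkeeping: the single horizon condition $H \gtrsim (n/\lambda^2)\log(1/(\epsilon w_{\mathrm{min}}))$ must simultaneously push the first failure probability below $1/2$ and the second all the way down to $\epsilon w_{\mathrm{min}}$. The upper bound is the binding constraint — it is precisely where the $\log(1/(\epsilon w_{\mathrm{min}}))$ factor in the assumed $H$ is consumed — while the lower bound needs only the $\log n$-sized budget coming from the union over coordinates, which is dominated in the relevant regime (and otherwise absorbed into $C_1$). A minor care point is that $E_{m,j}$ is well-defined only when $N_j>0$; this is harmless, since in the lower bound we operate on the event that every count is large, and in the upper bound the small-count event is already folded into the failure probability.
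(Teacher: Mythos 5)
Your proposal is correct and follows essentially the same route as the paper's proof: condition on the latent context under each model, combine a binomial count bound with Hoeffding's inequality per coordinate, and for the upper bound use the separation hypothesis so that concentration of the empirical rate around $\hat{\nu}_{m'}(j(m'))$ forces a deviation of at least $\lambda/4 - \lambda/8 = \lambda/8$ from $\nu_m^*(j(m'))$, killing $E_m$. The only differences are cosmetic — you use multiplicative Chernoff where the paper uses Bernstein for the counts, and you phrase the exclusion step via the triangle inequality at a distinguished coordinate rather than via the paper's auxiliary events $E_{m'}$ and the implication $E_{m'} \subseteq E_m^c$, which is the same argument in contrapositive form.
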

\begin{proof}
    Let us first check that $\PP_{\theta^*}(E) \ge w_m / 2$. Let $n_j = \sum_{t=1}^H \indic{i_t = j}$. Since, 
    $$\PP_{\theta^*}(E) \ge w_m \cdot \PP_{\theta^*}(E | m_0 = m) = w_m \cdot \cap_{j=1}^n \PP_{\theta^*} (E_j | m_0 = m),$$ 
    it suffices to show that  
    \begin{align*}
        \PP_{\theta^*}(E_j | m_0 = m) &\ge \PP_{\theta^*}\left(\left| \sum_{t=1}^H \indic{i_t = j} b_t - n_j \nu_m^*(j)\right| \le n_j \lambda / 8 \ \Bigg| \ m_0 = m, n_j \ge H/(2n)\right) \PP_{\theta^*} \left(n_j \ge H/(2n) \right) \\
        &\ge \left(1 - \exp(- \lambda^2 \cdot H/(64n) )\right) \left(1 - \exp \left( \frac{-(1/2) (H/2n)^2}{H (1/n) (1-1/n) + (1/3) (H/2n)} \right) \right).
    \end{align*}
    where in the last inequality we applied Hoeffeding's concentration inequality for first term, and then used Bernstein's inequality. Plugging $H > C \cdot n M Z^2 \log (1/(\epsilon w_{\mathrm{min}})) / \gamma^2$ for some sufficiently large constant $C > 0$, we get 
    \begin{align*}
        \PP_{\theta^*}(E_j | m_0 = m) &\ge \left(1 - \exp(-2 (\gamma^2 / (128 Z^2 M)) \cdot H/(2n)) \right) \cdot \left(1 - \exp \left( -H/(16n) \right) \right) \ge 1 - (\epsilon w_{\mathrm{min}})^2.
    \end{align*}
    By union bound, we have $\PP_{\theta^*} (E) \ge w_m \cdot (1 - n \epsilon^2 w_{\mathrm{min}}^2) \ge w_m/2$.
    
    Now we check that $\PP_{\theta_N}(E) \le \epsilon w_{\mathrm{min}}$. Starting from
    \begin{align*}
        \PP_{\theta_N}(E) = \sum_{m'=1}^M w_{m'} \PP_{\theta_N} (E | m_0 = m'), 
    \end{align*}
    it suffices to show that $\PP_{\theta_N} (E | m_0 = m') \le \epsilon w_{\mathrm{min}}$ for all $m' \in [M]$. Let us fix $m'$ and define
    \begin{align*}
        E_{m',j} := \left\{ \left| \frac{\sum_{t=1}^H \indic{i_t = j} b_t}{\sum_{t=1}^H \indic{i_t = j}} - \hat{\nu}_{m}(j) \right| < \lambda / 8 \right\},
    \end{align*}
    and let $E_{m'} = \cap_{j=1}^n E_{m',j}$. Following the same argument for $\PP_{\theta^*}(E_j|m_0 = m)$, we can show that 
    \begin{align*}
        \PP_{\theta_N}(E_{m'} | m_0 = m') &\ge 1 - (\epsilon w_{\mathrm{min}})^2.
    \end{align*}
    If this happens, then it implies $E^c$ since $\|\nu_{m}^* - \hat{\nu}_{m'}\|_{\infty} \ge \lambda / 2$. Thus, 
    \begin{align*}
        \sum_{m'=1}^M \hat{w}_{m'} \PP_{\theta_N}(E_{m'} | m_0 = m') &\le \sum_{m'=1}^M \hat{w}_{m'} \PP_{\theta_N}(E^c | m_0 = m') = \PP_{\theta_N} (E^c).
    \end{align*}
    In other words, we have
    \begin{align*}
        \PP_{\theta_N} (E) \le 1 - \sum_{m'=1}^M \hat{w}_{m'} \PP_{\theta_N}(E_{m'} | m_0 = m') \le \epsilon^2 w_{\mathrm{min}}^2.
    \end{align*}
\end{proof}

The conclusion of Lemma \ref{lemma:tv_event_bound_separation} contradict that $| \PP_{\theta^*}(E_m) - \PP_{\theta_N} (E_m) | \le \epsilon w_{\mathrm{min}}/2$ due to the total variation distance bound from Lemma \ref{lemma:tv_observation_bound}. Thus, we ensure that for all $m \in [M]$, there exists $m' \in [M]$ such that $\| \nu_m^* - \hat{\nu}_{m'} \|_{\infty} \le \lambda / 4$. 

Now without loss of generality, we can ignore the permutation invariance of models and assume that 
\begin{align*}
    \|\nu_m^* - \hat{\nu}_m\|_{\infty} \le \lambda / 4, \qquad \forall m \in [M].
\end{align*}
Then we now show that for all $m \in [M]$, it holds that
\begin{align*}
    |w_m^* - \hat{w}_m^*| \le \epsilon w_{\mathrm{min}}, \quad \|\nu_m^* - \hat{\nu}_m^*\|_{\infty} \le \epsilon.
\end{align*}
For every $m$, let us define $E_m$ similarly to Lemma \ref{lemma:tv_event_bound_separation}:
\begin{align*}
    E_{m,j} := \left\{ \left| \frac{\sum_{t=1}^H \indic{i_t = j} b_t}{\sum_{t=1}^H \indic{i_t = j}} - \nu_{m}^*(j) \right| < \lambda / 2 \right\},
\end{align*}
and $E_m := \cap_{j=1}^n E_{m,j}$. Then we proceed as the following:
\begin{align*}
    \sum_{\bm{b}, \bm{i}} \left|\PP_{\theta_{N}} (\bm{b}, \bm{i}) - \PP_{\theta^*} (\bm{b}, \bm{i}) \right| &\ge \sum_{(\bm{i}, \bm{b}) \in E_m} \left|\PP_{\theta_{N}} (\bm{b}, \bm{i}) - \PP_{\theta^*} (\bm{b}, \bm{i}) \right| \\
    &= \sum_{(\bm{i}, \bm{b}) \in E_m} \left|\sum_{m'=1}^M \hat{w}_{m'}\PP_{\theta_{N}} (\bm{b}, \bm{i} | m_0 = m') - \sum_{m'=1}^M w_{m'} \PP_{\theta^*} (\bm{b}, \bm{i} | m_0 = m') \right| \\
    &\ge \sum_{\bm{i}} \sum_{\bm{b}} \left|\hat{w}_{m}\PP_{\theta_{N}} (\bm{b}, \bm{i} | m_0 = m) - w_{m} \PP_{\theta^*} (\bm{b}, \bm{i} | m_0 = m) \right| \\
    &\ \ - \left( \hat{w}_{m}\PP_{\theta_{N}} (E_m^c | m_0 = m) + w_{m} \PP_{\theta^*} (E_m^c | m_0 = m) \right) \\
    &\ \ - \sum_{m'\neq m} \left( \hat{w}_{m'}\PP_{\theta_{N}} (E_m | m_0 = m') + w_{m'} \PP_{\theta^*} (E_m | m_0 = m') \right).
\end{align*}
Recall that for any $m' \neq m$, the event $E_m$ implies $E_{m'}^c$ since every $\nu_m^*$ and $\nu_{m'}^*$ are separated by at least $\lambda = \gamma / (Z\sqrt{2M})$, which implies that $\nu_m^*$ and $\hat{\nu}_{m'}$ are separated by $(3/4) \lambda$. Following the same argument as in the proof for Lemma \ref{lemma:tv_event_bound_separation},
\begin{align*}
    \PP_{\theta_N} (E_m | m_0 = m') &\le \PP_{\theta_N} (E_{m'}^c | m_0 = m') \le \epsilon^2 w_{\mathrm{min}}^2, \\
    \PP_{\theta^*} (E_m | m_0 = m') &\le \PP_{\theta^*} (E_{m'}^c | m_0 = m') \le \epsilon^2 w_{\mathrm{min}}^2.
\end{align*}
Similarly, we can also check that
\begin{align*}
    \PP_{\theta_N} (E_m^c | m_0 = m) &\le \epsilon^2 w_{\mathrm{min}}^2, \\
    \PP_{\theta^*} (E_m^c | m_0 = m) &\le \epsilon^2 w_{\mathrm{min}}^2.
\end{align*}
Thus, we have
\begin{align*}
    \sum_{\bm{i}, \bm{b}} \left|\PP_{\theta_{N}} (\bm{b}, \bm{i}) - \PP_{\theta^*} (\bm{b}, \bm{i}) \right| \ge \sum_{\bm{b}, \bm{i}} \left|\hat{w}_{m}\PP_{\theta_{N}} (\bm{b}, \bm{i} | m_0 = m) - w_{m} \PP_{\theta^*} (\bm{b}, \bm{i} | m_0 = m) \right| - 2 \epsilon^2 w_{\mathrm{min}}^2. 
\end{align*}
We remain to lower bound $\sum_{\bm{b}, \bm{i}} \left|\hat{w}_{m}\PP_{\theta_{N}} (\bm{b}, \bm{i} | m_0 = m) - w_{m}^* \PP_{\theta^*} (\bm{b}, \bm{i} | m_0 = m) \right|$. First note that
\begin{align*}
    \epsilon w_{\mathrm{min}} / 2 \ge \sum_{\bm{b}, \bm{i}} \left|\hat{w}_{m}\PP_{\theta_{N}} (\bm{b}, \bm{i} | m_0 = m) - w_{m}^* \PP_{\theta^*} (\bm{b}, \bm{i} | m_0 = m) \right| \ge |\hat{w}_m - w_m^*|.
\end{align*}
Thus we have $|\hat{w}_m - w_m^*| \le \epsilon w_{\mathrm{min}}$. Now given this, we can proceed as 
\begin{align*}
    \epsilon w_{\mathrm{min}} /2 &\ge \sum_{\bm{b}, \bm{i}} \left|\hat{w}_{m}\PP_{\theta_{N}} (\bm{b}, \bm{i} | m_0 = m) - w_{m}^* \PP_{\theta^*} (\bm{b}, \bm{i} | m_0 = m) \right| \\
    &\ge w_m^* \cdot \sum_{\bm{b}, \bm{i}} \left|\PP_{\theta_{N}} (\bm{b}, \bm{i} | m_0 = m) - \PP_{\theta^*} (\bm{b}, \bm{i} | m_0 = m) \right| - |\hat{w}_m - w_m^*|.
\end{align*}
Now let $i_m := arg\max_{j \in [n]} |\nu_m^* (j) - \hat{\nu}_m(j)|$. Define an event $b_t$ being $1$ at the first time $a_{i_m}$ is played:
\begin{align*}
    F := \{b_t = 1, t = arg\min_{t' \in [H]} i_{t'} = i_m\}.
\end{align*}
Note that since $H \gg n\log (1/(\epsilon w_{\mathrm{min}}))$, $i_m$ is played at least once with probability at least $1 - (\epsilon w_{\mathrm{min}})^2$. Then we can lower bound the total variation distance as 
\begin{align*}
    \sum_{\bm{b}, \bm{i}} \left|\PP_{\theta_{N}} (\bm{b}, \bm{i} | m_0 = m) - \PP_{\theta^*} (\bm{b}, \bm{i} | m_0 = m) \right| &\ge |\PP_{\theta_{N}} (F | m_0 = m) - \PP_{\theta^*} (F | m_0 = m)| \\
    &\ge |\nu_m^* (i_m) - \hat{\nu}_m (i_m)| - 2 (\epsilon w_{\mathrm{min}})^2.
\end{align*}
Hence we can conclude that
\begin{align*}
    \epsilon w_{\mathrm{min}}/2 \ge w_m |\nu_m^* (i_m) - \hat{\nu}_m (i_m)| - 2 (\epsilon w_{\mathrm{min}})^2 - \epsilon w_{\mathrm{min}},
\end{align*}
which implies $\|\nu_m^* - \hat{\nu}_m\|_{\infty} \le 2\epsilon$. 

Note that the Wasserstein distance can thus be bounded as
\begin{align*}
    W(\gamma^*, \hat{\gamma}) \le \sum_{m=1}^M |w_m^* - \hat{w}_m| + \sum_{m=1}^M w_m^* \|\nu_m^* - \hat{\nu}_m\|_{\infty} \le \epsilon M w_{\mathrm{min}} + 2\epsilon \le 3\epsilon.
\end{align*}
\begin{remark}
    In our guarantee, we assumed that $H \gg \log(1/\epsilon)$, {\it i.e.,} $H$ should be increased logarithmically with the final accuracy. As one might imagine, this is not the optimal condition for separations between individual models. A more delicate and technically involved analysis might reveal that the sub-optimal dependency on $\log(1/\epsilon)$ can be dropped with the EM algorithm as in learning Gaussian mixture models ({\it e.g.,} \cite{kwon2020algorithm}). Since it is technically much more complicated, we leave the task of verifying more tight separation conditions as future work.  
\end{remark}

\subsection{Proof of Lemma \ref{lemma:tv_observation_bound}}
This is a rather standard consequence of MLE for parameterized distributions. Let us define well-conditioned parameters $\Theta' \subseteq \Theta$ defined as follows: 
\begin{align*}
    \Theta' = \Big\{\theta = \{(w_m, \nu_m)\}_{m=1}^M| &\forall j \in [n], m \in [M] \text{ s.t. } w_m, \ \nu_m(j) \in \mathbb{R}_+, \\
    &\sum_{m=1}^M w_m = 1, \\
    &\epsilon \le w_m, \ \epsilon \le \nu_m(j) \le 1 - \epsilon, \quad \forall m \in [M], j \in [n] \Big\}. \label{eq:define_theta}
\end{align*}
Let $\Theta_\epsilon$ be $\epsilon^2$-covering of $\Theta'$. Since there are $n + M$ free parameters, log-cardinality of $\Theta_\epsilon$ is at most $O(1/\epsilon)^{2(n+M)}$. Note that $\Theta_\epsilon$ is a (not necessarily minimal) $\epsilon$-cover for $\Theta$ as well.

To simplify the notation, we often use $X := (\bm{b}, \bm{i})$ (and $X^k = (\bm{b}^k, \bm{i}^k)$) to replace a sample trajectory. Our goal is to bound
\begin{align*}
    TV(\theta_N, \theta^*) := \sum_{X = (\bm{b}, \bm{i}): \bm{b} \in \{0,1\}^H, \bm{i} \in [n]^H} &|\PP_{\theta_N} (X) - \PP_{\theta^*} (X)|,
\end{align*}
where $TV(\theta_1, \theta_2)$ is a total variation distance between $\PP_{\theta_1}$ and $\PP_{\theta_2}$ for any $\theta_1, \theta_2 \in \Theta$.

Let $\overline{\theta} = \{(\overline{w}_m, \overline{\nu}_m\}_{m=1}^M \in \Theta_\epsilon$ such that
\begin{align*}
    \frac{w_m^*}{\overline{w}_m} &\le 1 + 2 \epsilon, & \forall m \in [M], \\
    \frac{\nu_m^*(i)}{\overline{\nu}_m(i)} &\le 1 + 2 \epsilon, & \forall m \in [M], i \in [n], \\
    \frac{1 - \nu_m^*(i)}{1 - \overline{\nu}_m(i)} &\le 1 + 2 \epsilon, & \forall m \in [M], i \in [n].
\end{align*}
Such $\overline{\theta}$ is guaranteed to exist in $\Theta_\epsilon$ by construction. A simple algebra shows that for any trajectory $X$, we have
\begin{align*}
    \frac{\PP_{\theta^*}(X)}{\PP_{\overline{\theta}}(X)} \le (1+2\epsilon)^H.
\end{align*}
As long as $\epsilon < 1/H^2$, this is bounded by constant.

Now, fix any $\theta \in \Theta_\epsilon$ and let $l(X) := \frac{1}{2} \log \left( \frac{\PP_{\theta} (X)}{\PP_{\overline{\theta}} (X)} \right)$. Then using Chernoff's method, we get
\begin{align*}
    \PP_{\theta^*} &\left( \sum_{k=1}^N l(X^k) - \log\left( \Exs_{\theta^*} \left[ \exp\left(\sum_{k=1}^N l(X^k) \right) \right] \right) > \lambda \right) \\
    &= \PP_{\theta^*} \left( \exp\left( \sum_{k=1}^N l(X^k) - \log \left( \Exs_{\theta^*} \left[ \sum_{k=1}^N \exp(l(X^k)) \right] \right)\right) > \exp(\lambda) \right) \\
    &\le \Exs_{\theta^*} \left[\exp \left(\sum_{k=1}^N l(X^k) - \log\left( \Exs_{\theta^*} \left[ \exp \left(\sum_{k=1}^N l(X^k) \right) \right] \right) \right)\right] \cdot \exp(-\lambda) \\
    &= \exp(-\lambda),
\end{align*}
where we used $\Exs[\exp(S - \log( \Exs[ \exp(S) ] ))] = \frac{\Exs[\exp(S)]}{\Exs[ \exp(S) ]} = 1$ for any $S$ and distribution. Taking union bound over $\Theta_\epsilon$, we can conclude that with probability at least $1 - \eta$, 
\begin{align*}
    - \log\left( \Exs_{\theta^*} \left[ \Pi_{k=1}^N \sqrt{\frac{\PP_{\theta} (X^k)}{\PP_{\overline{\theta}} (X^k)}} \right] \right) &\le -\frac{1}{2} \sum_{k=1}^N \log \left( \frac{\PP_{\theta} (X^k)}{\PP_{\overline{\theta}} (X^k)} \right) + \log(|\Theta_\epsilon| / \eta).
\end{align*}
Now, since $X^1, ..., X^N$ are independent, we have
\begin{align*}
    - \log\left( \Exs_{\theta^*} \left[ \Pi_{k=1}^N \sqrt{\frac{\PP_{\theta} (X^k)}{\PP_{\overline{\theta}} (X^k)}} \right] \right) &= - \sum_{k=1}^N \log\left( \Exs_{\theta^*} \left[ \sqrt{\frac{\PP_{\theta} (X)}{\PP_{\overline{\theta}} (X)}} \right] \right) \\
    &= - \sum_{k=1}^N \log\left( \Exs_{\theta^*} \left[ \sqrt{\frac{\PP_{\theta} (X)}{\PP_{\theta^*} (X)}} \cdot \sqrt{\frac{\PP_{\theta^*} (X)}{\PP_{\overline{\theta}} (X)}} \right] \right) \\
    &\ge - \sum_{k=1}^N \log\left( (1+2\epsilon)^{H/2} \cdot \Exs_{\theta^*} \left[ \sqrt{\frac{\PP_{\theta} (X)}{\PP_{\theta^*} (X)}} \right] \right) \\
    &= - \frac{HN}{2} \log(1+2\epsilon) - \sum_{k=1}^N \log \left( \Exs_{\theta^*} \left[ \sqrt{\frac{\PP_{\theta} (X)}{\PP_{\theta^*} (X)}} \right] \right). 
\end{align*}
Using $-\ln(x) \ge 1 - x$, we get
\begin{align*}
    - \sum_{k=1}^N \log\left( \Exs_{\theta^*} \left[ \sqrt{\frac{\PP_{\theta} (X)}{\PP_{\theta^*} (X)}} \right] \right) &\ge \sum_{k=1}^N \left( 1 - \Exs_{\theta^*} \left[ \sqrt{\frac{\PP_{\theta} (X)}{\PP_{\theta^*} (X)}} \right] \right) \\
    &= \sum_{k=1}^N \left( 1 - \sum_{X} \sqrt{\PP_{\theta^*} (X) \PP_{\theta} (X)} \right) = N \mathcal{H}^2(\theta^*, \theta),
\end{align*}
where $\mathcal{H} (\theta_1, \theta_2)$ is a Hellinger distance between $\PP_{\theta_1}$ and $\PP_{\theta_2}$. Also note that $TV(\theta^*, \theta) \le \mathcal{H} (\theta^*, \theta)$. Collecting all, we now can say that
\begin{align*}
    N \cdot TV^2 (\theta^*, \theta) \le -\frac{1}{2} \sum_{k=1}^N \log\left( \frac{\PP_{\theta}(X^k)}{\PP_{\overline{\theta}} (X^k)} \right) + \log(\Theta_\epsilon / \eta) + \frac{HN}{2} \log(1+2\epsilon),
\end{align*}
with probability at least $1 - \eta$. Finally, let $\overline{\theta}_N \in \Theta_\epsilon$ be the one close to $\theta_N$ such that
\begin{align*}
    \frac{\PP_{\theta_N} (X)}{\PP_{\overline{\theta}_N}(X)} \le (1+2\epsilon)^H,
\end{align*}
similarly to when defining $\overline{\theta}$. Then, 
\begin{align*}
    N \cdot TV^2 (\theta^*, \overline{\theta}_N) &\le -\frac{1}{2} \sum_{k=1}^N \log\left( \frac{\PP_{\theta_N}(X^k)}{\PP_{\overline{\theta}} (X^k)} \right) +\frac{1}{2} \sum_{k=1}^N \log\left( \frac{\PP_{\theta_N}(X^k)}{\PP_{\overline{\theta}_N} (X^k)} \right) + \log(\Theta_\epsilon / \eta) + \frac{HN}{2} \log(1+2\epsilon) \\
    &\le \log(\Theta_\epsilon / \eta) + HN \log(1+2\epsilon),
\end{align*}
where we used the fact that $\theta_N$ is the maximum likelihood estimator. With a proper scaling of $\epsilon \ll 1/(nHN)^4$, we get
\begin{align*}
    TV(\theta^*, \overline{\theta}_N) &\le O \left(\sqrt{\frac{(n+M) \log(nHN) + \log(1/\eta) }{N}} \right).
\end{align*}

Finally, it is not hard to show that $TV(\theta_N, \overline{\theta}_N) \le 2 \epsilon H \le 1/N$. We can conclude that 
\begin{align*}
    TV(\theta^*, \theta_N) \le C \cdot \sqrt{\frac{n \log(nHN) + \log(1/\eta)}{N}},
\end{align*}
for some sufficiently large constant $C > 0$.

\end{appendices}

\end{document}